\definecolor{bluegray1}{RGB}{46, 64, 87}     
\definecolor{bluegray2}{RGB}{71, 95, 127}    
\definecolor{bluegray3}{RGB}{107, 138, 179}  
\definecolor{bluegray4}{RGB}{156, 178, 203}  
\definecolor{bluegray5}{RGB}{203, 213, 225}  
\definecolor{redgray1}{RGB}{92, 61, 61}      
\definecolor{redgray2}{RGB}{133, 88, 88}     
\definecolor{redgray3}{RGB}{176, 123, 123}   
\definecolor{redgray4}{RGB}{207, 164, 164}   
\definecolor{redgray5}{RGB}{230, 204, 204}   
\definecolor{gray1}{RGB}{64, 64, 64}         
\definecolor{gray2}{RGB}{112, 112, 112}      
\definecolor{gray3}{RGB}{160, 160, 160}      
\definecolor{gray4}{RGB}{204, 204, 204}      
\definecolor{gray5}{RGB}{240, 240, 240}      
\definecolor{junglegreen}{RGB}{71, 95, 127}   
\definecolor{bluegray}{RGB}{107, 138, 179}    
\definecolor{greengray}{RGB}{160, 160, 160}   
\definecolor{lilac}{RGB}{176, 123, 123}       
\newtcolorbox{note}[1]{breakable, enhanced, sharp corners, frame hidden, #1}
\newtcolorbox{example}[0]{breakable, enhanced, sharp corners, frame hidden, colback=green!5}
\colorlet{verylightgray}{gray5}
\tikzstyle{tensor}=[circle, draw=redgray3,fill=redgray5,thick,minimum width=0.7cm,minimum height=0.7cm]
\tikzstyle{block}=[rectangle, draw=gray2, fill=gray5, thick, inner sep=5pt]
\newcolumntype{P}[1]{>{\RaggedRight\hspace{0pt}}p{#1}}
\newcolumntype{X}[1]{>{\RaggedRight\hspace*{0pt}}p{#1}}
\colorlet{linecol}{black!75}
\newcommand{\our}{Phalanx\xspace}
\DeclareMathAlphabet{\mathbbm}{U}{bbold}{m}{n}
\newtheorem{lemma}{Lemma}
\newtheorem{proposition}{Proposition}[section]
\newtheorem{theorem}{Theorem}[section]
\newtheorem{corollary}{Corollary}[section]
\DeclareMathOperator{\diag}{diag}
\DeclareMathOperator{\tril}{tril}
\newcommand{\NN}{\mathbb{N}}
\newcommand{\RR}{\mathbb{R}}
\newcommand{\bbzero}{\mathbbm{0}}
\title{Sliding Window Recurrences for Sequence Models}
\author{\name Dragos Secrieru$^*$ \email dragos@radicalnumerics.ai \\
      \addr Mila, Universit\'e de Montr\'eal\\
      RIKEN AIP \\
      Radical Numerics Inc.
      \AND
      \name Garyk Brixi$^*$ \email gbrixi@stanford.edu \\
      \addr Radical Numerics Inc. \\
      Stanford University
      \AND
      \name Yoshua Bengio$^\dagger$ \email yoshua.bengio@mila.quebec \\
      \addr Mila, Universit\'e de Montr\'eal
      \AND
      \name Taiji Suzuki$^\dagger$ \email taiji@mist.i.u-tokyo.ac.jp \\
      \addr University of Tokyo \\
      RIKEN AIP
      \AND
      \name Michael Poli$^\dagger$ \email michael@radicalnumerics.ai \\
      \addr Radical Numerics Inc. \\
      Stanford University
      \AND
      \name Stefano Massaroli$^{*}$ \email stefano@radicalnumerics.ai \\
      \addr Radical Numerics Inc. \\ RIKEN AIP} 
\begin{document}

\maketitle

\begin{abstract}
Multi-hybrid architectures are poised to take over language modeling due to better quality and performance. We introduce a hierarchical decomposition framework for linear recurrences that allows us to develop algorithms aligned with GPU memory hierarchies, yielding Sliding Window Recurrences. We focus specifically on truncating recurrences to hardware-aligned windows which are naturally jagged, limiting costly inter-warp communication. Using SWR, we develop \our{} layers that serve as drop-in replacements for windowed attention or linear recurrences. In 1B parameter multi-hybrid models, \our{} achieves over 10-40\% speedup across 4K to 32K context length over optimized Transformers while matching perplexity.
\end{abstract}

\section{Introduction}\label{sec:introduction}

\input{catchy_intro_figure.tex}

\begingroup
\renewcommand\thefootnote{}
\footnotetext{\hspace{-3mm}$^*$Equal contribution. $^\dagger$Equal senior authorship.}%
\addtocounter{footnote}{-1}
\endgroup

Several sub-quadratic token-mixing primitives deliver high quality in language modeling when combined with Attention, in so-called hybrid models. Some methods achieve efficiency by limiting token-mixing to nearby tokens, such as \emph{sliding-window attention} (SWA)\citep{beltagy2020longformer,agarwal2025gpt} and gated short convolutions \citep{ku2025systems,thomas2024star,chandrasegaran2025exploring}. Meanwhile, others are designed to capture global token interactions such as linear recurrences \citep{yang2023gated,gu2023mamba,yang2024gated, arora2024simple,zhang2024lolcats} or gated long convolutions~\citep{poli2023hyena,massaroli2023laughing}. The resulting \emph{transfer operator} (the surrogate attention matrix) is typically dense but highly structured with decay so that token interactions have exponentially diminishing effect with longer distance, while accruing greater computational cost due to data movement across the GPU memory hierarchy. 

In light of the strong quality of hybrids and the central role of data movement on modern GPUs, we ask whether it is possible to retain modeling quality while making \emph{hardware-aligned data locality} a key operator design axis.
Our answer are windowed recurrences, realized by algorithms that map well onto the GPU memory hierarchy.
Concretely, we contribute:

\paragraph{Sliding Window Recurrences}
We introduce \emph{sliding window recurrences} (SWRs) in Section~\ref{sec:sliding_window_recurrences}, a family of truncated sequence mixer primitives that makes hardware-aligned data locality an explicit design axis. In practical SWRs the \emph{window} is \emph{jagged} rather than uniform (Figure~\ref{fig:catchy_intro}, right) to be efficiently realized on GPUs. 

\paragraph{Kernels and numerics.} 
We implement SWRs with a block-two-pass (B2P) algorithm and kernel that avoids thread-block carry chains (Section~\ref{sec:sliding_window_recurrences:b2p}). Each warp fuses its local computation into high-throughput GEMMs, and a \emph{single device-wide parallel rank-1 update} broadcasts the residual needed by the next warp. In the causal setting, communication happens only with the immediately preceding warp; the top-level transfer becomes diagonal, achieving logical depth~1 with \emph{one} thread-block synchronization. We cast scan algorithms as \emph{matrix factorizations} of the transfer operator, exposing flat vs. hierarchical decompositions that map one-to-one onto GPU hierarchy. We adopt an aggressive truncation with block size of 16 to match warp size on GPU, maximizing efficiency while still retaining downstream language modeling performance. This band is \textbf{8 times} shorter than modern variants of SWA \citep{agarwal2025gpt}.

\paragraph{\our{} layer for hybrid architectures.} 
We introduce \our{}, a new layer for short-range token mixing using SWRs in Section~\ref{sec:layer}. \our{}--Attention hybrid models preserve the quality of SWA/Attention hybrids while training \textbf{28\% faster at 8K} context length. The \our{} hybrid is also the fastest even at short lengths, improving end-to-end training throughput by \textbf{10\%} over pure Attention on Hopper GPUs.

\section{Related Work}

\paragraph{Local attention and convolutional mixers.}
Local attention restricts receptive fields to improve arithmetic intensity and locality. Examples include sliding window attention \citep{beltagy2020longformer,agarwal2025gpt}. Complementary convolutional approaches (including implicit convolutions) likewise emphasize local computation with strong scaling \citep{romero2021ckconv,poli2023hyena,ku2025systems}. Our work shares the emphasis on locality but differs by \emph{explicitly band-limiting inter-tile transfer} via a recurrence matched to the GPU hierarchy.

\paragraph{Linear recurrences, SSMs, and semiseparable structure.}
Recurrence-based layers such as GLA, Mamba, and Gated DeltaNet \citep{yang2023gated,gu2023mamba,yang2024gated} realize transfer operators that are \emph{sequentially semiseparable}, yielding exponentially decaying off-diagonal bands. This both motivates our finite-precision horizon and informs our band-limited design. Connections between SSMs and attention via structured matrices further clarify algorithmic trade-offs and hardware mapping \citep{dao2024transformers}.

\paragraph{Hybrid architectures, and hardware-aware design.}
Hybrid architectures (e.g., mixing attention with local operators) achieve the best scaling rates \citep{poli2024mechanistic,wang2025systematic}. One common option is sliding window attention paired with global attention \citep{brown2020language,agarwal2025gpt} or linear attention \citep{arora2024simple}. Modern approaches to hybridization employ a \textit{multi-hybrid} stack of operators with multiple window sizes \citep{ku2025systems}. \our{} fits this direction as a local specialist that delegates global routing to attention.

\paragraph{Parallel scan and GPU implementations.}
Parallel scan underpins fast recurrences. Classic formulations cite Blelloch's prefix-sum \citep{blelloch1990prefix} and the asymptotically efficient Brent--Kung (BK) algorithm \citep{brent1982regular}. However, BK is \emph{flat}, while GPUs are \emph{hierarchical}; high-performance libraries (CUB/NCCL) implement hierarchical scans and incorporate the \emph{decoupled look-back} strategy to hide inter-block latency \citep{merrill2016single}. SWR embraces hierarchy explicitly: at the highest indexing level, the transfer is diagonal (logical depth~1), requiring only a single thread-block synchronization with strictly local inter-warp communication. This jagged structure being more efficiently realized on GPU was previously used for windowed attention variants \cite{zaheer2020big,zhang2024lolcats}.

\section{A Matrix Theory of Linear Recurrences}\label{sec:02}
We analyze the algebraic structure of first-order linear recurrences, a fundamental computational primitive for signal processing and deep learning alike. Our objective is to develop a matrix-based representation of the operator that maps an input sequence to the corresponding state sequence. This algebraic framework is the foundation for a systematic understanding of the system's dependency structure and for the principled derivation of computational algorithms. Building on this representation, we discuss two complementary algorithmic families. First, \emph{flat} matrix-factorization methods (Section~\ref{sec:matrix_factorizations_and_parallel_algorithms}) factor the transfer operator into logarithmically many sparse factors, yielding parallel algorithms such as Kogge-Stone or Brent-Kung scans. Second, \emph{hierarchical} block algorithms (Section~\ref{sec:hierarchical_decomposition_and_algorithms}) tile the sequence, compress inter-block coupling to a scalar carrier (rank-one off-diagonals), and split computation into parallel local solves and a coarse global recurrence; this design matches memory hierarchies and admits efficient matrix-multiplication implementations via numerically stable log-space materialization.

\begin{tcolorbox}[enhanced, breakable, sharp corners, frame hidden]
    \emph{Flat vs. hierarchical algorithms.} The categorization of fast matrix multiplication algorithms into \emph{flat} and \emph{hierarchical} is essential in modern high-performance computing. This distinction addresses the widening gap between computational throughput and the cost of data movement. As architectures increasingly rely on deep memory hierarchies (e.g., GPUs), algorithms must be evaluated not just by arithmetic complexity, but by their communication patterns and data locality.

    Professor David Keyes discusses the foundations for a hardware-centric taxonomy of matrix algorithms:
    \begin{quote}
    \emph{Two universes of computational linear algebra exist today side-by-side, a flat, sequential universe in which algorithms are simply stated with loops over global address spaces that typically process a row or a column at a time and another universe in which algorithms are restated for performance in ways that exploit hierarchy, with loops over local ranges only at each hierarchical level.}
    ~\citep{keyes2020hierarchical}
    \end{quote}
    \emph{Flat algorithms} achieve efficiency, typically $O(n \log n)$ work, through a factorization of a matrix $\bm{M}$ into a (short) sequence of sparse factors:
    \[
        \bm{M} = \bm{F}_m \bm{F}_{m-1} \cdots \bm{F}_1
    \]
    where the depth $m = O(\log n)$, and each factor $\bm{F}_k$ contains at most $O(n)$ non-zero entries. These algorithms are flat because their data dependency structure often spans the entire input domain. The sparsity pattern involves long-range connections (e.g., the FFT butterfly network), necessitating global scattering or gathering of information. This fails to exploit data locality and often renders the algorithm communication-bound.

    \emph{Hierarchical algorithms} exploit locality by organizing computation to respect data locality. They separate interactions into strong local components and weaker (or smoother) global components that can be efficiently compressed. The fundamental building block is the two-level decomposition. Let $n=\ell b$, where $\ell$ is the block size and $b$ is the number of blocks:
    \[
    \bm{M} = \bm{D}_1 + \bm{U}_1 \bm{S}_1 \bm{V}_1^T
    \]
    This structure separates the computation:
    \begin{itemize}
        \item $\bm{D}_1$ is a block-diagonal matrix. Computation is entirely local to input chunks, maximizing data reuse in fast memory (\emph{near field}).
        \item $\bm{U}_1 \bm{S}_1 \bm{V}_1^T$ is a data-sparse (numerically low-rank) representation of global interactions between blocks (\emph{far field}).
    \end{itemize}
    A fully hierarchical algorithm (e.g., FMM, $\mathcal{H}$-matrices) applies this concept recursively to the interaction matrix $\bm{S}_1$, creating a tree structure for computation.
    \vspace{1em} 
    \begin{center}\small
    \begin{tabular}{lll}
    \toprule
    \rowcolor{gray!30}
    \sf \textbf{Feature} & \sf \textbf{Flat Algorithms} & \sf \textbf{Hierarchical Algorithms} \\
    \midrule
    \sf Algebraic Structure & sequence of sparse factors & recursive low-rank block decomposition \\
    \sf Data Locality & low; global access patterns & high; maximizes local computation \\
    \sf Communication & global synchronization/shuffling & structured, hierarchical; reduced volume \\
    \sf Arithmetic Intensity & often lower; memory-bound & higher; high compute-to-memory ratio \\
    \bottomrule
    \end{tabular}
    \end{center}
    \vspace{0.5em}
\end{tcolorbox}

The input-to-state map of a scalar linear recurrence is defined by:
\begin{equation}\label{eq:2.1}
    x_i = a_i x_{i-1} + u_i,\quad i \in [n] := \{1, \dots, n\} 
\end{equation}
with \textit{state} $x_i\in\RR$, \textit{input} $u_i\in\RR$, and \textit{coefficient} $a_i \in \RR$, and initial condition $x_0 \in \RR$. Our first analysis of such dynamics is purely algebraic and assumes exact arithmetic, holding for any sequence of coefficients without regard to numerical and analytical stability.

The global behavior of system~\eqref{eq:2.1} can be captured by the action of a linear operator $\bm{L}\in\RR^{n\times n}$ on the input sequence. Let $x = (x_1, \dots, x_n)$ and let $u = (u_1 + a_1 x_0,\, u_2,\, \dots,\, u_n)$ so that the initial state is folded into the first input. We write $(p, q)$ for vertical concatenation, i.e., $(p, q) = [p^\top, q^\top]^\top$. Let $\bm{A} = \diag(a_1, \dots, a_n)$ be the diagonal matrix of coefficients, and let $\bm{Z}$ be the down-shift operator ($\bm{Z}_{ij} = \delta_{i,j+1}$). The recurrence is equivalent to the vector equation $x = \bm{A}(\bm{Z}x) + u$, i.e.\ the state is the solution of the linear system:
\begin{equation}\label{eq:2.2}
    (\bm{I} - \bm{A}\bm{Z})\,x = u.
\end{equation}
The operator $\bm{I} - \bm{A}\bm{Z}$ is unit lower-triangular and therefore always invertible. The solution is $x = \bm{L} u$, where we define the system's \textit{transfer operator} as $\bm{L} := (\bm{I} - \bm{A}\bm{Z})^{-1}$.

The structure of this operator is revealed through its Neumann series. The matrix $\bm{A}\bm{Z}$ is strictly lower-triangular and thus nilpotent, with ${(\bm{A}\bm{Z})}^n={0}$. Consequently, the series expansion for the inverse becomes a finite sum:
\begin{equation}\label{eq:2.3}
    \bm{L} = \bm{I} + \bm{A}\bm{Z} + (\bm{A}\bm{Z})^2 + \cdots + (\bm{A}\bm{Z})^{n-1}.
\end{equation} 
Each term $(\bm{A}\bm{Z})^k$ in this expansion represents the propagation of influence forward by exactly $k$ steps. A direct computation (see Appendix~\ref{app:weighted-shifts}) shows that $(\bm{A}\bm{Z})^k$ is supported on its $k$-th sub-diagonal, with entries $\bigl[(\bm{A}\bm{Z})^{k}\bigr]_{i, i-k} = a_i a_{i-1} \cdots a_{i-k+1}$. To write this compactly, we define the product $a_{i:j} := a_i a_{i-1} \cdots a_{j}$ for $i \ge j$ and adopt the convention that an empty product is 1 while if $i < j$ we have $a_{i:j} = 0$. The entries of $(\bm{A}\bm{Z})^k$ are thus $a_{i:i-k+1}$.

Summing the series yields the entries of the transfer operator. The entry $\bm{L}_{ij} = a_{i:j+1}$ captures the influence of input $u_j$ on state $x_i$, which is non-zero only if $i \ge j$. This gives the operator its characteristic unit lower-triangular form, depicted in Figure~\ref{fig:transfer_operator}.
\begin{figure}[tb!]
\centering
\begin{tikzpicture}[scale=1.15]
    \def\n{16}
    \def\cellsize{0.19}
    \def\gap{0.8}

    \pgfmathsetmacro{\totalsize}{\n*\cellsize}
    \pgfmathtruncatemacro{\nmo}{\n-1}

    \def\startx{0}
    \def\starty{0}

    \definecolor{diagcolor}{RGB}{200,200,200} 

    \foreach \i in {1,...,\n} {
        \pgfmathsetmacro{\aval}{0.65 + 0.3*sin( (\i) * 360 / \n )}
        \expandafter\xdef\csname aval@\i\endcsname{\aval}
        \pgfmathsetmacro{\t}{(\i-1) / (\n-1)}  
        \pgfmathsetmacro{\rfloat}{71 + \t * (220 - 71)}
        \pgfmathsetmacro{\gfloat}{95 + \t * (160 - 95)}
        \pgfmathsetmacro{\bfloat}{127 + \t * (110 - 127)}
        \pgfmathtruncatemacro{\rval}{max(0,min(255,round(\rfloat)))}
        \pgfmathtruncatemacro{\gval}{max(0,min(255,round(\gfloat)))}
        \pgfmathtruncatemacro{\bval}{max(0,min(255,round(\bfloat)))}
        \expandafter\xdef\csname aiR@\i\endcsname{\rval}
        \expandafter\xdef\csname aiG@\i\endcsname{\gval}
        \expandafter\xdef\csname aiB@\i\endcsname{\bval}
    }

    \foreach \i in {2,...,\n} {
        \pgfmathsetmacro{\vv}{\csname aval@\i\endcsname}
        \expandafter\xdef\csname p@1@\i\endcsname{\vv}
    }
    \foreach \roff in {2,...,\nmo} {
        \pgfmathtruncatemacro{\prev}{\roff - 1}
        \pgfmathtruncatemacro{\starti}{\roff + 1}
        \foreach \i in {\starti,...,\n} {
            \pgfmathtruncatemacro{\kidx}{\i - \prev}
            \pgfmathsetmacro{\leftv}{\csname p@\prev@\i\endcsname}
            \pgfmathsetmacro{\rightv}{\csname aval@\kidx\endcsname}
            \pgfmathsetmacro{\vv}{\leftv * \rightv}
            \expandafter\xdef\csname p@\roff@\i\endcsname{\vv}
        }
    }

    \newcommand{\DrawMatrix}[3]{%
        \fill[white] (#1, #2) rectangle (#1 + \totalsize, #2 + \totalsize);

        \foreach \i in {1,...,\n} {
            \pgfmathsetmacro{\idx}{\i - 1}
            \pgfmathsetmacro{\pos}{\idx*\cellsize}
            \fill[diagcolor]
                (#1 + \pos, #2 + \totalsize - \pos - \cellsize)
                rectangle (#1 + \pos + \cellsize, #2 + \totalsize - \pos);
        }

        \ifnum#3=0\relax
            \pgfmathtruncatemacro{\rShift}{1}
            \pgfmathtruncatemacro{\starti}{\rShift + 1}
            \foreach \i in {\starti,...,\n} {
                \pgfmathtruncatemacro{\rowidx}{\i - 1}
                \pgfmathtruncatemacro{\colidx}{\i - 1 - \rShift}
                \pgfmathsetmacro{\rowstart}{\rowidx*\cellsize}
                \pgfmathsetmacro{\colstart}{\colidx*\cellsize}
                \pgfmathsetmacro{\mag}{\csname aval@\i\endcsname}
                \pgfmathsetmacro{\alpha}{0.20 + 0.80*\mag}
                \pgfmathtruncatemacro{\iReversed}{\n + 1 - \i}
                \edef\currfillspec{rgb,255:red,\csname aiR@\iReversed\endcsname; green,\csname aiG@\iReversed\endcsname; blue,\csname aiB@\iReversed\endcsname}
                \fill[fill=\currfillspec, fill opacity=\alpha]
                    (#1 + \colstart, #2 + \totalsize - \rowstart - \cellsize)
                    rectangle (#1 + \colstart + \cellsize, #2 + \totalsize - \rowstart);
            }
        \else
            \foreach \roff in {1,...,\nmo} {
                \pgfmathtruncatemacro{\rShift}{\roff}
                \pgfmathtruncatemacro{\starti}{\rShift + 1}
                \foreach \i in {\starti,...,\n} {
                    \pgfmathtruncatemacro{\rowidx}{\i - 1}
                    \pgfmathtruncatemacro{\colidx}{\i - 1 - \rShift}
                    \pgfmathsetmacro{\rowstart}{\rowidx*\cellsize}
                    \pgfmathsetmacro{\colstart}{\colidx*\cellsize}
                    \pgfmathsetmacro{\mag}{\csname p@\roff@\i\endcsname}
                    \pgfmathsetmacro{\alpha}{0.20 + 0.80*\mag}
                    \edef\currfillspec{rgb,255:red,\csname aiR@\i\endcsname; green,\csname aiG@\i\endcsname; blue,\csname aiB@\i\endcsname}
                    \fill[fill=\currfillspec, fill opacity=\alpha]
                        (#1 + \colstart, #2 + \totalsize - \rowstart - \cellsize)
                        rectangle (#1 + \colstart + \cellsize, #2 + \totalsize - \rowstart);
                }
            }
        \fi

        \pgfmathtruncatemacro{\maxgrid}{\n - 1}
        \foreach \j in {1,...,\maxgrid} {
            \pgfmathsetmacro{\gridpos}{\j*\cellsize}
            \draw[thin, gray] (#1 + \gridpos, #2) -- (#1 + \gridpos, #2 + \totalsize);
            \draw[thin, gray] (#1, #2 + \gridpos) -- (#1 + \totalsize, #2 + \gridpos);
        }

        \draw[thick] (#1, #2) rectangle (#1 + \totalsize, #2 + \totalsize);
    }

    \pgfmathsetmacro{\xA}{\startx}
    \pgfmathsetmacro{\xB}{\startx + \totalsize + \gap}
    \pgfmathsetmacro{\yRow}{\starty}

    \DrawMatrix{\xA}{\yRow}{0} 
    \DrawMatrix{\xB}{\yRow}{1} 

    \node[font=\normalsize] at (\xA + 0.5*\totalsize, \yRow + \totalsize + 0.3) {$\bm{I} - \bm{A}\bm{Z}$};

    \node[font=\normalsize] at (\xB + 0.5*\totalsize, \yRow + \totalsize + 0.3) {$(\bm{I} - \bm{A}\bm{Z})^{-1}$};

\end{tikzpicture}
\caption{\small Side-by-side visualization of $\bm{I} - \bm{A}\bm{Z}$ and its Neumann-series inverse $(\bm{I} - \bm{A}\bm{Z})^{-1}$ for $n=16$. Subdiagonals are colored by row index with opacity proportional to magnitude.}
\label{fig:transfer_operator}
\end{figure}
\begin{tcolorbox}[enhanced, breakable, sharp corners, frame hidden]
    \emph{Sequentially Semi-Separable Matrices.} While dense, matrices of the type~\eqref{eq:2.3} possess a rich internal structure amenable to factorization, which is key to designing efficient algorithms. Commonly referred to as \emph{sequentially semi-separable} matrices, they have been a long-time favorite of linear algebra and signal processing literature (in both finite and infinite dimensions). First introduced to study time-varying linear systems \citep{gohberg1992minimality,dewilde1998time}, their properties have been extensively studied both theoretically \citep{vandebril2008matrix,dewilde2014semi,dewilde2025time} and computationally \citep{chandrasekaran2005some}. For practical-minded machine learning readers,~\cite{dao2024transformers} provide an excellent overview of the literature.
\end{tcolorbox}
\subsection{Matrix Factorizations and \emph{Flat} Algorithms} \label{sec:matrix_factorizations_and_parallel_algorithms} 
A key contribution of our work is establishing a precise correspondence between classical parallel algorithms and sparse matrix factorizations---a connection that has been surprisingly underexplored despite its fundamental nature. We show that the transfer operator $\bm{L}$ admits a factorization into exactly $\log_2(n)$ sparse factors, directly mirroring the computational structure of parallel prefix scan algorithms. This insight transforms what appears to be a purely algebraic manipulation into a principled algorithmic framework.

The simplest factorization emerges naturally from the binary expansion of the geometric series~\eqref{eq:2.3}, revealing that the Kogge-Stone parallel scan algorithm \citep{kogge2009parallel} is not merely analogous to, but fundamentally \emph{is}, a matrix factorization scheme. Assuming $n$ to be an integer power of two (enforceable by zero-padding $u$ if needed), $n = 2^m$ for some $m\in\NN$, we have
\begin{equation}\label{eq:2.4}
    \bm{L} \;=\; \sum_{k=0}^{n-1} (\bm{A}\bm{Z})^k \;=\;
    \prod_{t=0}^{m-1} \bigl(\bm{I} + (\bm{A}\bm{Z})^{2^t}\bigr),
\end{equation}
where extra terms beyond $k=n-1$ vanish by nilpotency\footnote{The product sign applied to matrices is defined as the \textit{left} matrix product, i.e., $\prod_{i=1}^n A_i = A_n \cdots A_2 A_1$.}. Full derivation is provided in Appendix~\ref{app:kogge_stone}. Figure~\ref{fig:kogge_stone_factors} illustrates the sparsity structure of each factor in this decomposition.
\begin{figure}[tb!]
\centering
\begin{tikzpicture}[scale=1.15]
    \def\n{16}
    \def\cellsize{0.19}
    \def\gap{0.6}

    \pgfmathsetmacro{\totalsize}{\n*\cellsize}

    \def\startx{0}
    \def\starty{0}

    \definecolor{diagcolor}{RGB}{200,200,200}
    \definecolor{subdiagbase}{RGB}{120,160,235}
    \def\visrho{0.80}

    \foreach \i in {1,...,\n} {
        \pgfmathsetmacro{\aval}{0.65 + 0.3*sin( (\i) * 360 / \n )}
        \expandafter\xdef\csname aval@\i\endcsname{\aval}
        \pgfmathsetmacro{\t}{(\i-1) / (\n-1)}  
        \pgfmathsetmacro{\rfloat}{71 + \t * (220 - 71)}
        \pgfmathsetmacro{\gfloat}{95 + \t * (160 - 95)}
        \pgfmathsetmacro{\bfloat}{127 + \t * (110 - 127)}
        \pgfmathtruncatemacro{\rval}{max(0,min(255,round(\rfloat)))}
        \pgfmathtruncatemacro{\gval}{max(0,min(255,round(\gfloat)))}
        \pgfmathtruncatemacro{\bval}{max(0,min(255,round(\bfloat)))}
        \expandafter\xdef\csname aiR@\i\endcsname{\rval}
        \expandafter\xdef\csname aiG@\i\endcsname{\gval}
        \expandafter\xdef\csname aiB@\i\endcsname{\bval}
    }

    \foreach \i in {1,...,\n} {
        \expandafter\xdef\csname f@0@\i\endcsname{\csname aval@\i\endcsname}
    }
    \foreach \i in {2,...,\n} {
        \pgfmathsetmacro{\leftv}{\csname f@0@\i\endcsname}
        \pgfmathtruncatemacro{\j}{\i-1}
        \pgfmathsetmacro{\rightv}{\csname f@0@\j\endcsname}
        \pgfmathsetmacro{\vv}{\leftv * \rightv}
        \expandafter\xdef\csname f@1@\i\endcsname{\vv}
    }
    \foreach \i in {4,...,\n} { 
        \pgfmathsetmacro{\leftv}{\csname f@1@\i\endcsname}
        \pgfmathtruncatemacro{\j}{\i-2}
        \pgfmathsetmacro{\rightv}{\csname f@1@\j\endcsname}
        \pgfmathsetmacro{\vv}{\leftv * \rightv}
        \expandafter\xdef\csname f@2@\i\endcsname{\vv}
    }
    \foreach \i in {8,...,\n} {
        \pgfmathsetmacro{\leftv}{\csname f@2@\i\endcsname}
        \pgfmathtruncatemacro{\j}{\i-4}
        \pgfmathsetmacro{\rightv}{\csname f@2@\j\endcsname}
        \pgfmathsetmacro{\vv}{\leftv * \rightv}
        \expandafter\xdef\csname f@3@\i\endcsname{\vv}
    }
 
    \newcommand{\Factor}[4]{%
        \fill[white] (#1, #2) rectangle (#1 + \totalsize, #2 + \totalsize);

        \foreach \i in {1,...,\n} {
            \pgfmathsetmacro{\idx}{\i - 1}
            \pgfmathsetmacro{\pos}{\idx*\cellsize}
            \fill[diagcolor]
                (#1 + \pos, #2 + \totalsize - \pos - \cellsize)
                rectangle (#1 + \pos + \cellsize, #2 + \totalsize - \pos);
        }

        \pgfmathtruncatemacro{\kShift}{#3}
        \pgfmathtruncatemacro{\stage}{#4}
        \pgfmathtruncatemacro{\starti}{\kShift + 1}
        \foreach \i in {\starti,...,\n} {
            \pgfmathtruncatemacro{\rowidx}{\i - 1}
            \pgfmathtruncatemacro{\colidx}{\i - 1 - \kShift}
            \pgfmathsetmacro{\rowstart}{\rowidx*\cellsize}
            \pgfmathsetmacro{\colstart}{\colidx*\cellsize}
            \pgfmathsetmacro{\mag}{\csname f@\stage@\i\endcsname}
            \pgfmathsetmacro{\alpha}{0.20 + 0.80*\mag}
            \edef\currfillspec{rgb,255:red,\csname aiR@\i\endcsname; green,\csname aiG@\i\endcsname; blue,\csname aiB@\i\endcsname}
            \fill[fill=\currfillspec, fill opacity=\alpha]
                (#1 + \colstart, #2 + \totalsize - \rowstart - \cellsize)
                rectangle (#1 + \colstart + \cellsize, #2 + \totalsize - \rowstart);
        }

        \pgfmathsetmacro{\maxgrid}{\n - 1}
        \foreach \j in {1,...,\maxgrid} {
            \pgfmathsetmacro{\gridpos}{\j*\cellsize}
            \draw[thin, gray] (#1 + \gridpos, #2) -- (#1 + \gridpos, #2 + \totalsize);
            \draw[thin, gray] (#1, #2 + \gridpos) -- (#1 + \totalsize, #2 + \gridpos);
        }

        \draw[thick] (#1, #2) rectangle (#1 + \totalsize, #2 + \totalsize);

        \node[font=\normalsize] at (#1 + 0.5*\totalsize, #2 + \totalsize + 0.3) {$\bm{I} + (\bm{A}\bm{Z})^{#3}$};
    }

    \pgfmathsetmacro{\xA}{\startx}
    \pgfmathsetmacro{\xB}{\startx + \totalsize + \gap}
    \pgfmathsetmacro{\xC}{\xB + \totalsize + \gap}
    \pgfmathsetmacro{\xD}{\xC + \totalsize + \gap}
    \pgfmathsetmacro{\yRow}{\starty}

    \Factor{\xA}{\yRow}{1}{0}
    \Factor{\xB}{\yRow}{2}{1}
    \Factor{\xC}{\yRow}{4}{2}
    \Factor{\xD}{\yRow}{8}{3}

\end{tikzpicture}
\caption{\small Kogge--Stone factorization of $\bm{L}$ for $n=16$: the four sparse factors $\bm{I} + (\bm{A}\bm{Z})^{2^t}$ for $t=\{0,1,2,3\}$. Each matrix has ones on the main diagonal (gray) and a single subdiagonal at offset $2^t$ representing $(\bm{A}\bm{Z})^{2^t}$.}
\label{fig:kogge_stone_factors}
\end{figure}
This factorization immediately yields the Kogge-Stone parallel algorithm with logarithmic depth. Each factor $(\bm{I} + {(\bm{A}\bm{Z})}^{2^t})$ in the product can be applied to the input vector $u$ through the recursive doubling scheme:
\begin{equation}\label{eq:2.8}
    v \leftarrow v + \bm{F}v, \quad \bm{F} \leftarrow \bm{F}^2,
\end{equation}
initialized with $v = u$ and $\bm{F} = \bm{A}\bm{Z}$. At stage $t$, we have $\bm{F} = {(\bm{A}\bm{Z})}^{2^t}$. The efficiency of the algorithm stems the logarithmic depth combined with maintaining sparsity throughout: at each stage $t$, the matrix $\bm{F}$ remains a shifted diagonal supported on the $2^t$-th sub-diagonal, which we represent as $\bm{F}=\mathsf{diag}\bigl(f\bigr) \bm{Z}^{2^t}$. The squaring operation $\bm{F} \leftarrow \bm{F}^2$ can be computed implicitly through element-wise operations on the diagonal vector $f$, as detailed in the box below. This yields Algorithm~\ref{alg:matrix-doubling}, which implements the Kogge-Stone parallel prefix scan in our matrix formulation.

\begin{tcolorbox}[enhanced, breakable, sharp corners, frame hidden, colback=bluegray!15]
        \textit{Implicit matrix squaring.} Let $k=2^t$. We define the \textit{shift} operation on a vector $f$ such that ${\mathsf{shift}(f,k)}_i=f_{i-k}$ for $i>k$ and $0$ otherwise. The derivation relies on the following commutation identity between the shift operator and a diagonal matrix:
        \[
        \bm{Z}^k \mathsf{diag}(f) = \mathsf{diag}(\mathsf{shift}(f,k)) \bm{Z}^k.
        \]
        Using this identity, the squaring step becomes:
        \allowdisplaybreaks
        \begin{align*}
        \bm{F}^2 &= \bigl(\mathsf{diag}(f) \bm{Z}^{2^t}\bigr)^2 \\
        &= \mathsf{diag}(f) \left(\bm{Z}^{2^t} \mathsf{diag}(f)\right) \bm{Z}^{2^t} \\
        &= \mathsf{diag}(f) \left(\mathsf{diag}(\mathsf{shift}(f,2^t)) \bm{Z}^{2^t}\right) \bm{Z}^{2^t} \\
        &= \mathsf{diag}\Bigl(f \odot \mathsf{shift}\bigl(f,2^t\bigr)\Bigr) \bm{Z}^{2^{t+1}},
        \end{align*}
        where $\odot$ is the Hadamard product. This provides an efficient update for the vector $f$.
\end{tcolorbox}

\begin{algorithm}[htbp]
    \caption{Kogge-Stone Parallel Algorithm}\label{alg:matrix-doubling}
    \begin{algorithmic}[1]
        \Require $n\in\NN$, sequences $(a_i)_{i=1}^n$, $(u_i)_{i=1}^n$ with $u_1 \leftarrow u_1 + a_1 x_0$ already folded
        \Ensure $(x_i)_{i=1}^n$ where $x_i = \sum_{j=1}^{i} (a_{i:j})\,u_j$
        \State $v \gets (u_1,\dots,u_n)^\top$ \Comment{work vector}
        \State $f \gets (a_1,\dots,a_n)^\top$; \quad $s \gets 1$ \Comment{$\bm{F}=\diag(f)\,\bm{Z}^s$}
        \While{$s \le n-1$} \Comment{stages $t=0,\dots,\lceil\log_2 n\rceil-1$}
            \ForAll{$i \in \{s\!+\!1,\dots,n\}$} \Comment{apply $v \leftarrow v + Fv$ \textbf{in parallel}}
                \State $v_i \gets v_i + f_i \cdot v_{i-s}$ \label{line:vupdate}
            \EndFor
            \ForAll{$i \in \{s\!+\!1,\dots,n\}$} \Comment{square $\bm{F}$: $f \leftarrow f \odot \mathrm{shift}(f,s)$ \textbf{in parallel}}
                \State $f_i \gets f_i \cdot f_{i-s}$ \label{line:fupdate}
            \EndFor
            \State $s \gets 2s$
        \EndWhile
        \State \Return $x \gets v$
    \end{algorithmic}
\end{algorithm}

Algorithm~\ref{alg:matrix-doubling} presents the Kogge-Stone variant, which achieves optimal $O(\log n)$ depth but performs $O(n \log n)$ total work. The matrix factorization perspective naturally extends to other parallel prefix algorithms: the Brent-Kung algorithm, for instance, corresponds to a different factorization that trades increased depth for work-efficiency, achieving $O(n)$ work with $O(\log n)$ depth through a two-phase (upsweep-downsweep) structure. We leave the proof as an exercise to the reader. 

\begin{tcolorbox}[enhanced, breakable, sharp corners, frame hidden]
    \textit{Scan algorithms in deep learning.} Parallel scans have been widely used to parallelize linear recurrences over sequence length \citep{martin2017parallelizing,smith2022simplified,gu2023mamba}. The typical recipe is: (1) define the associative binary operator $\circ : \RR^2 \times \RR^2 \to \RR^2$ corresponding to the recurrence \citep{blelloch1990prefix}, $(v', f') \circ (v, f) \coloneq \bigl(v' + f' v,\, f' f\bigr)$; and (2) invoke a high-performance scan from standard libraries (e.g., {\tt cub::DeviceScan}) \citep{merrill2016single}. On modern GPUs, optimized scans are memory-bandwidth limited and, for long sequences, reach throughput comparable to {\tt memcpy} \citep{merrill2016single,harris2007parallel}, motivating algorithms that better match the hardware hierarchy.
\end{tcolorbox}

Despite optimal $O(\log n)$ depth, flat factorizations impose global communication patterns that are bandwidth-bound on hierarchical hardware. We therefore reorganize the same operator around locality: partition the sequence into blocks, solve intra-block recurrences in parallel, and couple blocks only through a scalar carrier governed by a coarse recurrence, leading to the hierarchical formulation below.

\subsection{Hierachical Decomposition and Algorithms}\label{sec:hierarchical_decomposition_and_algorithms}

On \textit{hierarchical} hardware --- registers, caches, and memory arranged by increasing capacity and decreasing bandwidth --- such sweeps squander locality. Effective algorithms design practice mirrors the memory hierarchy: partition the sequence into blocks and organize computation so that data remains near where it is produced.

We fix a block size $\ell \in \NN$ and assume, for simplicity, that $n = b \cdot \ell$ for some $b \in \NN$. We introduce a bijection $\phi : [b] \times [\ell] \to [n]$ mapping the block index $t$ and local index $j$ to the global index $i$ via $\phi(t,j) := \ell(t-1) + j$ (essentially, row-major enumeration). This bijection allows us to segment any sequence $(q_1, \ldots, q_n)$. We denote the local components as $\bm{q}_{t,j} := q_{\phi(t,j)}$ and the block subvectors as $\bm{q}_t := (\bm{q}_{t,1},\dots,\bm{q}_{t,\ell})^\top \in \RR^{\ell}$. We apply this notation specifically to the coefficients $a$, the input $u$, and the state $x$. We also adapt the notation for contiguous products to this block structure. Within a block $t$, we write $\bm{a}_{t,k:j} := \bm{a}_{t,k}\bm{a}_{t,k-1}\cdots\bm{a}_{t,j}$ for $k \ge j$, with the empty product (e.g., when $j=k+1$) defined as unity.

\subsubsection{Tiling the Transfer Operator}

Under this partitioning, the transfer operator $\bm{L}$ naturally decomposes into a $b \times b$ block matrix, revealing the dependency structure between different blocks.

\begin{theorem}[Block structure]\label{thm:block_structure}
The transfer operator $\bm{L}$ admits a block lower triangular representation
\begin{equation}\label{eq:block_structure}
\bm{L} = 
\begin{bmatrix}
    \bm{L}_1 & & & \\
    \bm{F}_{2,1} & \bm{L}_2 & & \\
    \bm{F}_{3,1} & \bm{F}_{3,2} & \bm{L}_3 & \\
    \vdots & \vdots & \ddots & \ddots \\
    \bm{F}_{b,1} & \bm{F}_{b,2} & \cdots & \bm{F}_{b,b-1} & \bm{L}_b
\end{bmatrix},
\end{equation} 
where each diagonal block $\bm{L}_t \in \mathbb{R}^{\ell \times \ell}$ is unit lower triangular, and each off-diagonal block $\bm{F}_{t,s} \in \mathbb{R}^{\ell \times \ell}$ for $t > s$ has rank at most one.
\end{theorem}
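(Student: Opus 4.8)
The plan is to argue entirely at the level of matrix entries, using the closed form $\bm{L}_{ij} = a_{i:j+1}$ (which is nonzero only for $i \ge j$) together with the block-indexing bijection $\phi(t,j) = \ell(t-1)+j$. Fixing blocks with $t \ge s$, the $(i,j)$ entry of the $(t,s)$ block is $\bm{L}_{\phi(t,i),\,\phi(s,j)}$, so the entire statement reduces to understanding how the global contiguous product $a_{\phi(t,i):\phi(s,j)+1}$ behaves as $i,j$ range over $[\ell]$.

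First I would dispatch the diagonal blocks ($t=s$). Since $\phi(t,i) \ge \phi(t,j)$ exactly when $i \ge j$, each diagonal block inherits lower-triangularity directly from $\bm{L}$; and on the diagonal $i=j$ the exponent range of $a_{\phi(t,i):\phi(t,i)+1}$ is empty, so the entry equals the empty product $1$. This already yields the claimed unit lower-triangular form for $\bm{L}_t$, and this step is essentially immediate.

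The substance lies in the off-diagonal blocks ($t>s$), where $\phi(t,i)-\phi(s,j) = \ell(t-s)+(i-j) \ge 1$, so every global row index strictly exceeds every global column index and all entries are genuine products of coefficients. The decisive step is to split the contiguous product at block boundaries into three telescoping pieces: a head running from the start of block $t$ up to local position $i$, a tail running from local position $j{+}1$ to the end of block $s$, and a middle consisting of the full products over the intervening blocks $s{+}1,\dots,t{-}1$. In the block notation of the paper this reads
$$
[\bm{F}_{t,s}]_{ij} = \bm{a}_{t,i:1}\,\Bigl(\textstyle\prod_{r=s+1}^{t-1}\bm{a}_{r,\ell:1}\Bigr)\,\bm{a}_{s,\ell:j+1}.
$$
The observation that closes the argument is that the middle factor is a single scalar independent of both $i$ and $j$, the head depends only on the row index $i$, and the tail depends only on the column index $j$. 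Collecting the head into a column vector $\bm{u}_t \in \RR^{\ell}$ with $[\bm{u}_t]_i = \bm{a}_{t,i:1}$, the tail into a row vector $\bm{v}_s \in \RR^{\ell}$ with $[\bm{v}_s]_j = \bm{a}_{s,\ell:j+1}$, and the middle into a scalar carrier $c_{t,s} = \prod_{r=s+1}^{t-1}\bm{a}_{r,\ell:1}$, I obtain $\bm{F}_{t,s} = c_{t,s}\,\bm{u}_t\bm{v}_s^\top$, an outer product scaled by a scalar, hence of rank at most one.

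The only real obstacle is the index bookkeeping of the three-way split: I must verify that the head range $[\ell(t-1)+1,\,\phi(t,i)]$, the union of the intervening full blocks $[\ell s+1,\,\ell(t-1)]$, and the tail range $[\phi(s,j)+1,\,\ell s]$ partition the global exponent range $[\phi(s,j)+1,\,\phi(t,i)]$ with no gaps or overlaps, and that the empty-product convention $a_{i:j}=1$ correctly handles the boundary cases — an empty head when $i=\ell$ is not special, but an empty tail when $j=\ell$, and an empty middle when $t=s+1$, must be checked to give the right scalar. Once this partition is confirmed, the separation of variables makes the rank-one conclusion automatic, and the result is simply the sequentially semiseparable structure noted earlier specialized to the block partition.
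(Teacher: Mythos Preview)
Your argument is correct. The three-way split of the contiguous product at block boundaries is sound, the boundary cases (empty middle when $t=s+1$, empty tail when $j=\ell$) are handled by the empty-product convention, and the resulting outer-product form $c_{t,s}\,\bm{u}_t\bm{v}_s^\top$ immediately gives rank at most one. Your vectors coincide with the paper's: $\bm{u}_t = \bm{g}_t$, $\bm{v}_s = \bm{r}_s$, and $c_{t,s} = \beta_{t,s}$. (Your remark about ``an empty head when $i=\ell$'' is a slip---the head $\bm{a}_{t,i:1}$ is never empty for $i\ge 1$---but this is only in your list of checks, not in the argument.)

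The paper takes a different route. Rather than manipulating entries, it introduces the scalar \emph{carrier} $s_{t-1}=\bm{x}_{t-1,\ell}$ and argues dynamically: all information entering block $t$ from blocks $1,\dots,t-1$ must pass through this single scalar, and the carrier itself satisfies a coarse linear recurrence $s_t = c_t s_{t-1} + \bm{r}_t^\top\bm{u}_t$. The rank-one form of $\bm{F}_{t,s}$ then falls out by matching the block-matrix expression $\sum_s \bm{F}_{t,s}\bm{u}_s$ against the recurrence expression $\bm{g}_t s_{t-1}$ (Theorem~\ref{thm:low_rank}). Your entrywise approach is more elementary and self-contained; the paper's carrier approach is less direct but explains \emph{why} the off-diagonal blocks are rank-one---the scalar bottleneck---and that mechanism is exactly what drives the hierarchical decomposition $\bm{L}=\bm{\mathcal{L}}+\bm{G}\bm{Z}_b\bm{T}\bm{R}$ and the later {\sf B2P} truncation.
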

\begin{figure}[tb!]
\centering
\begin{tikzpicture}[scale=1.3]
    \def\blocksize{1}
    
    \def\scalarsize{0.15}  
    \def\vecwidth{0.15}     
    \def\gap{0.05}          
    \def\wallgap{0.08}      
    \def\roundingcoeff{0.5} 
    \def\vecopacity{0.7}    
    
    \definecolor{v1color}{RGB}{250, 200, 150}   
    \definecolor{v2color}{RGB}{230, 160, 110}   
    \definecolor{v3color}{RGB}{210, 120, 70}    
    \definecolor{v4color}{RGB}{180, 80, 30}     
    
    \definecolor{mu1color}{RGB}{180, 200, 220}  
    \definecolor{mu2color}{RGB}{140, 170, 210}  
    \definecolor{mu3color}{RGB}{90, 130, 180}   
    \definecolor{mu4color}{RGB}{50, 80, 130}    

    \definecolor{g1color}{RGB}{250, 210, 160}   
    \definecolor{g2color}{RGB}{240, 180, 130}   
    \definecolor{g3color}{RGB}{220, 150, 100}   
    \definecolor{g4color}{RGB}{200, 120, 70}    
    
    \definecolor{c21color}{RGB}{40, 40, 40}     
    \definecolor{c31color}{RGB}{100, 100, 100}  
    \definecolor{c32color}{RGB}{60, 60, 60}     
    \definecolor{c41color}{RGB}{220, 220, 220}  
    \definecolor{c42color}{RGB}{190, 190, 190}  
    \definecolor{c43color}{RGB}{160, 160, 160}  

    \definecolor{offdiagbg}{RGB}{205, 205, 205}  
    
    \definecolor{maskcolor}{RGB}{200, 200, 230}  
    
    \fill[offdiagbg] (0, 2*\blocksize) rectangle (\blocksize, 3*\blocksize);
    \fill[offdiagbg] (0, \blocksize) rectangle (\blocksize, 2*\blocksize);
    \fill[offdiagbg] (\blocksize, \blocksize) rectangle (2*\blocksize, 2*\blocksize);
    \fill[offdiagbg] (0, 0) rectangle (\blocksize, \blocksize);
    \fill[offdiagbg] (\blocksize, 0) rectangle (2*\blocksize, \blocksize);
    \fill[offdiagbg] (2*\blocksize, 0) rectangle (3*\blocksize, \blocksize);
    
    \draw[thick] (0,0) rectangle (4*\blocksize, 4*\blocksize);
    
    \foreach \i in {1,2,3} {
        \draw (0, \i*\blocksize) -- (4*\blocksize, \i*\blocksize);
    }
    
    \foreach \j in {1,2,3} {
        \draw (\j*\blocksize, 0) -- (\j*\blocksize, 4*\blocksize);
    }
    
    \fill[maskcolor] (0, 3*\blocksize) -- (0, 4*\blocksize) -- (\blocksize, 3*\blocksize) -- cycle;
    \fill[mu1color, opacity=\vecopacity, rounded corners=\roundingcoeff*\blocksize] 
        (\wallgap, 3*\blocksize + \wallgap) rectangle (\wallgap + \vecwidth, 4*\blocksize - \wallgap - \vecwidth - \gap);
    \draw[thin, rounded corners=\roundingcoeff*\blocksize] 
        (\wallgap, 3*\blocksize + \wallgap) rectangle (\wallgap + \vecwidth, 4*\blocksize - \wallgap - \vecwidth - \gap);
    \fill[g1color, opacity=\vecopacity, rounded corners=\roundingcoeff*\blocksize] 
        (\wallgap + \vecwidth + \gap, 4*\blocksize - \wallgap - \vecwidth) rectangle (\blocksize - \wallgap, 4*\blocksize - \wallgap);
    \draw[thin, rounded corners=\roundingcoeff*\blocksize] 
        (\wallgap + \vecwidth + \gap, 4*\blocksize - \wallgap - \vecwidth) rectangle (\blocksize - \wallgap, 4*\blocksize - \wallgap);
    \node[font=\footnotesize] at (0.6*\blocksize, 3.4*\blocksize) {$\bm{L}_1$}; 
    
    \fill[maskcolor] (\blocksize, 2*\blocksize) -- (\blocksize, 3*\blocksize) -- (2*\blocksize, 2*\blocksize) -- cycle;
    \fill[mu2color, opacity=\vecopacity, rounded corners=\roundingcoeff*\blocksize] 
        (\blocksize + \wallgap, 2*\blocksize + \wallgap) rectangle (\blocksize + \wallgap + \vecwidth, 3*\blocksize - \wallgap - \vecwidth - \gap);
    \draw[thin, rounded corners=\roundingcoeff*\blocksize] 
        (\blocksize + \wallgap, 2*\blocksize + \wallgap) rectangle (\blocksize + \wallgap + \vecwidth, 3*\blocksize - \wallgap - \vecwidth - \gap);
    \fill[g2color, opacity=\vecopacity, rounded corners=\roundingcoeff*\blocksize] 
        (\blocksize + \wallgap + \vecwidth + \gap, 3*\blocksize - \wallgap - \vecwidth) rectangle (2*\blocksize - \wallgap, 3*\blocksize - \wallgap);
    \draw[thin, rounded corners=\roundingcoeff*\blocksize] 
        (\blocksize + \wallgap + \vecwidth + \gap, 3*\blocksize - \wallgap - \vecwidth) rectangle (2*\blocksize - \wallgap, 3*\blocksize - \wallgap);
    \node[font=\footnotesize] at (1.6*\blocksize, 2.4*\blocksize) {$\bm{L}_2$};
    
    \fill[maskcolor] (2*\blocksize, \blocksize) -- (2*\blocksize, 2*\blocksize) -- (3*\blocksize, \blocksize) -- cycle;
    \fill[mu3color, opacity=\vecopacity, rounded corners=\roundingcoeff*\blocksize] 
        (2*\blocksize + \wallgap, \blocksize + \wallgap) rectangle (2*\blocksize + \wallgap + \vecwidth, 2*\blocksize - \wallgap - \vecwidth - \gap);
    \draw[thin, rounded corners=\roundingcoeff*\blocksize] 
        (2*\blocksize + \wallgap, \blocksize + \wallgap) rectangle (2*\blocksize + \wallgap + \vecwidth, 2*\blocksize - \wallgap - \vecwidth - \gap);
    \fill[g3color, opacity=\vecopacity, rounded corners=\roundingcoeff*\blocksize] 
        (2*\blocksize + \wallgap + \vecwidth + \gap, 2*\blocksize - \wallgap - \vecwidth) rectangle (3*\blocksize - \wallgap, 2*\blocksize - \wallgap);
    \draw[thin, rounded corners=\roundingcoeff*\blocksize] 
        (2*\blocksize + \wallgap + \vecwidth + \gap, 2*\blocksize - \wallgap - \vecwidth) rectangle (3*\blocksize - \wallgap, 2*\blocksize - \wallgap);
    \node[font=\footnotesize] at (2.6*\blocksize, 1.4*\blocksize) {$\bm{L}_3$}; 
    
    \fill[maskcolor] (3*\blocksize, 0) -- (3*\blocksize, \blocksize) -- (4*\blocksize, 0) -- cycle;
    \fill[mu4color, opacity=\vecopacity, rounded corners=\roundingcoeff*\blocksize] 
        (3*\blocksize + \wallgap, \wallgap) rectangle (3*\blocksize + \wallgap + \vecwidth, \blocksize - \wallgap - \vecwidth - \gap);
    \draw[thin, rounded corners=\roundingcoeff*\blocksize] 
        (3*\blocksize + \wallgap, \wallgap) rectangle (3*\blocksize + \wallgap + \vecwidth, \blocksize - \wallgap - \vecwidth - \gap);
    \fill[g4color, opacity=\vecopacity, rounded corners=\roundingcoeff*\blocksize] 
        (3*\blocksize + \wallgap + \vecwidth + \gap, \blocksize - \wallgap - \vecwidth) rectangle (4*\blocksize - \wallgap, \blocksize - \wallgap);
    \draw[thin, rounded corners=\roundingcoeff*\blocksize] 
        (3*\blocksize + \wallgap + \vecwidth + \gap, \blocksize - \wallgap - \vecwidth) rectangle (4*\blocksize - \wallgap, \blocksize - \wallgap);
    \node[font=\footnotesize] at (3.6*\blocksize, 0.4*\blocksize) {$\bm{L}_4$};
    
    \newcommand{\drawRankOne}[9]{ 
        \pgfmathsetmacro{\xleft}{(#1-1)*\blocksize}
        \pgfmathsetmacro{\xright}{#1*\blocksize}
        \pgfmathsetmacro{\ytop}{(5-#2)*\blocksize}
        \pgfmathsetmacro{\ybottom}{(4-#2)*\blocksize}
        
        \fill[#6, rounded corners=\roundingcoeff*\blocksize] (\xleft + \wallgap, \ytop - \wallgap - \scalarsize) 
            rectangle (\xleft + \wallgap + \scalarsize, \ytop - \wallgap);
        \draw[thin, rounded corners=\roundingcoeff*\blocksize] (\xleft + \wallgap, \ytop - \wallgap - \scalarsize) 
            rectangle (\xleft + \wallgap + \scalarsize, \ytop - \wallgap);
        
        \fill[#8, opacity=\vecopacity, rounded corners=\roundingcoeff*\blocksize] (\xleft + \wallgap + \scalarsize + \gap, \ytop - \wallgap - \vecwidth) 
            rectangle (\xright - \wallgap, \ytop - \wallgap);
        \draw[thin, rounded corners=\roundingcoeff*\blocksize] (\xleft + \wallgap + \scalarsize + \gap, \ytop - \wallgap - \vecwidth) 
            rectangle (\xright - \wallgap, \ytop - \wallgap);
        
        \fill[#7, opacity=\vecopacity, rounded corners=\roundingcoeff*\blocksize] (\xleft + \wallgap, \ybottom + \wallgap) 
            rectangle (\xleft + \wallgap + \vecwidth, \ytop - \wallgap - \scalarsize - \gap);
        \draw[thin, rounded corners=\roundingcoeff*\blocksize] (\xleft + \wallgap, \ybottom + \wallgap) 
            rectangle (\xleft + \wallgap + \vecwidth, \ytop - \wallgap - \scalarsize - \gap);
        
        \node[font=\footnotesize] at  
            (\xleft + 0.5*\blocksize + 0.1*\blocksize, \ybottom + 0.5*\blocksize - 0.1*\blocksize) {#9};
    }
    
    \drawRankOne{1}{2}{$c_{21}$}{$\mu_2$}{$v_1^T$}{c21color}{mu2color}{v1color}{$\bm{F}_{21}$}
    \drawRankOne{1}{3}{$c_{31}$}{$\mu_3$}{$v_1^T$}{c31color}{mu3color}{v1color}{$\bm{F}_{31}$}
    \drawRankOne{2}{3}{$c_{32}$}{$\mu_3$}{$v_2^T$}{c32color}{mu3color}{v2color}{$\bm{F}_{32}$}
    \drawRankOne{1}{4}{$c_{41}$}{$\mu_4$}{$v_1^T$}{c41color}{mu4color}{v1color}{$\bm{F}_{41}$}
    \drawRankOne{2}{4}{$c_{42}$}{$\mu_4$}{$v_2^T$}{c42color}{mu4color}{v2color}{$\bm{F}_{42}$}
    \drawRankOne{3}{4}{$c_{43}$}{$\mu_4$}{$v_3^T$}{c43color}{mu4color}{v3color}{$\bm{F}_{43}$}
    
\end{tikzpicture}
\caption{\small Block decomposition of the transfer operator showing diagonal blocks $\bm{L}_t = \tril(\bm{g}_t \bm{g}_t^{-\top})$ (lower triangular, capturing intra-block dependencies) and off-diagonal blocks $\bm{F}_{ts} = \bm{g}_t \beta_{ts} \bm{r}_s^\top$ (rank-one factorization, mediating inter-block carrier propagation). The scalar $\beta_{ts}$ represents the compound attenuation between blocks.}
\label{fig:block_matrix}
\end{figure}
The global system $x = \bm{L}u$ thus decomposes into coupled equations for the state chunks:
\begin{equation}\label{eq:block_state_blocks}
\bm{x}_t = \bm{L}_t \bm{u}_t + \sum_{s=1}^{t-1} \bm{F}_{t,s} \bm{u}_s.
\end{equation}
The term $\bm{L}_t \bm{u}_t$ represents the \emph{intra-block} dynamics—the evolution of the state within block $t$ driven solely by the local inputs $\bm{u}_t$ and coefficients $\bm{a}_{t}$. The summation term captures the \emph{inter-block} dynamics—the cumulative influence of all preceding blocks on the current block $t$. The intra-block terms $\bm{L}_t \bm{u}_t$ are independent and can be parallelized. The overall computational efficiency, however, hinges on effectively resolving the inter-block dependencies without explicitly computing materializing and multiplying the off-diagonal blocks of $\bm{L}$ (the summation in~\eqref{eq:block_state_blocks}). 

\paragraph{Intra-block dynamics.}
The diagonal blocks $\bm{L}_t$ govern the intra-block dynamics. They characterize the evolution of the system within block $t$ assuming a zero initial state entering the block. When considering the relationship between an input and a state where both indices fall within the same block $t$, the transfer mechanism depends exclusively on the coefficients associated with that block.

Specifically, if we map global indices to local indices $(k, m)$, the entries are $[\bm{L}_t]_{k,m} = \bm{a}_{t,k:m+1}$ for $k>m$, and $1$ for $k=m$. This structure is identical to the global transfer operator, but localized. If we define the local coefficient matrix $\bm{A}_t = \diag(\bm{a}_{t,1}, \dots, \bm{a}_{t,\ell})$ and denote the $\ell \times \ell$ down-shift operator by $\bm{Z}_{\ell}$, the local transfer operator $\bm{L}_t$ can be compactly expressed as:
\begin{equation}
    \bm{L}_t = (\bm{I}_{\ell} - \bm{A}_t \bm{Z}_{\ell})^{-1}.
\end{equation}

\paragraph{Inter-block dynamics.} Information flowing from blocks $1$ through $t-1$ into block $t$ must pass through a single scalar---the state $x_{\ell(t-1)}$ at the block boundary. This bottleneck, which we call the \emph{carrier} $s_{t-1} := x_{\ell(t-1)} = \bm{x}_{t-1,\ell}$, compresses and mediates all inter-block dependencies and enables a hierarchical reformulation of inference algorithms.

Consider the state evolution within block $t$. For all $k \in [\ell]$, the state $\bm{x}_{t,k}$ can be expressed relative to the block entry point:
\begin{equation}\label{eq:scalar_block_decomp}
    \bm{x}_{t,k} = \bm{a}_{t,k:1} s_{t-1} + \sum_{j=1}^k \bm{a}_{t,k:j+1} \bm{u}_{t,j}.
\end{equation}
The first term propagates the incoming carrier through products $\bm{a}_{t,k:1}$, while the second captures local dynamics---precisely the $k$-th component of $\bm{L}_t \bm{u}_t$. Collecting these propagation factors into a vector $\bm{g}_t := (\bm{a}_{t,1}, \bm{a}_{t,2:1}, \ldots, \bm{a}_{t,\ell:1})$, we obtain the block state equation:
\begin{equation}\label{eq:block_state_clean}
    \bm{x}_t = \bm{L}_t\bm{u}_t + \bm{g}_t s_{t-1}.
\end{equation}
The carrier itself evolves recurrently through the blocks ---\emph{a linear recurrence governing blocks of linear recurrences}. Extracting the final state $s_t = \bm{e}_\ell^\top \bm{x}_t$ and substituting~\eqref{eq:block_state_clean}:
\begin{equation}\label{eq:carrier_evolution}
    s_t = c_t s_{t-1} + \bm{r}_t^\top \bm{u}_t,
\end{equation}
where $c_t := \bm{a}_{t,\ell:1}$ is the block's \emph{compound attenuation} and $\bm{r}_t^\top := \bm{e}_\ell^\top \bm{L}_t$ reads out the local contribution.

The carrier system inherits the algebraic structure of the original problem but operates at a coarser temporal resolution. This self-similarity allows us to apply the transfer operator formalism at this higher level as well.

Let $\bm{s} = (s_1, \dots, s_b)$ be the vector of carriers and $\bm{v} = (v_1, \dots, v_b)$ be the vector of effective inputs, $v_t = \bm{r}_t^\top \bm{u}_t$. The carrier dynamics can be written in matrix form as $\bm{s} = \bm{C} \bm{Z}_b \bm{s} + \bm{v}$, where $\bm{C} = \diag(c_1, \dots, c_b)$ is the diagonal matrix of block attenuations and $\bm{Z}_b$ is the $b \times b$ down-shift operator.

Consequently, the carrier transfer operator $\bm{T} \in \mathbb{R}^{b \times b}$, which resolves the inter-block dependencies via $\bm{s} = \bm{T}\bm{v}$, is given by
\begin{equation}
    \bm{T} = (\bm{I}_b - \bm{C} \bm{Z}_b)^{-1}.
\end{equation}

\begin{tcolorbox}[enhanced, breakable, sharp corners, frame hidden, colback=bluegray!15]
        \raggedcolumns
        \emph{Low-rank factorization of the off-diagonal blocks.} The constraint that the inter-block coupling must channel through the scalar carrier manifests as rank-one structure in the off-diagonal blocks. 
        \begin{theorem}[Low-rank Structure]\label{thm:low_rank}
        The off-diagonal blocks admit the factorization
        \begin{equation}\label{eq:rank_one_factorization}
        \bm{F}_{t,s} = \bm{g}_t \beta_{t,s} \bm{r}_s^\top,
        \end{equation}
        where $\beta_{t,s} = c_{s+1} \cdots c_{t-1}$ compounds the block attenuations.
        \end{theorem}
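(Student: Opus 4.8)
The plan is to prove the rank-one factorization by tracking exactly how information propagates from block $s$ into block $t$ through the scalar carrier bottleneck, and showing this forces the claimed outer-product structure. The key insight, already established in the preceding development, is that all inter-block influence is mediated by the single scalar carrier $s_{t-1}$; the factorization should fall out once I express the carrier's dependence on distant inputs explicitly.

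First I would unroll the carrier recurrence~\eqref{eq:carrier_evolution}. Starting from $s_t = c_t s_{t-1} + \bm{r}_t^\top \bm{u}_t$ and iterating downward, the carrier entering block $t$ (namely $s_{t-1}$) can be written as a sum over all preceding blocks $s \le t-1$ of terms of the form $(c_{s+1}\cdots c_{t-1})\,\bm{r}_s^\top \bm{u}_s$, where the product of compound attenuations $\beta_{t,s} := c_{s+1}\cdots c_{t-1}$ accounts for the decay accumulated while passing through the intervening block boundaries. This is itself an application of the transfer-operator formula at the coarse level: $s_{t-1} = \sum_{s \le t-1} \bm{T}_{t-1,s}\, v_s$ with $\bm{T}_{t-1,s} = \beta_{t,s}$ and $v_s = \bm{r}_s^\top \bm{u}_s$.

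Next I would substitute this expansion of $s_{t-1}$ into the block state equation~\eqref{eq:block_state_clean}, $\bm{x}_t = \bm{L}_t \bm{u}_t + \bm{g}_t s_{t-1}$. The carrier term becomes $\bm{g}_t \big(\sum_{s=1}^{t-1} \beta_{t,s}\, \bm{r}_s^\top \bm{u}_s\big) = \sum_{s=1}^{t-1} \big(\bm{g}_t\, \beta_{t,s}\, \bm{r}_s^\top\big)\bm{u}_s$. Comparing this term-by-term against the original block-structured decomposition~\eqref{eq:block_state_blocks}, $\bm{x}_t = \bm{L}_t \bm{u}_t + \sum_{s=1}^{t-1} \bm{F}_{t,s}\bm{u}_s$, and using that this identity must hold for all inputs $\bm{u}_s$, I can read off $\bm{F}_{t,s} = \bm{g}_t\,\beta_{t,s}\,\bm{r}_s^\top$, which is manifestly rank one (an outer product of $\bm{g}_t \in \RR^\ell$ and $\bm{r}_s \in \RR^\ell$ scaled by the scalar $\beta_{t,s}$). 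This simultaneously re-proves the rank-at-most-one claim of Theorem~\ref{thm:block_structure}.

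The main obstacle is the bookkeeping of index ranges in the carrier unrolling: I must be careful that the attenuation product $\beta_{t,s} = c_{s+1}\cdots c_{t-1}$ excludes $c_t$ (since $\bm{g}_t$ already carries the propagation \emph{within} block $t$ via the readout of $s_{t-1}$, not $s_t$) and uses the empty-product convention $\beta_{t,t-1} = 1$ for adjacent blocks, matching equation~\eqref{eq:block_state_clean} where $\bm{x}_t$ depends on $s_{t-1}$ with unit coefficient on $\bm{g}_t$. A clean way to avoid errors is to justify the unrolling rigorously by induction on $t-s$, or equivalently to invoke the already-derived Neumann-series form of the coarse transfer operator $\bm{T} = (\bm{I}_b - \bm{C}\bm{Z}_b)^{-1}$, whose off-diagonal entry $\bm{T}_{t',s}$ equals precisely the product of intervening $c$'s by the same subdiagonal computation used for $\bm{L}$ in equation~\eqref{eq:2.3}; identifying $s_{t-1}$'s coefficient on $v_s$ with $\bm{T}_{t-1,s}$ then pins down $\beta_{t,s}$ without manual telescoping.
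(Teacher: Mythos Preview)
Your proposal is correct and follows essentially the same approach as the paper: equate the two expressions for the inter-block contribution (from \eqref{eq:block_state_blocks} and \eqref{eq:block_state_clean}), unroll the carrier recurrence to express $s_{t-1}$ as $\sum_{s=1}^{t-1}\beta_{t,s}\,\bm{r}_s^\top\bm{u}_s$, substitute, and read off $\bm{F}_{t,s}$ termwise. Your remarks on the empty-product convention for $\beta_{t,t-1}$ and the alternative identification via $\bm{T}_{t-1,s}$ are exactly the care the argument needs and mirror the paper's own use of the coarse transfer operator.
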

        \begin{proof}
        The matrix and recurrence views must agree: 
        \begin{equation}
            \sum_{s=1}^{t-1} \bm{F}_{t,s} \bm{u}_s = \bm{g}_t s_{t-1}.
        \end{equation}
        Solving the carrier recurrence from $s_0 = 0$ yields
        \begin{equation}
        s_{t-1} = \sum_{s=1}^{t-1} \beta_{t,s} v_s = \sum_{s=1}^{t-1} \beta_{t,s} (\bm{r}_s^\top \bm{u}_s).
        \end{equation}
        Hence, we have
        \begin{equation} 
            \begin{aligned}
            \sum_{s=1}^{t-1} \bm{F}_{t,s} \bm{u}_s &= \bm{g}_t \sum_{s=1}^{t-1} \beta_{t,s} (\bm{r}_s^\top \bm{u}_s)\\
            &= \sum_{s=1}^{t-1} (\bm{g}_t \beta_{t,s} \bm{r}_s^\top) \bm{u}_s.
            \end{aligned}
        \end{equation}
        \end{proof}
        The factorization $\bm{F}_{t,s} = \bm{g}_t \beta_{t,s} \bm{r}_s^\top$ encodes the complete information flow: $\bm{r}_s$ extracts the carrier from block $s$, $\beta_{t,s}$ attenuates it through intermediate blocks, and $\bm{g}_t$ broadcasts it into block $t$. 
\end{tcolorbox}

These relations organize the computation into a two-level hierarchy: within each block, $\bm{x}_{t,k}=\bm{a}_{t,k}\bm{x}_{t,k-1}+\bm{u}_{t,k}$ with boundary condition $\bm{x}_{t,1}=s_{t-1}$; across blocks, the carrier evolves as $s_t=\bm{a}_{t,\ell:1}s_{t-1}+\bm{r}_t^\top \bm{u}_t$ with $s_1=0$.

Collecting the pieces, equations~\eqref{eq:block_state_clean}--\eqref{eq:carrier_evolution} show that inter-block influence is mediated entirely by the scalar carrier, and Theorem~\ref{thm:low_rank} expresses each off-diagonal tile as a rank-one map. Bundling these ingredients yields a hierarchical decomposition of the transfer operator (Figure~\ref{fig:matrix_operator_comparison}). Introduce the block-diagonal aggregations $\bm{\mathcal{L}} := \diag(\bm{L}_1, \ldots, \bm{L}_b)$, $\bm{G} := \diag(\bm{g}_1, \ldots, \bm{g}_b)$, and $\bm{R} := \diag(\bm{r}_1^\top, \ldots, \bm{r}_b^\top)$, and let $\bm{T} = (\bm{I}_b - \bm{C}\bm{Z}_b)^{-1}$ be the carrier transfer operator.

\begin{theorem}[Hierarchical Decomposition]\label{thm:hierarchical}
With the notation above, the global transfer operator admits the exact decomposition
\begin{equation}\label{eq:hierarchical_decomposition}
\bm{L} = \bm{\mathcal{L}} \;+\; \bm{G}\,\bm{Z}_b\,\bm{T}\,\bm{R}.
\end{equation}
The first term collects independent intra-block solves; the second routes carriers across blocks.
\end{theorem}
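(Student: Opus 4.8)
The plan is to prove the operator identity~\eqref{eq:hierarchical_decomposition} by verifying that both sides act identically on an arbitrary input $u \in \RR^n$; since both sides are linear maps on $\RR^n$, agreement on all $u$ is equivalent to equality. I would first peel off the block-diagonal term: by construction the $t$-th block of $\bm{\mathcal{L}}u$ equals $\bm{L}_t\bm{u}_t$, which is precisely the intra-block contribution in the block state equation~\eqref{eq:block_state_clean}. It therefore remains to show that $\bm{G}\,\bm{Z}_b\,\bm{T}\,\bm{R}$ reproduces the inter-block term $\bm{g}_t s_{t-1}$ in every block.

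First I would trace the factors right-to-left. Applying $\bm{R} = \diag(\bm{r}_1^\top,\ldots,\bm{r}_b^\top)$ collapses each block of $u$ to the effective scalar input $v_t = \bm{r}_t^\top \bm{u}_t$, yielding the carrier forcing vector $\bm{v} = (v_1,\ldots,v_b)$. Multiplying by $\bm{T} = (\bm{I}_b - \bm{C}\bm{Z}_b)^{-1}$ then solves the coarse recurrence~\eqref{eq:carrier_evolution}, producing the carrier vector $\bm{s} = \bm{T}\bm{v} = (s_1,\ldots,s_b)$. The down-shift $\bm{Z}_b$ sends $\bm{s} \mapsto (0, s_1, \ldots, s_{b-1})$, so the entry feeding block $t$ becomes $s_{t-1}$ (with $s_0 = 0$ for the first block), and finally $\bm{G} = \diag(\bm{g}_1,\ldots,\bm{g}_b)$ broadcasts each shifted carrier into its block, giving $t$-th block $\bm{g}_t s_{t-1}$. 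Summing the two contributions recovers $\bm{x}_t = \bm{L}_t\bm{u}_t + \bm{g}_t s_{t-1}$, which is exactly~\eqref{eq:block_state_clean}; as this holds for every $u$, the operator identity follows.

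As an equivalent cross-check I would verify the identity block-by-block. Because $\bm{G}$ and $\bm{R}$ are block-diagonal, the $(t,s)$ block of $\bm{G}\,\bm{Z}_b\,\bm{T}\,\bm{R}$ factors as $\bm{g}_t\,(\bm{Z}_b\bm{T})_{t,s}\,\bm{r}_s^\top$, where $(\bm{Z}_b\bm{T})_{t,s}$ is a scalar. Using $(\bm{Z}_b)_{t,k} = \delta_{t,k+1}$ gives $(\bm{Z}_b\bm{T})_{t,s} = \bm{T}_{t-1,s}$, and the explicit entries of the carrier transfer operator (the same Neumann-series formula as for $\bm{L}$, now applied to $\bm{C}\bm{Z}_b$) yield $\bm{T}_{t-1,s} = c_{t-1}\cdots c_{s+1} = \beta_{t,s}$ for $t>s$ and $0$ otherwise. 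This matches $\bm{F}_{t,s} = \bm{g}_t\beta_{t,s}\bm{r}_s^\top$ from Theorem~\ref{thm:low_rank}, while the diagonal blocks vanish since $\bm{T}_{t-1,t}=0$ by lower-triangularity; adding $\bm{\mathcal{L}}$ then restores the diagonal blocks $\bm{L}_t$.

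The one place demanding care---the main obstacle, such as it is---is the index bookkeeping around $\bm{Z}_b$: one must confirm it realizes the \emph{previous}-block carrier $s_{t-1}$ rather than $s_t$, and correctly enforces $s_0=0$ at the first block, so that the alignment between $\beta_{t,s}=c_{s+1}\cdots c_{t-1}$ and $\bm{T}_{t-1,s}=c_{t-1}\cdots c_{s+1}$ is exact. Everything else is assembly of results already in hand: the block-triangular structure (Theorem~\ref{thm:block_structure}), the rank-one factorization (Theorem~\ref{thm:low_rank}), and the scalar carrier recurrence~\eqref{eq:carrier_evolution}.
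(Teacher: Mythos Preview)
Your proposal is correct and in fact contains the paper's own proof as your ``cross-check'': the paper verifies the identity block-by-block, computing the $(t,s)$ block of $\bm{G}\bm{Z}_b\bm{T}\bm{R}$ as $\bm{g}_t\bm{T}_{t-1,s}\bm{r}_s^\top$ and identifying $\bm{T}_{t-1,s}=\beta_{t,s}$, exactly as you do. Your first argument (tracing the action on an arbitrary $u$ through $\bm{R}$, $\bm{T}$, $\bm{Z}_b$, $\bm{G}$) is an equivalent operational rephrasing that the paper does not spell out, but it adds no new content beyond what is already implicit in equations~\eqref{eq:block_state_clean}--\eqref{eq:carrier_evolution}.
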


\begin{proof}
The $(t,s)$ block equals $\bm{L}_t$ when $t=s$ (from $\bm{\mathcal{L}}$). For $t>s$, the second term contributes $\bm{g}_t[\bm{Z}_b\bm{T}]_{t,s}\,\bm{r}_s^\top = \bm{g}_t\,\bm{T}_{t-1,s}\,\bm{r}_s^\top$. Since $\bm{T}=(\bm{I}_b-\bm{C}\bm{Z}_b)^{-1}$ resolves the carrier recurrence \eqref{eq:carrier_evolution}, we have $\bm{T}_{t-1,s}=\beta_{t,s}$, yielding $\bm{F}_{t,s}=\bm{g}_t\beta_{t,s}\bm{r}_s^\top$.
\end{proof}
    
\input{matrix_operator_comparison}
 
\subsubsection{From Factorization to Computation}\label{sec:from_factorization_to_computation}

The hierarchical factorization~\eqref{eq:hierarchical_decomposition}, translates directly into a structured, parallel algorithm for evaluating the linear recurrence (Algorithm~\ref{alg:hierarchical_solver}). Each algebraic term maps precisely to a computational phase: the block-diagonal $\bm{\mathcal{L}}$ corresponds to parallel local solves (Stage I); the carrier transfer operator $\bm{T}$ drives global propagation (Stage II); and the rank-one factors $\bm{G}$ and $\bm{R}$ mediate the communication between these stages (Stage I extraction and Stage III reconstruction).

This structure is inherently aligned with modern memory hierarchies on AI accelerators. Local blocks are sized to saturate high-bandwidth cache or shared memory, while the compressed carrier system minimizes traffic across slower global memory interconnects.

\begin{algorithm}[h]
    \caption{\textsc{Hierarchical Block Parallel Inference}}
    \label{alg:hierarchical_solver}
    \begin{algorithmic}[1]
        \Require~Block structure $n=b\ell$, inputs $(\bm{u}_t)_{t=1}^b$, coefficients $(\bm{a}_t)_{t=1}^b$
        \Ensure State sequence $(\bm{x}_t)_{t=1}^b$

        \Statex {\textsc{Stage I: Local solves and interface extraction}}
        \ForAll{$t \in [b]$ \textbf{in parallel}}
            \State $\bm{w}_t \gets \bm{L}_t\bm{u}_t$ \label{line:local_solve} \Comment{Solve local recurrence}
            \State $\bm{g}_t \gets [\bm{a}_{t,1}, \bm{a}_{t,2:1}, \ldots, \bm{a}_{t,\ell:1}]^\top$ \label{line:g_compute} \Comment{Propagation factors}
            \State $v_t \gets \bm{w}_{t,\ell}$; \quad $c_t \gets \bm{g}_{t,\ell}$ \Comment{Extract interface}
        \EndFor
        
        \Statex {\textsc{Stage II: Global carrier recurrence}}
        \State Form $\bm{C} = \diag(c_1, \ldots, c_b)$ and $\bm{v} = [v_1, \ldots, v_b]^\top$
        \State $\bm{s} \gets (\bm{I}_b - \bm{C} \bm{Z}_b)^{-1} \bm{v}$ \label{line:global_solve} \Comment{Solve carrier system}
        
        \Statex {\textsc{Stage III: Reconstruction}}
        \State $s_0 \gets 0$
        \ForAll{$t \in [b]$ \textbf{in parallel}}
            \State $\bm{x}_t \gets \bm{w}_t + \bm{g}_t s_{t-1}$ \label{line:reconstruction} \Comment{Combine local and global}
        \EndFor
    \end{algorithmic}
\end{algorithm}

Note that the quantities required for Stage II and III are readily available from the local computation in Stage I. The effective input $v_t$ is the final component of the local state $\bm{w}_t$, $v_t = \bm{w}_{t,\ell}$. Further, if $\bm{L}_t$ is materialized, $\bm{g}_t$ corresponds to its first column scaled by $\bm{a}_{t,1}$, $\bm{g}_t = \bm{a}_{t,1} \bm{L}_t \bm{e}_1$, and $c_t=\bm{g}_{t,\ell}$.

\paragraph{Implementation strategies.} 
The abstract recurrences in Algorithm~\ref{alg:hierarchical_solver} (lines~\ref{line:local_solve} and~\ref{line:global_solve}) can be implemented using different strategies optimized for specific hardware characteristics. While alternatives include sequential scans (work-optimal but serial) and parallel scans (logarithmic depth but higher work; see Section~\ref{sec:matrix_factorizations_and_parallel_algorithms}), we focus on the strategy best suited for modern accelerators: matrix multiplication.

This approach materializes the transfer operators ($\bm{L}_t$ locally, $\bm{T}$ globally) as dense matrices, transforming the recurrence solve into a GEMM.~While this increases the local work complexity to $O(\ell^2)$, compared to the $O(\ell)$ work of a sequential scan, it maximizes arithmetic intensity.

This strategy is particularly potent when the input $\bm{u}$ has a feature dimension $d$ (or other parallel dimensions such as batch size) where the coefficients $\bm{a}$ are shared. The operation becomes a matrix-matrix multiplication, and the $O(\ell^2)$ cost of materializing $\bm{L}_t$ is amortized across the parallel dimensions. This formulation leverages specialized hardware (e.g., tensor cores), often achieving peak utilization where scan-based methods remain bandwidth-limited.

While matrix multiplication is typically optimal locally, if the number of blocks $b$ is exceedingly large, the $O(b^2)$ cost of the global stage may become prohibitive. In such cases, a hybrid approach is viable: using matrix multiplication locally, and a parallel scan globally to resolve the carrier system in $O(b \log b)$ work and $O(\log b)$ depth.

\begin{tcolorbox}[enhanced, breakable, sharp corners, frame hidden, colback=bluegray!15]
    \emph{Numerically stable materialization of semi-separable matrices.}
    The matrix multiplication strategy requires materializing transfer operators whose entries $\bm{L}_{t, ij} = \bm{a}_{t,i:j+1}$ are products of at most $\ell-1$ coefficients. 
    One idea is to exploit the identity $\bm{a}_{t,i:j+1} = \bm{a}_{t,i:1} / \bm{a}_{t,j:1}$ for $i>j$ \citep{vandebril2008matrix}. Equivalently,
    \begin{equation}
        \bm{L}_t = \tril( \bm{g}_t \bm{g}_t^{-\top})
    \end{equation}
    where $\bm{g}^{-1}_t$ is intended to denote the reciprocal of $\bm{g}_t$, $\bm{g}^{-1}_t = (1/a_{t,1}, 1/a_{t,2:1}, \ldots, 1/a_{t,\ell:1})$. 
    Naively forming the cumulative products $\bm{g}_t$ and compute their outer ratio is numerically fragile: when coefficients are contractive, $\bm{g}_{t,i}$ rapidly underflows to zero in low precision formats (e.g., {\sf fp16/bf16}) producing indeterminate forms ($0/0$) even if the ratio $\bm{g}_{t,i}/\bm{g}_{t,j}$ is representable.
    
    A standard remedy is to work with log-prefix sums $\bm{p}_{t,i}=\sum_{k=1}^{i}\log\bm{a}_{t,k}$ and set 
    \begin{equation}
        \log \bm{L}_{t,ij}=\bm{p}_{t,i}-\bm{p}_{t,j} \quad \text{for } i\ge j,
    \end{equation}
    i.e., form the outer difference $\log \bm{L}_{t}=\bm{p}_t\bm{1}^{\top}-\bm{1}\bm{p}_t^{\top}$ and mask the upper triangle (set to $-\infty$). This avoids explicit $\bm{g}_t$ and only exponentiates differences. However, subtracting large, nearly equal prefixes near the diagonal can introduce cancellation. \citet{dao2024mamba2} replaces the outer difference by a masked cumulative \emph{sum} of the segment itself: tile $\log \bm{a}_t$ into a matrix $(\log \bm{a}_t) \otimes \bm{1}$ (repeating each segment $\ell$ times), zero out the inclusive upper triangle, perform a column-wise cumulative sum, then mask the strict upper triangle to $-\infty$ and exponentiate, avoiding subtractive cancellation. Dao's log-space route is particularly natural when the model parameterizes $\bm{a}_{t,i}=\exp(\gamma_{t,i})$ with $\gamma_{t,i}\le 0$ (no extra $\log$ is needed).
    
    In our setting we typically parameterize $\bm{a}_{t,i}=\sigma(u_{t,i})\in(0,1)$ with a sigmoid (see Section~\ref{sec:layer}); we do not get the first $\log$ ``for free.'' We therefore introduce a \emph{linear‑space analogue} of Dao's construction that avoids subtraction and avoids unnecessary $\log/\exp$ transforms.~Expand $\bm{a}_t$ as $\bm{a}_t \otimes \bm{1}$, set the inclusive upper triangle to the multiplicative identity $1$, take a column-wise cumulative \emph{product}, and finally zero the strict upper triangle.

    This achieves the materialization robustly with complexity $O(b\ell^2)$. Zeros in $\bm{a}_t$ propagate correctly (downstream products become exactly $0$). In practice, one could further accumulate in {\sf fp32} to extend dynamic range; empirically, we find the linear-space variant to match the intended semantics more closely and to be robust at the block sizes we use (see Figure~\ref{fig:l_construction_percentage_errors}). Note that the same construction applies to the global carrier $\bm{T}$. In optimized CUDA implementations, this algorithm is executed by a single warp, spawning one warp per chunk $t$ of the sequence.

    \vspace{0.5em}
    \hrule
    \vspace{0.3em}
    \begingroup
    \captionsetup{singlelinecheck=false}
    \captionof{algorithm}{\textsc{Direct (Linear-Space) Materialization of Transfer Operator}}
    \label{alg:lin_space_materialization}
    \endgroup
    \vspace{-0.9em}
    \hrule
    \vspace{0.1em}
    \begin{algorithmic}[1]
    \Require Coefficients $\bm{a}_t \in \mathbb{R}^{\ell}$ 
    \Ensure Transfer operator $\bm{L}_t \in \mathbb{R}^{\ell\times \ell}$
    \State $A \gets \bm{a}_t \bm{1}^\top$ \Comment{Tile along columns: $A_{ij}=a_{t,i}$}
    \State $A_{ij}\gets 1$ for $i \le j$ \Comment{Pre-mask with multiplicative identity}
    \State $P \gets {\sf CumProd}_{\downarrow}(A)$ \Comment{Column-wise cumulative product}
    \State $\bm{L}_t \gets \tril(P)$ \Comment{Causal lower triangle; strict upper set to $0$}
    \end{algorithmic}
    \vspace{0.2em}
    \hrule
    \vspace{0.5em}
\end{tcolorbox}

\input{l_construction_percentage_errors.tex}

\paragraph{Complexity and hardware utilization.}
We focus on the primary strategy utilizing matrix multiplication for both stages, assuming a parallel feature dimension $d$. The total computational cost is dominated by the matrix multiplications. Locally, the cost is $O(b(\ell^2 d + \ell^2))$, where $b\ell^2 d$ is the cost of the multiplications and $b\ell^2$ is the cost of materialization across all $b$ blocks. Globally, the cost is $O(b^2 d + b^2)$.

While the total FLOP count exceeds the $O(n d)$ cost of a sequential scan by a factor related to $\ell$ and $b$, the arithmetic intensity compensates significantly. For the local stage, the arithmetic intensity scales favorably with both $\ell$ and $d$. This high intensity is the key to achieving high throughput on modern GPUs, enabling effective utilization of specialized matrix-multiplication hardware. If the global cost dominates due to a very large $b$, utilizing a parallel scan globally reduces the global work to $O(bd \log b)$.

\begin{tcolorbox}[enhanced, breakable, sharp corners, frame hidden]
    \emph{Hierarchical algorithms in practice.} \citet{dao2024mamba2} implements local solves as matmul kernels that can target Tensor Cores via \texttt{wmma} on SM90/SM100, using mixed precision (e.g., fp16/bf16 with fp32 accumulation) when heads share coefficients across features; it transitions from matmul to scan for the global carrier recurrence above a certain number of blocks. By contrast, other designs such as \citet{yang2023gated} or \citet{yang2024gated} often favor scan-based local aggregation in full precision, paired with short-range or sequential carrier propagation; this trades reduced global communication for extra depth and does not make use of Tensor Cores.
    Note that even \texttt{cub::DeviceScan} is hierarchical \citep{merrill2016single}, so practical kernels de facto nest multiple hierarchies. As sequences grow, the carrier recurrence over $b$ blocks can itself become a bottleneck (motivating parallel scans globally, or further nesting). Because the carrier system possesses the same algebraic structure as the original recurrence, it can be recursively block-partitioned and factorized. This multi-level nesting aligns the computation with deeper hardware hierarchies (e.g., thread blocks, SMs, devices), preserving the rank-one structure between levels.
\end{tcolorbox}

While hierarchical algorithms successfully reduce communication volume compared to flat scans and maintain logarithmic or even constant depth---avoiding the $O(n)$ depth of sequential global propagation---they cannot 
eliminate global synchronization entirely. The carrier recurrence, though compressed, still requires communication across all $b$ blocks. Moreover, this global coupling becomes numerically delicate when $c_t = \bm{a}_{t,\ell:1}$ approaches machine precision for large block counts.

To achieve truly local computation, we must sacrifice the global range of the recurrence. From a representation perspective, this trade-off is well-motivated: stable linear recurrences---the only ones trained in practice---exhibit exponential decay with range, manifesting as rapidly diminishing entries along the subdiagonals of the transfer operator. This natural decay suggests that long-range dependencies contribute negligibly to the solution, making their truncation both theoretically justified and practically benign.

A naive truncation to purely independent blocks would sever all inter-block dependencies, yielding unacceptable accuracy. Instead, we pursue a principled middle ground: retaining nearest-neighbor communication between adjacent blocks while eliminating global synchronization. This is accomplished by early-stopping the carrier system after its first term---equivalent to approximating $\bm{T} \approx \bm{I}_b$ in our matrix formulation. The resulting operator preserves the first block-subdiagonal, achieving constant $O(1)$ depth, linear $O(n)$ work, and exclusively local communication patterns. This leads to the \emph{Sliding Window Recurrence} variants developed in the next section.
\section{Sliding Window Recurrences} \label{sec:sliding_window_recurrences}

The hierarchical algorithms of Section~\ref
{sec:hierarchical_decomposition_and_algorithms} achieve 
substantial reductions in communication volume while 
maintaining logarithmic depth. Yet, they do not eliminate the persistent trade-off between algorithmic depth and parallelizability versus communication efficiency: the scalar carrier recurrence still necessitates global synchronization across all $b$ blocks, emerging as a scalability bottleneck for increasingly long sequences.

These challenges compel us to reconsider the fundamental trade-off between global context and efficiency. In stable linear systems---the only ones trained in practice---where $|a_i| \le \rho < 1$, the transfer operator exhibits a remarkable property: its entries decay exponentially along subdiagonals. Specifically, the influence of input $u_j$ on state $x_i$ is bounded by $\rho^{i-j}$, vanishing rapidly with temporal lag $i-j$. This structure suggests that enforcing global dependencies may be unnecessarily conservative since distant inputs contribute negligibly to local state evolution.

This insight motivates \emph{Sliding Window Recurrences} (SWRs), a family of algorithms that strategically truncate the computational horizon to achieve local, embarrassingly parallel computation. Rather than computing the full dense transfer operator $\bm{L}$, we construct structured approximations that preserve essential dependencies while discarding those below numerical significance. We introduce two complementary truncation strategies:

\begin{itemize}
    \item[\emph{i.}] \textbf{Uniform Window.} Early termination of the flat parallel scan algorithm \ref{alg:matrix-doubling} yields a banded transfer operator capturing dependencies up to a fixed lag $k$. While theoretically appealing with $O(n\log k)$ work and $O(\log k)$ depth, this method inherits the communication-bound characteristics of its parent algorithm. On hierarchical hardware, it offers limited practical advantage over hierarchical appraches, serving primarily as a theoretical baseline.
    
    \item[\emph{ii.}] \textbf{Jagged Window.} Strategic truncation of the hierarchical carrier system—specifically, the approximation $\bm{T} \approx \bm{I}_b$—yields a block-bidiagonal structure that preserves all intra-block and adjacent-block couplings. The resulting Block Two-Pass (B2P) algorithm eliminates global synchronization entirely, achieving constant $O(1)$ depth with purely local communication. This approach maintains the architectural advantages of hierarchical decomposition while enabling massive parallelism.
\end{itemize}

Our development prioritizes the jagged window variant, which aligns naturally with the memory hierarchies and execution models of modern accelerators. The uniform window, while included for completeness, serves primarily to establish theoretical context and performance bounds. The remainder of this section formalizes the computational horizon, presents both truncation strategies with their algorithmic realizations, and quantifies the accuracy-efficiency trade-offs through rigorous error analysis.
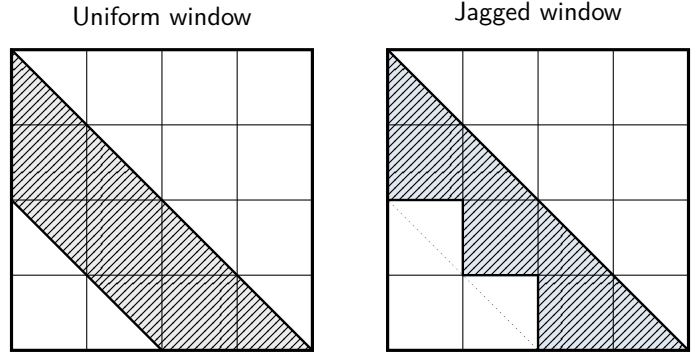
\begin{figure}[bt!]
\centering
\begin{tikzpicture}[x=1cm,y=1cm,line cap=round]
    \begin{scope}
        \begin{scope}
            \clip (0,0) rectangle (4,4);
            \fill[greengray!20]
                 (0,4) -- (4,0) -- (2,0) -- (0,2) -- cycle;
            \fill[pattern=north east lines, pattern color=black]
                 (0,4) -- (4,0) -- (2,0) -- (0,2) -- cycle;
        \end{scope}
        
        \draw[very thick] (0,0) rectangle (4,4);
        \foreach \x in {1,2,3} \draw (\x,0) -- (\x,4);
        \foreach \y in {1,2,3} \draw (0,\y) -- (4,\y);

        \draw[line width=0.9pt] (0,4) -- (4,0); is
        \draw[line width=0.9pt] (0,2) -- (2,0);

        \node[font=\sf, anchor=south] at (2,4.2) {Uniform window};
    \end{scope}

    \begin{scope}[xshift=5cm]
        \begin{scope}
            \clip (0,0) rectangle (4,4);
            \fill[bluegray!20, even odd rule]
                 (0,4) -- (4,0) -- (2,0) -- (0,2) -- cycle
                 (0,2) -- (1,2) -- (1,1) -- cycle
                 (1,1) -- (2,1) -- (2,0) -- cycle;
            \fill[pattern=north east lines, pattern color=black, even odd rule]
                 (0,4) -- (4,0) -- (2,0) -- (0,2) -- cycle
                 (0,2) -- (1,2) -- (1,1) -- cycle
                 (1,1) -- (2,1) -- (2,0) -- cycle;
        \end{scope}
        
        \draw[very thick] (0,0) rectangle (4,4);
        \foreach \x in {1,2,3} \draw (\x,0) -- (\x,4);
        \foreach \y in {1,2,3} \draw (0,\y) -- (4,\y);

        \draw[line width=0.9pt] (0,4) -- (4,0);
        \draw[line width=0.9pt] (0,2) -- (1,2);
        \draw[line width=0.9pt] (1,2) -- (1,1);
        \draw[line width=0.9pt] (1,1) -- (2,1);
        \draw[line width=0.9pt] (2,1) -- (2,0);

        \node[font=\sf, anchor=south] at (2,4.2) {Jagged window};
    \end{scope}
\end{tikzpicture}
\caption{\small Comparison of the sparsity pattern of uniform and jagged sliding window recurrences. Hatched regions indicate the included support.}\label{fig:band-vs-jagged}
\end{figure} 
\begin{tcolorbox}[enhanced, breakable, sharp corners, frame hidden, colback=bluegray!15]
    \emph{Computational Horizon.} In contractive systems where $|a_i| \le \rho < 1$ for some contraction rate $\rho$, the dependencies in the transfer operator $\bm{L}$ decay exponentially with distance. This decay establishes a {computational horizon}---a finite effective range beyond which distant inputs have negligible influence on the current state. By exploiting this structure through early stopping in parallel algorithms, we can achieve significant computational efficiencies while maintaining controlled accuracy.

    The influence of input $u_j$ on state $x_i$ (for $i > j$) is given by the product $a_{i:j}$, whose magnitude is bounded by $|a_{i:j}| \le \rho^{i-j}$. As the lag $\ell = i - j$ increases, this influence diminishes exponentially. To quantify the computational horizon, consider a desired accuracy level $\varepsilon > 0$. A pointwise bound requires the minimal $k_\varepsilon$ such that $\rho^\ell < \varepsilon$ for all $\ell > k_\varepsilon$, which yields
    \begin{equation}
        k_\varepsilon = \left\lceil \frac{\log \varepsilon}{\log \rho} \right\rceil,
    \end{equation}
    since $\log \rho < 0$ ensures subsequent terms remain smaller. However, for a more precise control over the cumulative error in the state, we consider the tail sum bound: the effective bandwidth $k$ is the smallest integer satisfying
    \begin{equation}\label{eq:effective_bandwidth}
        \sum_{\ell=k+1}^\infty \rho^\ell = \frac{\rho^{k+1}}{1-\rho} < \varepsilon. 
    \end{equation}
    Solving gives $k = \left\lceil {\log(\varepsilon (1-\rho))}/{\log \rho} \right\rceil - 1$. This tail bound, assuming bounded inputs $|u_i| \le \nu$, ensures the total contribution from distant inputs is below $\varepsilon \nu$.

    Notably, finite precision imposes an absolute ceiling on the effective range, determined by the largest lag before the product of coefficients underflows to zero. The slowest possible decay, corresponding to the largest contraction factor $\rho < 1$ representable in a given format, sets this upper bound. This maximum horizon is therefore an intrinsic property of the number system itself.   

    \vspace{0.5em}
    \centering
    \scalebox{0.925}{
        \begin{tabular}{lrrrllllllll} 
        \toprule
        Format &  $p$ &  $e$ bits &  Bias &     $\rho = 1 - 2^{-p-1}$  & $\epsilon$ & $\epsilon \cdot2^{-p}$ &  $k$ (normal) &  $k$ (with subnormals) \\
        \midrule
        \rowcolor{bluegray!30} {\sf fp32} &               23 &         8 &   127 & 0.999999940395$...$ &                   $2^{-126}$ &                $2^{-149}$ &                 1465264032 &                          1732732863                              \\
        {\sf fp16} &               10 &         5 &    15 & 0.999511718750 &                    $2^{-14}$ &                 $2^{-24}$ &                      19869 &                               34061                              \\
                \rowcolor{bluegray!30} {\sf bf16} &                7 &         8 &   127 & 0.996093750000 &                   $2^{-126}$ &                $2^{-133}$ &                      22314 &                               23554                              \\
                    {\sf fp8 e5m2} &                2 &         5 &    15 & 0.875000000000 &                    $2^{-14}$ &                 $2^{-16}$ &                         72 &                                  83                              \\
                    \rowcolor{bluegray!30} {\sf fp8 e4m3} &                3 &         4 &     7 & 0.937500000000 &                     $2^{-6}$ &                  $2^{-9}$ &                         64 &                                  96                             \\
        \bottomrule
        \end{tabular}
    }
\end{tcolorbox}
\input{computational_horizon_tail_bound_combined.tex}
\subsection{Uniform Window Recurrences}\label{sec:uniform_window}

The uniform window approach truncates the Kogge-Stone factorization at a predetermined stage, yielding a banded approximation of the transfer operator. While conceptually straightforward, this strategy inherits the communication patterns of flat algorithms, limiting its practical utility on hierarchical hardware. We present it briefly for theoretical completeness.

At stage $\log_2 k$ of the Kogge-Stone algorithm, coefficient products have decayed to magnitude below $\rho^{k}$. When this falls below threshold $\varepsilon$, subsequent stages contribute negligibly. Terminating after $\log_2 k$ stages yields the truncated operator:
\begin{equation}\label{eq:banded_L}
    \tilde{\bm{L}} = \bm{I} + \bm{A}\bm{Z} + \dotsm + (\bm{A}\bm{Z})^{k-1} 
\end{equation}

This captures dependencies up to lag $k$, achieving work complexity $O(n \log k)$ and depth $O(\log k)$. For fixed bandwidth $k$, this reduces to linear work with constant depth. Figure~\ref{fig:uniform_window_comparison} illustrates this banded structure.
\begin{figure}[t!]
\centering
\begin{tikzpicture}[scale=1.0]
    \def\matrixsize{16}      
    \def\cellsize{0.25}      
    \def\bandwidth{5}        
    \def\gap{1}              
    
    \definecolor{darkgray}{RGB}{80,80,80}
    
    \pgfmathsetmacro{\totalmatrixsize}{\matrixsize*\cellsize}
    
    \input{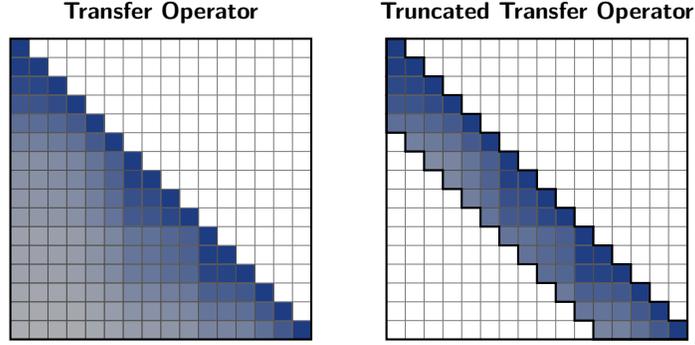}
    
    \def\Astartx{0}
    \def\Astarty{0} 
    
    \fill[white] (\Astartx, \Astarty) rectangle (\Astartx + \totalmatrixsize, \Astarty + \totalmatrixsize);
    
    \foreach \i in {1,...,\matrixsize} {
        \foreach \j in {1,...,\i} {
            \pgfmathsetmacro{\rowindex}{\i - 1}
            \pgfmathsetmacro{\colindex}{\j - 1}
            \pgfmathsetmacro{\cellrowstart}{\rowindex*\cellsize}
            \pgfmathsetmacro{\cellrowend}{\cellrowstart + \cellsize}
            \pgfmathsetmacro{\cellcolstart}{\colindex*\cellsize}
            \pgfmathsetmacro{\cellcolend}{\cellcolstart + \cellsize}
            
            \pgfmathtruncatemacro{\ii}{\i - 1}
            \pgfmathtruncatemacro{\jj}{\j - 1}
            \ifcsname valLR\ii\jj\endcsname
                \pgfmathsetmacro{\tval}{\csname valLR\ii\jj\endcsname}
                \pgfmathtruncatemacro{\rval}{180 - \tval * (180 - 30)}
                \pgfmathtruncatemacro{\gval}{180 - \tval * (180 - 60)}
                \pgfmathtruncatemacro{\bval}{180 - \tval * (180 - 130)}
                \definecolor{currentcolor}{RGB}{\rval, \gval, \bval}
                \fill[currentcolor] (\Astartx + \cellcolstart, \Astarty + \totalmatrixsize - \cellrowend) 
                    rectangle (\Astartx + \cellcolend, \Astarty + \totalmatrixsize - \cellrowstart);
            \fi
        }
    }
    
    \foreach \i in {1,...,\matrixsize} {
        \pgfmathsetmacro{\linepos}{\i*\cellsize}
        \draw[thin, gray] (\Astartx + \linepos, \Astarty) -- (\Astartx + \linepos, \Astarty + \totalmatrixsize);
        \draw[thin, gray] (\Astartx, \Astarty + \linepos) -- (\Astartx + \totalmatrixsize, \Astarty + \linepos);
    }
    
    \foreach \i in {1,...,\matrixsize} {
        \foreach \j in {1,...,\i} {
            \pgfmathsetmacro{\rowindex}{\i - 1}
            \pgfmathsetmacro{\colindex}{\j - 1}
            \pgfmathsetmacro{\cellrowstart}{\rowindex*\cellsize}
            \pgfmathsetmacro{\cellrowend}{\cellrowstart + \cellsize}
            \pgfmathsetmacro{\cellcolstart}{\colindex*\cellsize}
            \pgfmathsetmacro{\cellcolend}{\cellcolstart + \cellsize}
            \draw[thin, darkgray] (\Astartx + \cellcolend, \Astarty + \totalmatrixsize - \cellrowstart) -- 
                (\Astartx + \cellcolend, \Astarty + \totalmatrixsize - \cellrowend);
            \draw[thin, darkgray] (\Astartx + \cellcolstart, \Astarty + \totalmatrixsize - \cellrowend) -- 
                (\Astartx + \cellcolend, \Astarty + \totalmatrixsize - \cellrowend);
        }
    }
    
    \draw[thick] (\Astartx, \Astarty) rectangle (\Astartx + \totalmatrixsize, \Astarty + \totalmatrixsize);
    \node[font=\sffamily\bfseries\small] at (\Astartx + 0.5*\totalmatrixsize, \Astarty + \totalmatrixsize + 0.35) {Transfer Operator};
    
    \pgfmathsetmacro{\Bstartx}{\Astartx + \totalmatrixsize + \gap}
    \def\Bstarty{\Astarty}
    
    \fill[white] (\Bstartx, \Bstarty) rectangle (\Bstartx + \totalmatrixsize, \Bstarty + \totalmatrixsize);
    
    \foreach \i in {1,...,\matrixsize} {
        \foreach \j in {1,...,\i} {
            \pgfmathtruncatemacro{\dist}{\i - \j}
            \ifnum\dist<\bandwidth\relax
                \pgfmathsetmacro{\rowindex}{\i - 1}
                \pgfmathsetmacro{\colindex}{\j - 1}
                \pgfmathsetmacro{\cellrowstart}{\rowindex*\cellsize}
                \pgfmathsetmacro{\cellrowend}{\cellrowstart + \cellsize}
                \pgfmathsetmacro{\cellcolstart}{\colindex*\cellsize}
                \pgfmathsetmacro{\cellcolend}{\cellcolstart + \cellsize}
                
                \pgfmathtruncatemacro{\ii}{\i - 1}
                \pgfmathtruncatemacro{\jj}{\j - 1}
                \ifcsname valLR\ii\jj\endcsname
                    \pgfmathsetmacro{\tval}{\csname valLR\ii\jj\endcsname}
                    \pgfmathtruncatemacro{\rval}{180 - \tval * (180 - 30)}
                    \pgfmathtruncatemacro{\gval}{180 - \tval * (180 - 60)}
                    \pgfmathtruncatemacro{\bval}{180 - \tval * (180 - 130)}
                    \definecolor{currentcolor}{RGB}{\rval, \gval, \bval}
                    \fill[currentcolor] (\Bstartx + \cellcolstart, \Bstarty + \totalmatrixsize - \cellrowend) 
                        rectangle (\Bstartx + \cellcolend, \Bstarty + \totalmatrixsize - \cellrowstart);
                \fi
            \fi
        }
    }
    
    \foreach \i in {1,...,\matrixsize} {
        \pgfmathsetmacro{\linepos}{\i*\cellsize}
        \draw[thin, gray] (\Bstartx + \linepos, \Bstarty) -- (\Bstartx + \linepos, \Bstarty + \totalmatrixsize);
        \draw[thin, gray] (\Bstartx, \Bstarty + \linepos) -- (\Bstartx + \totalmatrixsize, \Bstarty + \linepos);
    }
    
    \foreach \i in {1,...,\matrixsize} {
        \foreach \j in {1,...,\i} {
            \pgfmathtruncatemacro{\dist}{\i - \j}
            \ifnum\dist<\bandwidth\relax
                \pgfmathsetmacro{\rowindex}{\i - 1}
                \pgfmathsetmacro{\colindex}{\j - 1}
                \pgfmathsetmacro{\cellrowstart}{\rowindex*\cellsize}
                \pgfmathsetmacro{\cellrowend}{\cellrowstart + \cellsize}
                \pgfmathsetmacro{\cellcolstart}{\colindex*\cellsize}
                \pgfmathsetmacro{\cellcolend}{\cellcolstart + \cellsize}
                \draw[thin, darkgray] (\Bstartx + \cellcolend, \Bstarty + \totalmatrixsize - \cellrowstart) -- 
                    (\Bstartx + \cellcolend, \Bstarty + \totalmatrixsize - \cellrowend);
                \draw[thin, darkgray] (\Bstartx + \cellcolstart, \Bstarty + \totalmatrixsize - \cellrowend) -- 
                    (\Bstartx + \cellcolend, \Bstarty + \totalmatrixsize - \cellrowend);
            \fi
        }
    }
    
    \pgfmathsetmacro{\overlap}{0.01}
    \foreach \i in {1,...,\matrixsize} {
        \foreach \j in {1,...,\i} {
            \pgfmathtruncatemacro{\dist}{\i - \j}
            \ifnum\dist<\bandwidth\relax
                \pgfmathsetmacro{\rowindex}{\i - 1}
                \pgfmathsetmacro{\colindex}{\j - 1}
                \pgfmathsetmacro{\cellrowstart}{\rowindex*\cellsize}
                \pgfmathsetmacro{\cellrowend}{\cellrowstart + \cellsize}
                \pgfmathsetmacro{\cellcolstart}{\colindex*\cellsize}
                \pgfmathsetmacro{\cellcolend}{\cellcolstart + \cellsize}
                
                \pgfmathtruncatemacro{\iprev}{\i - 1}
                \pgfmathtruncatemacro{\distabove}{\iprev - \j}
                \pgfmathparse{(\iprev < \j || \distabove >= \bandwidth) ? 1 : 0}
                \ifnum\pgfmathresult=1
                    \draw[thick, black] (\Bstartx + \cellcolstart - \overlap, \Bstarty + \totalmatrixsize - \cellrowstart) -- 
                        (\Bstartx + \cellcolend + \overlap, \Bstarty + \totalmatrixsize - \cellrowstart);
                \fi
                
                \pgfmathtruncatemacro{\jprev}{\j - 1}
                \pgfmathtruncatemacro{\distleft}{\i - \jprev}
                \pgfmathparse{(\jprev < 1 || \distleft >= \bandwidth) ? 1 : 0}
                \ifnum\pgfmathresult=1
                    \draw[thick, black] (\Bstartx + \cellcolstart, \Bstarty + \totalmatrixsize - \cellrowstart + \overlap) -- 
                        (\Bstartx + \cellcolstart, \Bstarty + \totalmatrixsize - \cellrowend - \overlap);
                \fi
                
                \pgfmathtruncatemacro{\inext}{\i + 1}
                \pgfmathtruncatemacro{\distbelow}{\inext - \j}
                \pgfmathparse{(\inext > \matrixsize || \distbelow >= \bandwidth) ? 1 : 0}
                \ifnum\pgfmathresult=1
                    \draw[thick, black] (\Bstartx + \cellcolstart - \overlap, \Bstarty + \totalmatrixsize - \cellrowend) -- 
                        (\Bstartx + \cellcolend + \overlap, \Bstarty + \totalmatrixsize - \cellrowend);
                \fi
                
                \pgfmathtruncatemacro{\jnext}{\j + 1}
                \pgfmathtruncatemacro{\distright}{\i - \jnext}
                \pgfmathparse{(\jnext > \i || \distright >= \bandwidth) ? 1 : 0}
                \ifnum\pgfmathresult=1
                    \draw[thick, black] (\Bstartx + \cellcolend, \Bstarty + \totalmatrixsize - \cellrowstart + \overlap) -- 
                        (\Bstartx + \cellcolend, \Bstarty + \totalmatrixsize - \cellrowend - \overlap);
                \fi
            \fi
        }
    }
    
    \draw[thick] (\Bstartx, \Bstarty) rectangle (\Bstartx + \totalmatrixsize, \Bstarty + \totalmatrixsize);
    \node[font=\sffamily\bfseries\small] at (\Bstartx + 0.5*\totalmatrixsize, \Bstarty + \totalmatrixsize + 0.35) {Truncated Transfer Operator};
\end{tikzpicture}
\caption{\small \textbf{Uniform window truncation.} The full transfer operator $\bm{L}$ (left) and its banded approximation $\tilde{\bm{L}}$ (right) from Eq.~\eqref{eq:banded_L}, capturing dependencies up to lag $k=5$. Early termination of the Kogge-Stone factorization yields $O(n \log k)$ work complexity.}
\label{fig:uniform_window_comparison}
\end{figure}
\subsection{Jagged Window Recurrences and the Block Two-Pass Algorithm} \label{sec:jagged_window}

The hierarchical decomposition of Section~\ref{sec:hierarchical_decomposition_and_algorithms} provides a principled framework for locality-preserving truncation. We exploit the algebraic structure of the carrier system to achieve a careful balance: preserving essential local and near-neighbor interactions while eliminating costly global synchronization.

\paragraph{Truncation of the carrier system} The exact hierarchical factorization~\eqref{eq:hierarchical_decomposition} expresses the transfer operator as $\bm{L} = \bm{\mathcal{L}} + \bm{G}\bm{Z}_b\bm{T}\bm{R}$, where inter-block dependencies flow through the carrier transfer operator $\bm{T} = (\bm{I}_b - \bm{C}\bm{Z}_b)^{-1}$. Expanding the Neumann series we obtain:
\begin{equation}
    \bm{T} = \bm{I}_b + \bm{C}\bm{Z}_b + (\bm{C}\bm{Z}_b)^2 + \cdots + (\bm{C}\bm{Z}_b)^{b-1}.
\end{equation}
Each term represents increasingly distant inter-block interactions, attenuated by products of block coefficients. In stable systems, these products decay exponentially---and are likely to be negligible: these are partial products of the coefficients $c_t=\bm{a}_{t,\ell:1}$ which are already bounded by $\rho^{\ell}$---suggesting a natural truncation point. We adopt the most aggressive meaningful approximation: retaining only the identity term, $\bm{T} \approx \bm{I}_b$. This preserves direct influence between adjacent blocks while discarding the accumulative carrier dynamics. It yields the \emph{jagged} window transfer operator:
\begin{equation}\label{eq:truncated_factorization}
    \tilde{\bm{L}} = \bm{\mathcal{L}} + \bm{G} \bm{Z}_b \bm{R}.
\end{equation}
The resulting structure preserves exact dynamics within blocks and the coupling between neighbors, capturing the essential local evolution while sacrificing only the rapidly-decaying global correlations. In practice, captures all dependencies up to lag $\ell$ while the \emph{jagged} part captures part of the dependencies up to lag $2\ell - 1$ (Less error than uniform truncation with $k=\ell$ but more error than uniform truncation with $k=2\ell - 1$). There is therefore no error on the first $2\ell$ time steps. As illustrated in Figure~\ref{fig:matrix_operator_comparison_truncated}, $\tilde{\bm{L}}$ exhibits a distinctive block-bidiagonal structure that we term a ``\emph{jagged}'' window:
\begin{equation}\label{eq:block_bidiagonal} 
    \tilde{\bm{L}} =
    \begin{bmatrix}
        \bm{L}_1 \\
        \bm{F}_{2,1} & \bm{L}_2  & \\
         & \ddots & \ddots \\
        && \bm{F}_{b,b-1} & \bm{L}_b
    \end{bmatrix}, \quad \bm{F}_{t,t-1} = \bm{g}_t \bm{r}_{t-1}^\top
\end{equation}
The block structure is visualized in Figure~\ref{fig:btp_matrix}, showing the diagonal blocks $\bm{L}_t$ and the rank-one off-diagonal blocks $\bm{F}_{t,t-1}$. The corresponding input-to-state mapping simplifies remarkably:
\begin{equation}\label{eq:block_two_pass_map}
    \tilde{\bm{x}}_t = \bm{L}_t \bm{u}_t + \bm{F}_{t,t-1} \bm{u}_{t-1}.
\end{equation}
Each block's state depends only on its own inputs and those of its immediate predecessor.

\input{matrix_operator_comparison_truncated}
\begin{figure}[b]
\centering
\begin{tikzpicture}[scale=1.3]
    \def\blocksize{1}
    
    \def\scalarsize{0.15}  
    \def\vecwidth{0.15}     
    \def\gap{0.05}          
    \def\wallgap{0.08}      
    \def\roundingcoeff{0.5} 
    \def\vecopacity{0.7}    
    
    \definecolor{v1color}{RGB}{250, 200, 150}   
    \definecolor{v2color}{RGB}{230, 160, 110}   
    \definecolor{v3color}{RGB}{210, 120, 70}    
    \definecolor{v4color}{RGB}{180, 80, 30}     
    
    \definecolor{mu1color}{RGB}{180, 200, 220}  
    \definecolor{mu2color}{RGB}{140, 170, 210}  
    \definecolor{mu3color}{RGB}{90, 130, 180}   
    \definecolor{mu4color}{RGB}{50, 80, 130}    

    \definecolor{g1color}{RGB}{250, 210, 160}   
    \definecolor{g2color}{RGB}{240, 180, 130}   
    \definecolor{g3color}{RGB}{220, 150, 100}   
    \definecolor{g4color}{RGB}{200, 120, 70}    
    
    \definecolor{c21color}{RGB}{40, 40, 40}     
    \definecolor{c31color}{RGB}{100, 100, 100}  
    \definecolor{c32color}{RGB}{60, 60, 60}     
    \definecolor{c41color}{RGB}{220, 220, 220}  
    \definecolor{c42color}{RGB}{190, 190, 190}  
    \definecolor{c43color}{RGB}{160, 160, 160}  

    \definecolor{offdiagbg}{RGB}{205, 205, 205}  
    
    \definecolor{maskcolor}{RGB}{200, 200, 230}  
    
    \fill[offdiagbg] (0, 2*\blocksize) rectangle (\blocksize, 3*\blocksize);
    \fill[offdiagbg] (\blocksize, \blocksize) rectangle (2*\blocksize, 2*\blocksize);
    \fill[offdiagbg] (2*\blocksize, 0) rectangle (3*\blocksize, \blocksize);
    
    \draw[thick] (0,0) rectangle (4*\blocksize, 4*\blocksize);
    
    \foreach \i in {1,2,3} {
        \draw (0, \i*\blocksize) -- (4*\blocksize, \i*\blocksize);
    }
    
    \foreach \j in {1,2,3} {
        \draw (\j*\blocksize, 0) -- (\j*\blocksize, 4*\blocksize);
    }
    
    \fill[maskcolor] (0, 3*\blocksize) -- (0, 4*\blocksize) -- (\blocksize, 3*\blocksize) -- cycle;
    \fill[mu1color, opacity=\vecopacity, rounded corners=\roundingcoeff*\blocksize] 
        (\wallgap, 3*\blocksize + \wallgap) rectangle (\wallgap + \vecwidth, 4*\blocksize - \wallgap - \vecwidth - \gap);
    \draw[thin, rounded corners=\roundingcoeff*\blocksize] 
        (\wallgap, 3*\blocksize + \wallgap) rectangle (\wallgap + \vecwidth, 4*\blocksize - \wallgap - \vecwidth - \gap);
    \fill[g1color, opacity=\vecopacity, rounded corners=\roundingcoeff*\blocksize] 
        (\wallgap + \vecwidth + \gap, 4*\blocksize - \wallgap - \vecwidth) rectangle (\blocksize - \wallgap, 4*\blocksize - \wallgap);
    \draw[thin, rounded corners=\roundingcoeff*\blocksize] 
        (\wallgap + \vecwidth + \gap, 4*\blocksize - \wallgap - \vecwidth) rectangle (\blocksize - \wallgap, 4*\blocksize - \wallgap);
    \node[font=\footnotesize] at (0.6*\blocksize, 3.4*\blocksize) {$L_1$}; 
    
    \fill[maskcolor] (\blocksize, 2*\blocksize) -- (\blocksize, 3*\blocksize) -- (2*\blocksize, 2*\blocksize) -- cycle;
    \fill[mu2color, opacity=\vecopacity, rounded corners=\roundingcoeff*\blocksize] 
        (\blocksize + \wallgap, 2*\blocksize + \wallgap) rectangle (\blocksize + \wallgap + \vecwidth, 3*\blocksize - \wallgap - \vecwidth - \gap);
    \draw[thin, rounded corners=\roundingcoeff*\blocksize] 
        (\blocksize + \wallgap, 2*\blocksize + \wallgap) rectangle (\blocksize + \wallgap + \vecwidth, 3*\blocksize - \wallgap - \vecwidth - \gap);
    \fill[g2color, opacity=\vecopacity, rounded corners=\roundingcoeff*\blocksize] 
        (\blocksize + \wallgap + \vecwidth + \gap, 3*\blocksize - \wallgap - \vecwidth) rectangle (2*\blocksize - \wallgap, 3*\blocksize - \wallgap);
    \draw[thin, rounded corners=\roundingcoeff*\blocksize] 
        (\blocksize + \wallgap + \vecwidth + \gap, 3*\blocksize - \wallgap - \vecwidth) rectangle (2*\blocksize - \wallgap, 3*\blocksize - \wallgap);
    \node[font=\footnotesize] at (1.6*\blocksize, 2.4*\blocksize) {$L_2$};
    
    \fill[maskcolor] (2*\blocksize, \blocksize) -- (2*\blocksize, 2*\blocksize) -- (3*\blocksize, \blocksize) -- cycle;
    \fill[mu3color, opacity=\vecopacity, rounded corners=\roundingcoeff*\blocksize] 
        (2*\blocksize + \wallgap, \blocksize + \wallgap) rectangle (2*\blocksize + \wallgap + \vecwidth, 2*\blocksize - \wallgap - \vecwidth - \gap);
    \draw[thin, rounded corners=\roundingcoeff*\blocksize] 
        (2*\blocksize + \wallgap, \blocksize + \wallgap) rectangle (2*\blocksize + \wallgap + \vecwidth, 2*\blocksize - \wallgap - \vecwidth - \gap);
    \fill[g3color, opacity=\vecopacity, rounded corners=\roundingcoeff*\blocksize] 
        (2*\blocksize + \wallgap + \vecwidth + \gap, 2*\blocksize - \wallgap - \vecwidth) rectangle (3*\blocksize - \wallgap, 2*\blocksize - \wallgap);
    \draw[thin, rounded corners=\roundingcoeff*\blocksize] 
        (2*\blocksize + \wallgap + \vecwidth + \gap, 2*\blocksize - \wallgap - \vecwidth) rectangle (3*\blocksize - \wallgap, 2*\blocksize - \wallgap);
    \node[font=\footnotesize] at (2.6*\blocksize, 1.4*\blocksize) {$L_3$}; 
    
    \fill[maskcolor] (3*\blocksize, 0) -- (3*\blocksize, \blocksize) -- (4*\blocksize, 0) -- cycle;
    \fill[mu4color, opacity=\vecopacity, rounded corners=\roundingcoeff*\blocksize] 
        (3*\blocksize + \wallgap, \wallgap) rectangle (3*\blocksize + \wallgap + \vecwidth, \blocksize - \wallgap - \vecwidth - \gap);
    \draw[thin, rounded corners=\roundingcoeff*\blocksize] 
        (3*\blocksize + \wallgap, \wallgap) rectangle (3*\blocksize + \wallgap + \vecwidth, \blocksize - \wallgap - \vecwidth - \gap);
    \fill[g4color, opacity=\vecopacity, rounded corners=\roundingcoeff*\blocksize] 
        (3*\blocksize + \wallgap + \vecwidth + \gap, \blocksize - \wallgap - \vecwidth) rectangle (4*\blocksize - \wallgap, \blocksize - \wallgap);
    \draw[thin, rounded corners=\roundingcoeff*\blocksize] 
        (3*\blocksize + \wallgap + \vecwidth + \gap, \blocksize - \wallgap - \vecwidth) rectangle (4*\blocksize - \wallgap, \blocksize - \wallgap);
    \node[font=\footnotesize] at (3.6*\blocksize, 0.4*\blocksize) {$L_4$};
    
    \newcommand{\drawRankOne}[9]{ 
        \pgfmathsetmacro{\xleft}{(#1-1)*\blocksize}
        \pgfmathsetmacro{\xright}{#1*\blocksize}
        \pgfmathsetmacro{\ytop}{(5-#2)*\blocksize}
        \pgfmathsetmacro{\ybottom}{(4-#2)*\blocksize}
        
        \fill[#6, rounded corners=\roundingcoeff*\blocksize] (\xleft + \wallgap, \ytop - \wallgap - \scalarsize) 
            rectangle (\xleft + \wallgap + \scalarsize, \ytop - \wallgap);
        \draw[thin, rounded corners=\roundingcoeff*\blocksize] (\xleft + \wallgap, \ytop - \wallgap - \scalarsize) 
            rectangle (\xleft + \wallgap + \scalarsize, \ytop - \wallgap);
        
        \fill[#8, opacity=\vecopacity, rounded corners=\roundingcoeff*\blocksize] (\xleft + \wallgap + \scalarsize + \gap, \ytop - \wallgap - \vecwidth) 
            rectangle (\xright - \wallgap, \ytop - \wallgap);
        \draw[thin, rounded corners=\roundingcoeff*\blocksize] (\xleft + \wallgap + \scalarsize + \gap, \ytop - \wallgap - \vecwidth) 
            rectangle (\xright - \wallgap, \ytop - \wallgap);
        
        \fill[#7, opacity=\vecopacity, rounded corners=\roundingcoeff*\blocksize] (\xleft + \wallgap, \ybottom + \wallgap) 
            rectangle (\xleft + \wallgap + \vecwidth, \ytop - \wallgap - \scalarsize - \gap);
        \draw[thin, rounded corners=\roundingcoeff*\blocksize] (\xleft + \wallgap, \ybottom + \wallgap) 
            rectangle (\xleft + \wallgap + \vecwidth, \ytop - \wallgap - \scalarsize - \gap);
        
        \node[font=\footnotesize] at  
            (\xleft + 0.5*\blocksize + 0.1*\blocksize, \ybottom + 0.5*\blocksize - 0.1*\blocksize) {#9};
    }
    
    \drawRankOne{1}{2}{$c_{21}$}{$\mu_2$}{$v_1^T$}{c21color}{mu2color}{v1color}{$F_{21}$}
    \drawRankOne{2}{3}{$c_{32}$}{$\mu_3$}{$v_2^T$}{c32color}{mu3color}{v2color}{$F_{32}$}
    \drawRankOne{3}{4}{$c_{43}$}{$\mu_4$}{$v_3^T$}{c43color}{mu4color}{v3color}{$F_{43}$}
    
\end{tikzpicture}
\caption{\small Block decomposition of the transfer operator showing diagonal blocks $\bm{L}_t = \tril(\bm{g}_t \bm{g}_t^{-\top})$ (lower triangular, capturing intra-block dependencies) and off-diagonal blocks $\bm{F}_{ts} = \bm{g}_t \beta_{ts} \bm{r}_s^\top$ (rank-one factorization, mediating inter-block carrier propagation). The scalar $\beta_{ts}$ represents the compound attenuation between blocks.}
\label{fig:btp_matrix}
\end{figure}

\subsubsection{The Block Two-Pass Algorithm}
\label{sec:sliding_window_recurrences:b2p}

The truncation $\bm{T} \approx \bm{I}_b$ transforms the three-stage hierarchical algorithm into an elegant two-pass procedure where the global carrier solve vanishes entirely, replaced by direct propagation of local effective inputs between adjacent blocks. The first pass utilizes matrix multiplications for the local recurrences $\bm{w}_t = \bm{L}_t\bm{u}_t$, performing independent local solves across all blocks simultaneously while computing both the local state evolution and extracting the boundary value that serves as an approximate carrier, which is simply the last component $s_t = v_t = \bm{w}_{t,\ell}$. The second pass reconstructs the full state as $\tilde{\bm{x}}_t = \bm{w}_t + \bm{g}_t v_{t-1}$ by injecting these boundary values into adjacent blocks. This {\sf B2P} algorithm achieves optimal complexity for local parallel computation with constant depth $O(1)$ and work $O(b\ell^2 d + b\ell d) = O(n d)$. Making it linear in sequence length for fixed block size $\ell$.

\begin{algorithm}[h]
    \caption{\textsc{Block Two-Pass Algorithm} ({\sf B2P})}
    \label{alg:block_two_pass}
    \begin{algorithmic}[1]
        \Require Block structure $n=b\ell$, inputs $(\bm{u}_t)_{t=1}^b$, coefficients $(\bm{a}_t)_{t=1}^b$
        \Ensure Approximate state sequence $(\hat{\bm{x}}_t)_{t=1}^b$

        \Statex {\textsc{Pass I: Parallel local solves}}
        \ForAll{$t \in [b]$ \textbf{in parallel}}
            \State Materialize $\bm{L}_t$ and $\bm{g}_t$ 
            \State $\bm{w}_t \gets \bm{L}_t\bm{u}_t$ \Comment{Local solve via GEMM}
            \State $v_t \gets \bm{w}_{t,\ell}$ \Comment{Extract effective input}
        \EndFor
        
        \Statex {\textsc{Pass II: Parallel reconstruction with shift}}
        \State $v_0 \gets 0$
        \ForAll{$t \in [b]$ \textbf{in parallel}}
            \State $\hat{\bm{x}}_t \gets \bm{w}_t + \bm{g}_t v_{t-1}$ \Comment{Inject neighbor contribution}
        \EndFor
    \end{algorithmic}
\end{algorithm}

\subsubsection{Hardware Realization on Modern GPUs}\label{sec:hardware_realization}

The {\sf B2P} algorithm is designed to map directly onto the memory and execution hierarchies of modern accelerators. We tailor its implementation for NVIDIA GPUs by aligning the block size $\ell$ with the dimensions of warp-level matrix multiply-accumulate (\texttt{wmma}) instructions, which target Tensor Cores and operate on small, fixed-size tiles. By setting the block size to $\ell=16$, we can assign the computation for each time block to a single 32-thread warp, maximizing hardware utilization. A cooperative thread array (CTA, or thread block), a group of warps running on a single streaming multiprocessor (SM), can then process a larger sequence segment, communicating intermediate results via fast on-chip shared memory (SMEM). On recent architectures, clusters of CTAs can further co-operate using distributed shared memory (DSMEM) for low-latency exchange across a larger portion of the chip.

\input{btp_two_pass.tex} 

The parametrization of the recurrence \ref{sec:layer}, is directly guided by the underlying hardware. Tensor Cores are optimized for dense matrix-matrix multiplications. A naive implementation that processes each of the $d$ feature channels independently with a matrix-vector product would fail to leverage these cores. To maximize hardware utilization and following previous work \citep{dao2024mamba2,ku2025systems}, we structure the computation into heads. Within each head, the same set of recurrence coefficients is shared across all $d$ feature dimensions. This architectural choice allows us to treat an entire input block $\bm{u}_t \in \mathbb{R}^{\ell \times d}$ as a dense matrix. The local solve $\bm{w}_t = \bm{L}_t \bm{u}_t$ is then computed as a single matrix-matrix product, perfectly matching the execution model of Tensor Cores. This avoids inefficient, sequential processing and fully exploits the available parallelism. We typically set the feature dimension per head to $d=16$, aligning with the $16 \times 16$ tile size of \texttt{wmma} instructions.
\begin{algorithm}[ht]
    \caption{Block Two-Pass Algorithm (GPU Implementation)}
    \label{alg:hardware_b2p}
    \begin{algorithmic}[1]
        \Require Block size $\ell$, inputs $\bm{u} \in \mathbb{R}^{b\ell \times d}$, coefficients $\bm{a} \in \mathbb{R}^{b\ell}$
        \Ensure Approximate states $\tilde{\bm{x}} \in \mathbb{R}^{b\ell \times d}$
        
        \ForAll{blocks $t \in [b]$ \textbf{in parallel} assigned to warp $t$}
            \State \textbf{On chip do:}
            \State Materialize $\bm{L}_t, \bm{g}_t \in \mathbb{R}^{\ell \times \ell}$ from coefficients $\bm{a}_t$ \Comment{In registers/SMEM}
            \State Load input tile $\bm{u}_t \in \mathbb{R}^{\ell \times d}$ from global memory to SMEM
            \State $\bm{w}_t \gets \bm{L}_t \bm{u}_t$ \Comment{Matrix multiplication via \texttt{wmma}}
            \State $v_t \gets \bm{w}_{t,\ell} \in \mathbb{R}^{1 \times d}$ \Comment{Extract carrier from last row of $\bm{w}_t$}
            \State Write $v_t$ to SMEM/DSMEM
            \State Synchronize warps within CTA/cluster
            \If{$t > 1$}
                \State Read $v_{t-1}$ from SMEM/DSMEM
                \State $\tilde{\bm{x}}_t \gets \bm{w}_t + \bm{g}_t v_{t-1}$ \Comment{Apply rank-1 update}
            \Else
                \State $\tilde{\bm{x}}_t \gets \bm{w}_t$
            \EndIf
            \State Write block output $\tilde{\bm{x}}_t$ to global memory
        \EndFor
    \end{algorithmic}
\end{algorithm}

The resulting implementation, detailed in Algorithm~\ref{alg:hardware_b2p}, is actually a single-pass kernel from the perspective of global memory. Each warp materializes its local operators $\bm{L}_t$ and $\bm{g}_t$ on-the-fly from the input coefficients, computes the local solve $\bm{w}_t$, and extracts the carrier $v_t$. The carrier is passed to the next warp via SMEM or DSMEM, which then computes its final state. This pipelined dataflow ensures that inputs are read from global memory only once and outputs are written only once, while all intermediate carrier traffic remains on-chip, minimizing expensive off-chip memory access.

\paragraph{Implementation details.} We implement the warp-level operations using CUTLASS, configuring a 32-thread \texttt{MmaTensorOp} for a $16\times16\times16$ tile shape. The operator acts on $\bm{L}_t$ (fp16/bf16), $\bm{u}_t$ (fp16/bf16), and accumulates into $\bm{w}_t$ using fp32 for precision. The on-chip extent of this approach is significant: a CTA with 32 warps can process $32 \times 16 = 512$ time steps, and a cluster of 16 CTAs on an H100 can process up to $16 \times 512 = 8192$ time steps without any global memory synchronization. For sequences exceeding this length, we segment the computation, checkpointing only the carrier vector between segments.

\section{Layer Design}\label{sec:layer}

We introduce \textbf{Phalanx} layers for language modeling built using the Sliding Window Recurrence (SWR) sequence mixing primitive from Section~\ref{sec:sliding_window_recurrences}. These layers exemplify how the Block Two-Pass algorithm can be instantiated as efficient local sequence mixers that complement global attention layers in hybrid architectures.

\subsection{Parametrization Space and Design Philosophy}

A parametrization of linear recurrences into a neural network layer requires several design choices: how to generate the recurrence coefficients $a_i$, whether to include gating mechanisms, how to share parameters across feature dimensions, and what activation functions to employ to bound the recurrence coefficients. The space of possible parametrizations is vast, \citet{yang2024gated} provides a comprehensive taxonomy of linear recurrence variants, ranging from complex featurizers with polynomial expansions to sophisticated gating schemes.

We deliberately adopt a minimalist approach. Rather than exploring the full parametrization space, we focus on demonstrating that the SWR mixer itself, with its hardware-aligned block structure and local computation, provides a good efficiency-quality trade-off. Our design choices prioritize simplicity and hardware efficiency:
\begin{itemize}
    \item[$i$.] Linear projections for all feature groups.
    \item[$ii$.] Sigmoid activation for projecting the recurrence coefficients $a_i$ and key-like gate $k_i$ to the stable range $[0,1]$.
    \item[$iii$.] Head-based parameter sharing aligned with the $16 \times 16$ tile structure discussed in Section~\ref{sec:hardware_realization}.
\end{itemize}
We maintain independent heads where each head feature evolves its own state with shared coefficients and no head feature mixing occurs during the recurrence computation. This structure (also used in recent works \citep{dao2024mamba2,ku2025systems}) maps directly onto our tensor core model for the SWR mixer, in contrast to architectures like GLA and MultiHyena \citep{massaroli2023laughing, yang2023gated} that employ more complex head interactions leading to a matrix-valued state that requires a different adaptation of our kernel.

\subsection{The Phalanx Layer}

\paragraph{Notation.}

We adopt Einstein summation convention where repeated indices imply summation. Given an input sequence $u \in \mathbb{R}^{n \times D}$ where $D$ is the model dimension, we decompose computations across $h$ heads, each managing $d = D/h$ channels. Throughout, we use roman subscripts $(i, j, t)$ for position/time indices, Greek superscripts $(\alpha, \beta, \mu, \nu)$ for channel indices, and $\eta$ for head indices. Thus $u_i^{\eta\mu}$ denotes channel $\mu$ in head $\eta$ at position $i$, and expressions like $q_i^{\eta\nu} k_j^{\eta\nu}$ indicate summation over the repeated index $\nu$.

\paragraph{Featurization.}

The input is first projected to create the recurrence coefficients and gating features. Following the hardware considerations from Section~\ref{sec:hardware_realization}, we organize computation into $h$ heads, where each head shares the same recurrence coefficients $a_i^{\eta}$ across its $d$ channels. Using Einstein notation (where repeated indices imply summation):
\begin{equation}
  \begin{aligned}
    a_i^{\eta} &= \sigma(\bm{W}^{\eta\beta} u_i^{\beta}) && \text{(recurrence coefficient, sigmoid-bounded)}\\
    q_i^{\eta\mu} &= \bm{Q}^{\eta\mu\beta} u_i^{\beta} && \text{(query-like gate)}\\
    k_i^{\eta\mu} &= \sigma(\bm{K}^{\eta\mu\beta} u_i^{\beta}) && \text{(key-like gate)}\\
    v_i^{\eta\mu} &= \bm{V}^{\eta\mu\beta} u_i^{\beta} && \text{(value)}
  \end{aligned}
\end{equation}
Here $\bm{W} \in \mathbb{R}^{h \times D}$ projects to head-wise recurrence coefficients, while $\bm{Q}, \bm{K}, \bm{V} \in \mathbb{R}^{h \times d \times D}$ are the standard attention-like projections\footnote{The three-dimensional structure of $\bm{Q}, \bm{K}, \bm{V}$ arises from absorbing the reshape operation into the linear projection: rather than first projecting $u_i^{\beta}$ to a flat $D$-dimensional vector and then reshaping to $(h, d)$, we directly parameterize the composed transformation. Since reshaping is itself a linear operation (a permutation of elements), this composition remains linear and can be represented as a single tensor contraction}. The sigmoid activation ensures $a_i^{\eta} \in (0,1)$, guaranteeing stability without additional regularization.

\paragraph{Token and channel mixing.}

We implement a simple SWR mixer with double gating:

\begin{equation}
  \begin{aligned}
    \hat{u}_i^{\eta\mu} &= k_i^{\eta\mu} \cdot v_i^{\eta\mu} && \text{(pre-gate)}\\
    x_i^{\eta\mu} &= \tilde{\bm{L}}_{ij}^{\eta} \hat{u}_j^{\eta\mu}  && \text{(SWR~\eqref{eq:truncated_factorization} via {\sf B2P})}\\
    y_i^{\eta\mu} &= q_i^{\eta\mu} \cdot x_i^{\eta\mu} + v_i^{\eta\mu}  && \text{(post-gate with residual)}
  \end{aligned}
\end{equation}
where $\tilde{\bm{L}}^{\eta}$ is computed using the coefficients $\{a_i^{\eta}\}$. Finally, we apply an output projection to mix information across heads and produce the layer output:
\begin{equation}
  \begin{aligned}
    y_i^{\alpha} &= \bm{O}^{\alpha\eta\mu} y_i^{\eta\mu} && \text{(output projection)}
  \end{aligned}
\end{equation}
with $\bm{O} \in \mathbb{R}^{D \times h \times d}$. Overall, the input-output mapping of the Phalanx layer can be compactly written as:
\begin{equation}
    y_i^{\alpha} = \bm{O}^{\alpha\eta\mu} \left[(\bm{Q}^{\eta\mu\beta} u_i^{\beta})\tilde{\bm{L}}^{\eta}_{ij}(\bm{K}^{\eta\nu\gamma} u_j^{\gamma})(\bm{V}^{\eta\nu\delta} u_j^{\delta}) + \bm{V}^{\eta\mu\beta} u_i^{\beta}\right]
\end{equation}
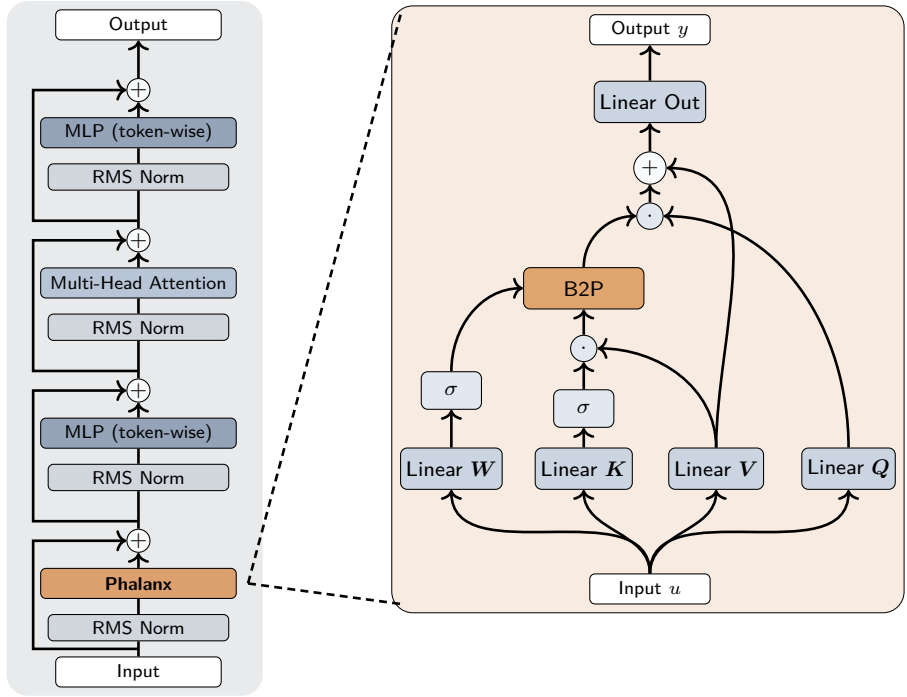
\begin{figure}[bt!]
  \centering
  \definecolor{ink}{HTML}{1F2937}       
\definecolor{grid}{HTML}{E5E7EB}      
\definecolor{soft}{HTML}{F8FAFC}      
\definecolor{grayA}{HTML}{CBD5E1}     
\definecolor{grayB}{HTML}{E2E8F0}     
\definecolor{grayC}{HTML}{F9FAFB}     
\definecolor{grayD}{HTML}{64748B}     
\definecolor{grayE}{HTML}{94A3B8}     
\definecolor{grayF}{HTML}{B8C5D6}     
\definecolor{redgrayA}{HTML}{DCA06E}  
\definecolor{redgrayB}{HTML}{E0A46E}  
\definecolor{amberL}{HTML}{FEF3C7}    
\tikzset{
  font=\footnotesize,
  line cap=round,
  line join=round,
}

\pgfdeclarelayer{bg}
\pgfsetlayers{bg,main}

\begin{tikzpicture}[node distance=2mm]  
  \tikzstyle{io}=[draw=black, fill=white, rounded corners=2pt,
    minimum height=4mm, minimum width=22mm, inner sep=1.5pt, font=\scriptsize]  
  \tikzstyle{norm}=[draw=black, fill=grayD!30, rounded corners=2pt,
    minimum height=3.5mm, minimum width=24mm, inner sep=1.5pt, font=\scriptsize]  
  \tikzstyle{mlp}=[draw=black, fill=grayE, rounded corners=2pt,
    minimum height=4mm, minimum width=26mm, inner sep=2pt, font=\scriptsize]  
  \tikzstyle{phx}=[draw=black, fill=redgrayA, rounded corners=2pt,
    minimum height=4mm, minimum width=26mm, inner sep=2pt, font=\scriptsize]  
  \tikzstyle{mha}=[draw=black, fill=grayF, rounded corners=2pt,
    minimum height=4mm, minimum width=26mm, inner sep=2pt, font=\scriptsize]  
  \tikzstyle{plus}=[circle, draw=black, fill=grayC, minimum size=3mm, inner sep=0pt]  
  \tikzstyle{main}=[->, draw=black, line width=1pt]
  \tikzstyle{resid}=[->, draw=black, line width=1pt]
  \tikzstyle{group}=[draw=ink!18, rounded corners=5pt, inner sep=5pt]  
  \tikzstyle{badge}=[fill=ink!90, text=white, font=\scriptsize\bfseries,
    rounded corners=2pt, inner xsep=3pt, inner ysep=1.5pt]  
  \tikzstyle{title}=[fill=soft, inner xsep=3pt, inner ysep=1.5pt, text=ink!80, font=\scriptsize]

  \begin{scope}[xshift=-4.8cm, yshift=0cm, local bounding box=macro]  
    \node[io] (in) {{\sf Input}};

    \node[norm, above=of in] (r1) {{\sf RMS Norm}};
    \node[phx,  above=of r1] (ph1) {{\sf\bfseries Phalanx}};
    \node[plus, above=of ph1] (add1) {$+$};
    
    \node[norm, above=5mm of add1] (r2) {{\sf RMS Norm}};
    \node[mlp,  above=of r2] (mlp1) {{\sf MLP (token-wise)}};
    \node[plus, above=of mlp1] (add2) {$+$};

    \node[norm, above=5mm of add2] (r3) {{\sf RMS Norm}};
    \node[mha,  above=of r3] (att) {{\sf Multi-Head Attention}};
    \node[plus, above=of att] (add3) {$+$};
    
    \node[norm, above=5mm of add3] (r4) {{\sf RMS Norm}};
    \node[mlp,  above=of r4] (mlp2) {{\sf MLP (token-wise)}};
    \node[plus, above=of mlp2] (add4) {$+$};

    \node[io,   above=5mm of add4] (out) {{\sf Output}};

    \coordinate (tapA) at ($(in.north)+(0,1mm)$);
    \draw[main] (in) -- (tapA) -- (r1) -- (ph1) -- (add1);
    
    \draw[resid] (tapA) -- ++(-14mm,0) |- (add1.west);
    
    \coordinate (tapB) at ($(add1.north)+(0,1mm)$);
    \draw[main] (add1) -- (tapB) -- (r2) -- (mlp1) -- (add2);
    
    \draw[resid] (tapB) -- ++(-14mm,0) |- (add2.west);
    
    \coordinate (tapC) at ($(add2.north)+(0,1mm)$);
    \draw[main] (add2) -- (tapC) -- (r3) -- (att) -- (add3);
    
    \draw[resid] (tapC) -- ++(-14mm,0) |- (add3.west);
    
    \coordinate (tapD) at ($(add3.north)+(0,1mm)$);
    \draw[main] (add3) -- (tapD) -- (r4) -- (mlp2) -- (add4);
    
    \draw[resid] (tapD) -- ++(-14mm,0) |- (add4.west);

    \draw[main] (add4) -- (out);
  \end{scope}

  \begin{scope}[shift={($(macro.east)+(5.5cm,-3.2cm)$)}, scale=0.8, local bounding box=phxg]
    \tikzstyle{lin}=[rounded corners=2.5pt, draw=black, fill=grayA,
      minimum height=5.5mm, minimum width=10mm, align=center, inner sep=2pt]  
    \tikzstyle{gate}=[circle, draw=black, fill=grayB, minimum size=3.5mm, inner sep=0pt]  
    \tikzstyle{mix}=[rounded corners=2.5pt, draw=black, fill=redgrayB,
      minimum height=5.5mm, minimum width=16mm, align=center, inner sep=2.5pt, font=\footnotesize]  
    \tikzstyle{sum}=[circle, draw=black, fill=grayC, minimum size=4.3mm, inner sep=0pt]  

    \node[io, minimum width=16mm] (pi) at (0,0) {{\sf Input} $u$};

    \node[lin] (LA) at (-3.3, 2.0) {{\sf Linear $\bm{W}$}};  
    \node[lin] (LB) at (-1.1, 2.0) {{\sf Linear $\bm{K}$}};
    \node[lin] (LV) at ( 1.1, 2.0) {{\sf Linear $\bm{V}$}};
    \node[lin] (LC) at ( 3.3, 2.0) {{\sf Linear $\bm{Q}$}};

    \node[rounded corners=2.5pt, draw=black, fill=grayB,
          minimum height=5mm, minimum width=8mm, inner sep=1.5pt] (sig) at (-3.3, 3.3) {$\sigma$};  

    \node[rounded corners=2.5pt, draw=black, fill=grayB,
          minimum height=5mm, minimum width=8mm, inner sep=1.5pt] (sigK) at (-1.1, 3.0) {$\sigma$};  

    \node[gate] (ig) at (-1.1, 4.0) {$\cdot$};  

    \node[mix] (mx) at (-1.1, 5.0) {{\sf B2P}};  

    \node[gate] (og) at (0, 6.2) {$\cdot$};  

    \node[sum] (rs) at (0, 7.0) {$+$};  
    \node[lin, minimum width=14mm] (LO) at (0, 8.1) {{\sf Linear Out}};  
    \node[io, minimum width=16mm] (po) at (0, 9.3) {{\sf Output} $y$};  

    \draw[main] (pi.north) to[out=90,in=-90] (LA.south);
    \draw[main] (pi.north) to[out=90,in=-90] (LB.south);
    \draw[main] (pi.north) to[out=90,in=-90] (LV.south);
    \draw[main] (pi.north) to[out=90,in=-90] (LC.south);

    \draw[main] (LA.north) -- (sig.south);
    \draw[main] (sig.north) to[out=90,in=180] (mx.west);

    \draw[main] (LB.north) -- (sigK.south);
    \draw[main] (sigK.north) -- (ig.south);
    \draw[main] (LV.north) to[out=90,in=0] (ig.east);

    \draw[main] (ig.north) -- (mx.south);

    \draw[main] (mx.north) to[out=90,in=180] (og.west);
    \draw[main] (LC.north) to[out=90,in=0] (og.east);

    \draw[main] (og.north) -- (rs.south);
    \draw[resid, draw=black] (LV.north) to[out=90,in=0] (rs.east);  

    \draw[main] (rs.north) -- (LO.south);
    \draw[main] (LO.north) -- (po.south);

  \end{scope}

  \coordinate (zoomstart) at ($(ph1.east)+(1.5mm,0)$);
  \draw[densely dashed, draw=black, line width=1pt]
    (zoomstart) -- (phxg.north west);
  \draw[densely dashed, draw=black, line width=1pt]
    (zoomstart) -- (phxg.south west);

  \begin{pgfonlayer}{bg}
    \node[fill=ink!10, rounded corners=8pt, inner xsep=10pt,  
          fit=(macro)] {};
    \node[fill=redgrayA!20, draw=black, line width=0.3pt, rounded corners=8pt,  
          fit=(phxg)] {};
  \end{pgfonlayer}

\end{tikzpicture}
  \caption{\small Hybrid architecture with Phalanx–MHA blocks (left) and Phalanx micro-architecture (right). The {\sf B2P} mixer implements the SWR with pre-gate $k, v$ and post-gate $q$ projections.}%
  \label{fig:hybrid_architecture} 
\end{figure}

\paragraph{Gate sharing across heads}

Modern transformer variants use strategies to share parameters and increase efficiency. One common strategy is GQA \citep{ainslie2023gqa} to share key and values across groups of heads. Similar head sharing methods have been explored for recurrence layers \cite{dao2024mamba2}. We apply group sharing separately to K and Q projections, such that within each group, multiple heads share the same gate parameters.

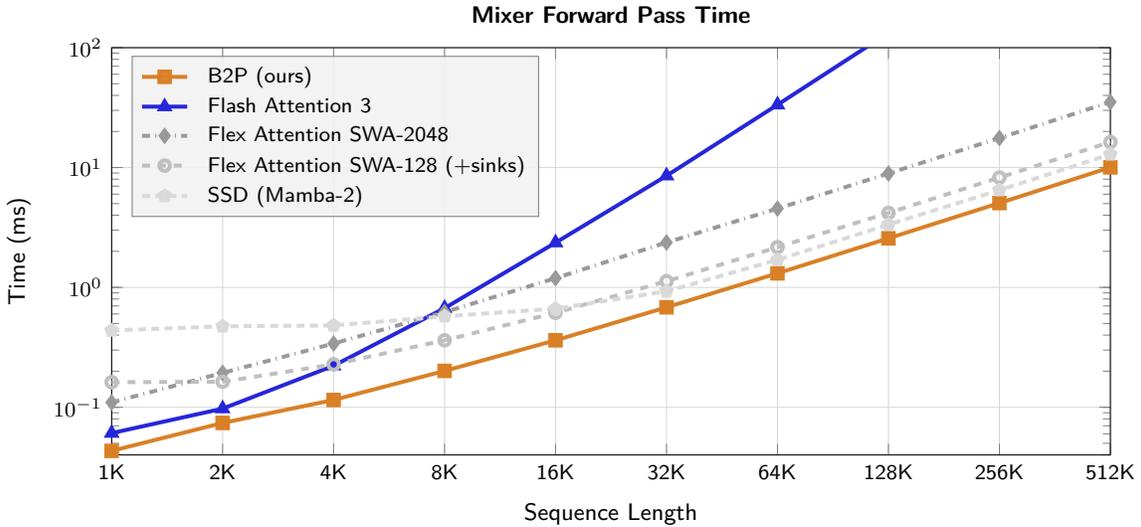
\begin{figure}[tbp]
    \centering
    \begin{tikzpicture}
\begin{semilogyaxis}[
    width=.9\linewidth,
    height=7cm,
    xmin=1024, xmax=524288,
    ymin=0.04, ymax=100,
    xlabel={\small\textsf{Sequence Length}},
    ylabel={\small\textsf{Time (ms)}},
    title={\small\textbf{\textsf{Mixer Forward Pass Time}}},
    xmode=log,
    ymode=log,
    log basis x={2},
    xticklabels={\textsf{1K}, \textsf{2K}, \textsf{4K}, \textsf{8K}, \textsf{16K}, \textsf{32K}, \textsf{64K}, \textsf{128K}, \textsf{256K}, \textsf{512K}},
    xtick={1024, 2048, 4096, 8192, 16384, 32768, 65536, 131072, 262144, 524288},
    tick label style={font=\footnotesize\sffamily},
    grid=major,
    grid style={gray!30},
    legend style={
        at={(0.02,0.98)},
        anchor=north west,
        font=\footnotesize\sffamily,
        cells={anchor=west},
        row sep=0pt,
        column sep=5pt,
        legend columns=1,
        draw=black!50,
        fill=white!95!black,
        fill opacity=0.9,
        text opacity=1
    },
    cycle list={
        {orange!70!gray, mark=square*, mark options={solid}, line width=1.5pt},           
        {blue!70!gray, mark=triangle*, mark options={solid}, line width=1.5pt},        
        {gray!80, mark=diamond*, mark options={solid}, dashdotted, line width=1.5pt},   
        {gray!50, mark=o, mark options={solid}, dashed, line width=1.5pt},             
        {gray!30, mark=pentagon*, mark options={solid}, densely dashed, line width=1.5pt}, 
        {black, mark=star, mark options={solid}, densely dotted, line width=1.5pt},     
    },
]
\addplot coordinates {
    (1024, 0.04323679983615875)
    (2048, 0.07400544002652168)
    (4096, 0.115)
    (8192, 0.201)
    (16384, 0.362)
    (32768, 0.682)
    (65536, 1.31)
    (131072, 2.568)
    (262144, 5.055)
    (524288, 10.033)
};
\addlegendentry{B2P (ours)}
\addplot coordinates {
    (1024, 0.06082624014467001)
    (2048, 0.09735743962228298)
    (4096, 0.22190719991922378)
    (8192, 0.6707308822870255)
    (16384, 2.360352962017059)
    (32768, 8.5383158493042)
    (65536, 33.387361488342286)
    (131072, 132.96720153808593)
    (262144, 527.9828582763672)
    (524288, 2122.0955444335937)
};
\addlegendentry{Flash Attention 3}
\addplot coordinates {
    (1024, 0.10973280146718026)
    (2048, 0.19358879923820496)
    (4096, 0.3401440024375916)
    (8192, 0.621729600429535)
    (16384, 1.193257600069046)
    (32768, 2.369535982608795)
    (65536, 4.544551968574524)
    (131072, 8.92586407661438)
    (262144, 17.62536964416504)
    (524288, 35.21063365936279)
};
\addlegendentry{Flex Attention SWA-2048}
\addplot coordinates {
    (1024, 0.16173920035362244)
    (2048, 0.1634063996374607)
    (4096, 0.22779999896883965)
    (8192, 0.36185120046138763)
    (16384, 0.6135839998722077)
    (32768, 1.1259408116340637)
    (65536, 2.1644047856330872)
    (131072, 4.183911967277527)
    (262144, 8.257516765594483)
    (524288, 16.319067096710206)
};
\addlegendentry{Flex Attention SWA-128 (+sinks)}
\addplot coordinates {
    (1024, 0.437529601752758)
    (2048, 0.47350336134433746)
    (4096, 0.4810092794895172)
    (8192, 0.5738502389192581)
    (16384, 0.6640940815210342)
    (32768, 0.9277798408269882)
    (65536, 1.6976582407951355)
    (131072, 3.3398806405067445)
    (262144, 6.484221453666687)
    (524288, 12.883046035766602)
};
\addlegendentry{SSD (Mamba-2)}
\end{semilogyaxis}

\end{tikzpicture}
    \vspace{-3mm}
    \caption{\small Phalanx has faster forward pass speed across sequence lengths ($D=2048$, $h=128$, $d=16$, $\ell=16$).}%
    \label{fig:performance-comparison}
\end{figure}

\subsection{Integration in Hybrid Architectures}

 The Phalanx layer is designed as a building block for hybrid architectures, where it handles local sequence mixing while global attention layers capture long-range dependencies. This specialization of local recurrence for efficient local operations and global attention for longer range modeling enables scaling to long sequences without sacrificing quality. The next section examines how these architectural choices translate to end-to-end performance in language modeling tasks.

\section{Experiments}\label{sec:experiments}

\paragraph{Experimental Setup.} We compare \our{} + Attention hybrids against state-of-the-art baselines, including Transformers and Sliding Window Attention (SWA) + Transformer hybrids, focusing on perplexity and end-to-end training speed. 

All models are trained at 1.3B parameters for 100B tokens on FineWeb-Edu with identical settings.  We use the Qwen3 tokenizer (vocab size 151,669), with 16 layers total and tied embeddings/output projections similar to Llama 3 \citep{dubey2024llama,yang2025qwen3}. Transformer layers, sliding window attention, use 16 heads and 8 heads for k and v, while Phalanx layers have a head size of 16 and heads for k and v. Training uses a sequence length of 8192, batch size of 1M tokens, a peak learning rate of $3 \times 10^{-4}$ with 2B-token warmup, gradient clipping at 1.0, and cosine decay of the learning rate to 10\% of the peak. We implement training in TorchTitan, extending it with features required for our setup. We use the {\tt FlexAttention} \citep{pytorch2024flexattention} backend for SWA layers, while full-attention layers use the Flash Attention \cite{dao2023flashattention} backend for SDP. We further introduce a \our{}-multihybrid model, which has alternating blocks of [\our{}, SWA-128 with sinks, \our{}, Attention].

To compare throughput, we test on H100 GPUs at different sequence lengths. All models are evaluated on sequence lengths 4096, 8192, 16384, 32768 using 4 GPUs. We use a global batch size of 128 and micro batch sizes of 8, 4, 1. In addition to Transformer and SWA + Transformer hybrids, we further compare throughput against linear recurrence baselines using the Mamba-2 implementation from flash linear attention \citep{dao2024mamba2, yang2024fla}.

\paragraph{Comparing \our{} variants.} Based on comparing \our{} variants on 40B tokens, we progress with training our design for \our{}. To enable \our{} to be decoded in recurrence mode, we can enforce decay to be large by bounding the maximum value of the transfer matrix. To test this we run an ablation enforcing transfer matrix values of 0.8 or less by scaling the sigmoid parametrization. Based on the better performance of \our{} over these alternatives~\ref{app:phalanx_explore}, we further perform two ablations at 10B tokens to test a version without input varying learned decay per head and a version without any input or output gating. We find these have higher loss compared to our layer (\our{}-fixed-decay: 2.764, \our{}-no-gates 2.713, \our{} 2.696 at 10B tokens). Based on these results, we proceed with \our{}. We also explore grouping heads for K and Q and found no deterioration when using 32, 16, or 8 groups for K and Q. We proceed with 8 groups for fair comparisons with other baselines.

\begin{figure}[tb!]
\centering
\input{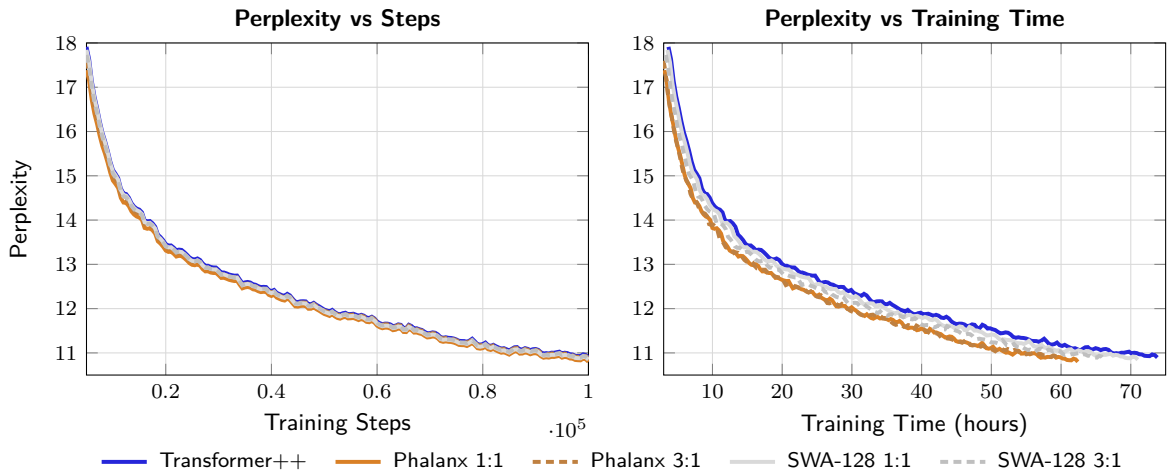}
\caption{\small FineWeb-Edu Loss for Transformer++ and windowed hybrids}
\label{fig:loss-combined} 
\end{figure}

\begin{table}[b!]
\centering
\resizebox{0.8\textwidth}{!}{%
\begin{tabular}{lcccc}
\toprule
\textbf{Model} & \textbf{Hybrid Ratio} & \textbf{Perplexity} & \textbf{$\Delta$ Perplexity} & \textbf{Train. k tok/s/GPU}\\
\midrule
Transformer++ & -- & 10.95 & 0.00 & 49.4k \\
\midrule
SWA-128 + sinks & 1:1 & 10.90 & $-0.05$ & 51.2k \\
SWA-2048          & 1:1 & 10.94 & $-0.01$ & 55.9k \\
SWA-128            & 1:1  & 16.07 & $+5.12$ & 60.4k \\
SWA-128 + sinks  & 3:1 & 10.91 & $-0.04$ & 55.2k \\
\midrule
\our{}          & 1:1 &  10.85 & $-0.10$ & 58.5k \\
\our{}                   & 3:1 & 11.01 & $+0.06$ & 64.4k \\
\our{}-SWA-multihybrid                 & 3:1 & 10.89 & $-0.06$ & 61.4k \\
\bottomrule
\end{tabular}%
}
\caption{Perplexity and training throughput comparison (8192 context length)}
\label{tab:model_comparison2}
\end{table}

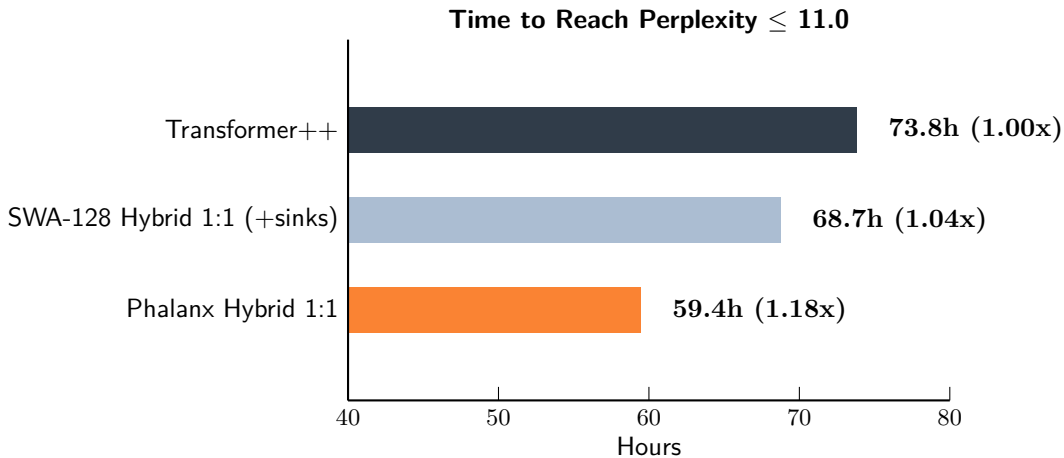
\begin{figure}[t!]
\centering
\definecolor{obsidian}{HTML}{0B1929}
\definecolor{dawn}{HTML}{FA6D0F}
\definecolor{dusk}{HTML}{9CB2CB}

\begin{tikzpicture}[x={(.2,0)}, y={(0,1.2)}]
    \node[font=\sffamily\bfseries\normalsize] at (60,4.2) {Time to Reach Perplexity $\leq$ 11.0};
    
    \foreach \l/\x/\c/\s[count=\y] in {
        Phalanx Hybrid 1:1/59.44/dawn/{59.4h (1.18x)}, 
        SWA-128 Hybrid 1:1 (+sinks)/68.75/dusk/{68.7h (1.04x)}, 
        Transformer++/73.81/obsidian/{73.8h (1.00x)}
    }
    {
        \node[left, align=right, font=\sffamily\normalsize] at (40,\y) {\l};
        \fill[\c, opacity=0.85] (40,\y-.25) rectangle (\x,\y+.25);
        \node[right, font=\normalsize\bfseries] at (\x+1.5, \y) {\s};
    }
    \draw[thick] (40,0) -- (80,0);
    \foreach \x in {40, 50, 60, 70, 80}
    {\draw (\x,.15) -- (\x,0) node[below, font=\small] {\x};}
    \draw[thick] (40,0) -- (40,4.0);
    \node[below, font=\sffamily\normalsize] at (60,-0.3) {Hours};
\end{tikzpicture}
\caption{Wall-clock time to reach target perplexity on FineWeb-Edu.}
\label{fig:time-to-target}
\end{figure}

\paragraph{Training results.} 
As shown in Figure~\ref{fig:loss-combined} and Figure~\ref{fig:time-to-target}, \our{}-SWA-multihybrid achieves the highest overall efficiency, training 24\% faster than Transformer++ and 10\% faster than sliding-window attention with comparable loss. At both 1:1 and 3:1 hybrid ratios, \our{} achieves similar perplexity with 18\% and 60\% speedup against Transformers. Overall, the proposed \our{} hybrids improve the quality–efficiency trade-off, delivering significantly improved training wall-clock time compared to all baselines while matching per-step loss.

\paragraph{Sliding Window Attention.} 

The specific backend of sliding window attention (SWA) has a significant impact on both throughput and loss performance. Consistent with prior work~\citep{agarwal2025gpt, wang2025rattention}, we observe that attention sinks and scaling are critical for training short-context SWA layers. Without scaling, a window length of 128 leads to substantially degraded performance, with a +5.12 increase in perplexity compared to full attention. Incorporating attention sinks and scaling within {\tt FlexAttention} improves the perplexity to -0.05 relative to the Transformer baseline. At larger window sizes, such as SWA-2048, model quality is comparable to full attention while being 13\% faster.

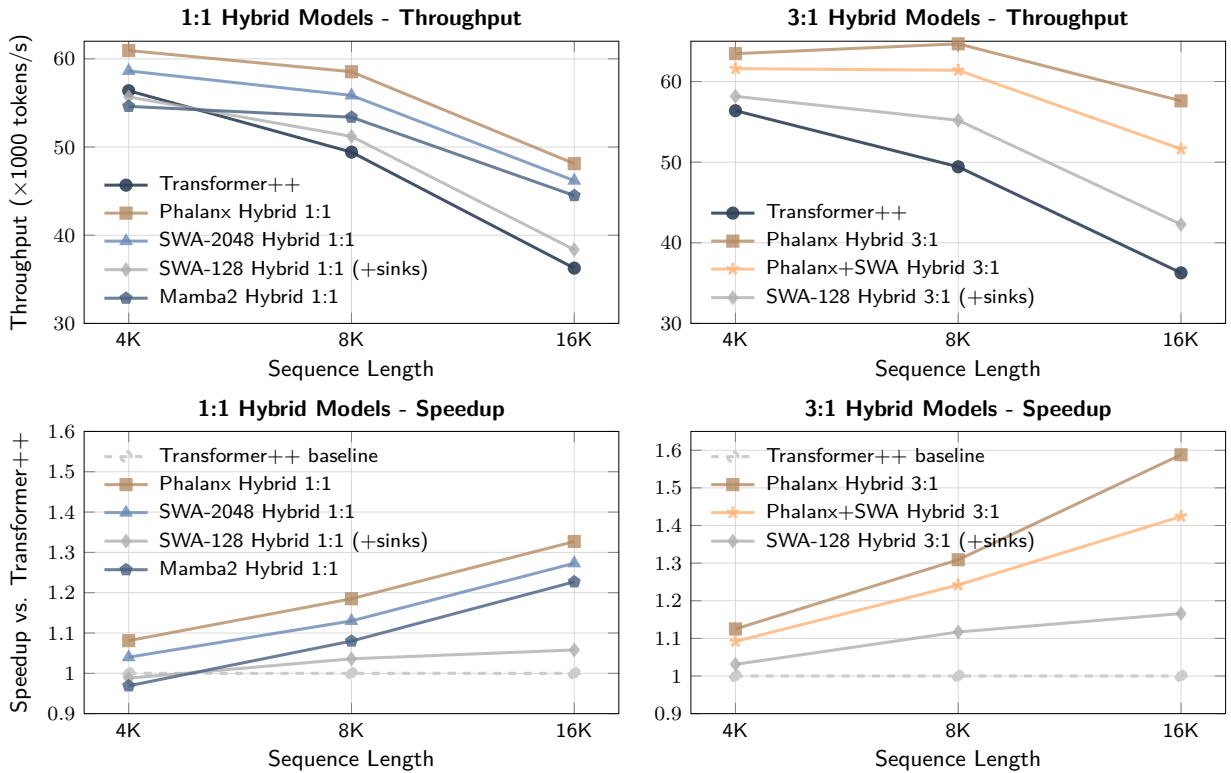
\begin{figure}[t!]
    \centering
    \resizebox{\textwidth}{!}{%
        \begin{tikzpicture}

\definecolor{bluegray1}{RGB}{46, 64, 87}      
\definecolor{bluegray2}{RGB}{71, 95, 127}     
\definecolor{bluegray3}{RGB}{107, 138, 179}   
\definecolor{orangegray1}{RGB}{140,100,60}    
\definecolor{orangegray2}{RGB}{180,140,100}   
\definecolor{orangegray3}{RGB}{255,180,120}   
\definecolor{darkgray176}{RGB}{176,176,176}   
\definecolor{lightgray204}{RGB}{204,204,204}  

\begin{groupplot}[
    group style={
        group name=throughput_plots,
        group size=2 by 2,
        horizontal sep=1.0cm,
        vertical sep=1.5cm,
    },
    width=9cm,
    height=5.5cm,
    xlabel={\small\textsf{Sequence Length}},
    xlabel style={yshift=4pt},
    xmode=log,
    log basis x={2},
    xticklabels={\textsf{4K}, \textsf{8K}, \textsf{16K}},
    xtick={4096, 8192, 16384},
    tick label style={font=\footnotesize\sffamily},
    grid=major,
    grid style={gray!30},
    legend style={
        font=\footnotesize\sffamily,
        cells={anchor=west},
        row sep=0pt,
        column sep=2pt,
        draw=none,
        fill=none,
        fill opacity=0,
        text opacity=1
    },
]

\nextgroupplot[
    title={\small\textbf{\textsf{1:1 Hybrid Models - Throughput}}},
    title style={yshift=-5pt},
    ylabel={\small\textsf{Throughput (×1000 tokens/s)}},
    ylabel style={yshift=-10pt},
    ymin=30, ymax=62,
    legend style={at={(0.02,0.02)}, anchor=south west},
]
\addplot[bluegray1, mark=*, mark size=2pt, line width=1.2pt, opacity=0.9] coordinates {
    (4096, 56.384)
    (8192, 49.423) 
    (16384, 36.268)
};
\addlegendentry{Transformer++}

\addplot[orangegray2, mark=square*, mark size=2pt, line width=1.2pt, opacity=0.8] coordinates {
    (4096, 60.947)
    (8192, 58.538)
    (16384, 48.118)
};
\addlegendentry{Phalanx Hybrid 1:1}

\addplot[bluegray3, mark=triangle*, mark size=2pt, line width=1.2pt, opacity=0.8] coordinates {
    (4096, 58.633)
    (8192, 55.856)
    (16384, 46.187)
};
\addlegendentry{SWA-2048 Hybrid 1:1}

\addplot[darkgray176, mark=diamond*, mark size=2pt, line width=1.2pt, opacity=0.8] coordinates {
    (4096, 55.686)
    (8192, 51.206)
    (16384, 38.361)
};
\addlegendentry{SWA-128 Hybrid 1:1 (+sinks)}

\addplot[bluegray2, mark=pentagon*, mark size=2pt, line width=1.2pt, opacity=0.8] coordinates {
    (4096, 54.616)
    (8192, 53.383)
    (16384, 44.499)
};
\addlegendentry{Mamba2 Hybrid 1:1}

\nextgroupplot[
    title={\small\textbf{\textsf{3:1 Hybrid Models - Throughput}}},
    title style={yshift=-5pt},
    ymin=30, ymax=65,
    legend style={at={(0.02,0.02)}, anchor=south west},
]
\addplot[bluegray1, mark=*, mark size=2pt, line width=1.2pt, opacity=0.9] coordinates {
    (4096, 56.384)
    (8192, 49.423)
    (16384, 36.268)
};
\addlegendentry{Transformer++}

\addplot[orangegray2, mark=square*, mark size=2pt, line width=1.2pt, opacity=0.8] coordinates {
    (4096, 63.456)
    (8192, 64.674)
    (16384, 57.600)
};
\addlegendentry{Phalanx Hybrid 3:1}

\addplot[orangegray3, mark=star, mark size=2.5pt, line width=1.2pt, opacity=0.8] coordinates {
    (4096, 61.601)
    (8192, 61.396)
    (16384, 51.633)
};
\addlegendentry{Phalanx+SWA Hybrid 3:1}

\addplot[darkgray176, mark=diamond*, mark size=2pt, line width=1.2pt, opacity=0.8] coordinates {
    (4096, 58.160)
    (8192, 55.191)
    (16384, 42.281)
};
\addlegendentry{SWA-128 Hybrid 3:1 (+sinks)}

\nextgroupplot[
    title={\small\textbf{\textsf{1:1 Hybrid Models - Speedup}}},
    title style={yshift=-5pt},
    ylabel={\small\textsf{Speedup vs. Transformer++}},
    ylabel style={yshift=-10pt},
    ymin=0.9, ymax=1.6,
    ytick={0.9, 1.0, 1.1, 1.2, 1.3, 1.4, 1.5, 1.6},
    legend style={at={(0.02,0.98)}, anchor=north west},
]
\draw[dashed, lightgray204, line width=0.8pt] (axis cs:4096,1) -- (axis cs:16384,1);

\addplot[lightgray204, mark=*, mark size=2pt, line width=1.2pt, dashed] coordinates {
    (4096, 1.0)
    (8192, 1.0)
    (16384, 1.0)
};
\addlegendentry{Transformer++ baseline}

\addplot[orangegray2, mark=square*, mark size=2pt, line width=1.2pt, opacity=0.8] coordinates {
    (4096, 1.081)  
    (8192, 1.185)  
    (16384, 1.327) 
};
\addlegendentry{Phalanx Hybrid 1:1}

\addplot[bluegray3, mark=triangle*, mark size=2pt, line width=1.2pt, opacity=0.8] coordinates {
    (4096, 1.040)  
    (8192, 1.130)  
    (16384, 1.273) 
};
\addlegendentry{SWA-2048 Hybrid 1:1}

\addplot[darkgray176, mark=diamond*, mark size=2pt, line width=1.2pt, opacity=0.8] coordinates {
    (4096, 0.988)  
    (8192, 1.036)  
    (16384, 1.058) 
};
\addlegendentry{SWA-128 Hybrid 1:1 (+sinks)}

\addplot[bluegray2, mark=pentagon*, mark size=2pt, line width=1.2pt, opacity=0.8] coordinates {
    (4096, 0.969)  
    (8192, 1.080)  
    (16384, 1.227) 
};
\addlegendentry{Mamba2 Hybrid 1:1}

\nextgroupplot[
    title={\small\textbf{\textsf{3:1 Hybrid Models - Speedup}}},
    title style={yshift=-5pt},
    ymin=0.9, ymax=1.65,
    ytick={0.9, 1.0, 1.1, 1.2, 1.3, 1.4, 1.5, 1.6},
    legend style={at={(0.02,0.98)}, anchor=north west},
]
\draw[dashed, lightgray204, line width=0.8pt] (axis cs:4096,1) -- (axis cs:16384,1);

\addplot[lightgray204, mark=*, mark size=2pt, line width=1.2pt, dashed] coordinates {
    (4096, 1.0)
    (8192, 1.0)
    (16384, 1.0)
};
\addlegendentry{Transformer++ baseline}

\addplot[orangegray2, mark=square*, mark size=2pt, line width=1.2pt, opacity=0.8] coordinates {
    (4096, 1.125)  
    (8192, 1.309)  
    (16384, 1.588) 
};
\addlegendentry{Phalanx Hybrid 3:1}

\addplot[orangegray3, mark=star, mark size=2.5pt, line width=1.2pt, opacity=0.8] coordinates {
    (4096, 1.092)  
    (8192, 1.242)  
    (16384, 1.424) 
};
\addlegendentry{Phalanx+SWA Hybrid 3:1}

\addplot[darkgray176, mark=diamond*, mark size=2pt, line width=1.2pt, opacity=0.8] coordinates {
    (4096, 1.031)  
    (8192, 1.117)  
    (16384, 1.166) 
};
\addlegendentry{SWA-128 Hybrid 3:1 (+sinks)}

\end{groupplot}
\end{tikzpicture}
    }
    \caption{Training throughput and speedups across sequence lengths for hybrid models (1:1, 3:1).}
    \label{fig:h100_training_speed}
\end{figure}

\section{Conclusion}\label{sec:conclusion}

We propose a framework for studying linear recurrences as matrices enabling us to connect them to modern hardware level. Based on these insights we introduce Sliding Window Recurrences to provide local, efficient sequence mixer operators. Bounded, block-structured recurrences are not intended to recover arbitrarily long-range dependencies in isolation; instead, they are designed to minimize long-range data movement across the GPU memory hierarchy while providing high-throughput token mixing. This enables hybrid models where attention handles global context and local context is handled by efficient mixers. Consequently, the key evaluation is how a layer contributes to end-to-end throughput, accuracy, and scaling when composed with complementary global modules like attention. We apply this paradigm to develop \our{} hybrid models and demonstrate their efficiency and performance.

\bibliography{main}

\begin{thebibliography}{40}
\providecommand{\natexlab}[1]{#1}
\providecommand{\url}[1]{\texttt{#1}}
\expandafter\ifx\csname urlstyle\endcsname\relax
  \providecommand{\doi}[1]{doi: #1}\else
  \providecommand{\doi}{doi: \begingroup \urlstyle{rm}\Url}\fi

\bibitem[Agarwal et~al.(2025)Agarwal, Ahmad, Ai, Altman, Applebaum, Arbus, Arora, Bai, Baker, Bao, et~al.]{agarwal2025gpt}
Sandhini Agarwal, Lama Ahmad, Jason Ai, Sam Altman, Andy Applebaum, Edwin Arbus, Rahul~K Arora, Yu~Bai, Bowen Baker, Haiming Bao, et~al.
\newblock gpt-oss-120b \& gpt-oss-20b model card.
\newblock \emph{arXiv preprint arXiv:2508.10925}, 2025.

\bibitem[Ainslie et~al.(2023)Ainslie, Lee-Thorp, De~Jong, Zemlyanskiy, Lebr{\'o}n, and Sanghai]{ainslie2023gqa}
Joshua Ainslie, James Lee-Thorp, Michiel De~Jong, Yury Zemlyanskiy, Federico Lebr{\'o}n, and Sumit Sanghai.
\newblock Gqa: Training generalized multi-query transformer models from multi-head checkpoints.
\newblock \emph{arXiv preprint arXiv:2305.13245}, 2023.

\bibitem[Arora et~al.(2024)Arora, Eyuboglu, Zhang, Timalsina, Alberti, Zinsley, Zou, Rudra, and R{\'e}]{arora2024simple}
Simran Arora, Sabri Eyuboglu, Michael Zhang, Aman Timalsina, Silas Alberti, Dylan Zinsley, James Zou, Atri Rudra, and Christopher R{\'e}.
\newblock Simple linear attention language models balance the recall-throughput tradeoff.
\newblock \emph{arXiv preprint arXiv:2402.18668}, 2024.

\bibitem[Beltagy et~al.(2020)Beltagy, Peters, and Cohan]{beltagy2020longformer}
Iz~Beltagy, Matthew~E Peters, and Arman Cohan.
\newblock Longformer: The long-document transformer.
\newblock \emph{arXiv preprint arXiv:2004.05150}, 2020.

\bibitem[Blelloch(1990)]{blelloch1990prefix}
Guy~E Blelloch.
\newblock Prefix sums and their applications.
\newblock 1990.

\bibitem[Brent \& Kung(1982)Brent and Kung]{brent1982regular}
Richard~P. Brent and H.~T. Kung.
\newblock A regular layout for parallel adders.
\newblock \emph{IEEE Transactions on Computers}, 100\penalty0 (3):\penalty0 260--264, 1982.

\bibitem[Brown et~al.(2020)Brown, Mann, Ryder, Subbiah, Kaplan, Dhariwal, Neelakantan, Shyam, Sastry, Askell, et~al.]{brown2020language}
Tom Brown, Benjamin Mann, Nick Ryder, Melanie Subbiah, Jared~D Kaplan, Prafulla Dhariwal, Arvind Neelakantan, Pranav Shyam, Girish Sastry, Amanda Askell, et~al.
\newblock Language models are few-shot learners.
\newblock \emph{Advances in neural information processing systems}, 33:\penalty0 1877--1901, 2020.

\bibitem[Chandrasegaran et~al.(2025)Chandrasegaran, Poli, Fu, Kim, Hadzic, Li, Gupta, Massaroli, Mirhoseini, Niebles, et~al.]{chandrasegaran2025exploring}
Keshigeyan Chandrasegaran, Michael Poli, Daniel~Y Fu, Dongjun Kim, Lea~M Hadzic, Manling Li, Agrim Gupta, Stefano Massaroli, Azalia Mirhoseini, Juan~Carlos Niebles, et~al.
\newblock Exploring diffusion transformer designs via grafting.
\newblock \emph{arXiv preprint arXiv:2506.05340}, 2025.

\bibitem[Chandrasekaran et~al.(2005)Chandrasekaran, Dewilde, Gu, Pals, Sun, van~der Veen, and White]{chandrasekaran2005some}
Shiv Chandrasekaran, Patrick Dewilde, Ming Gu, T~Pals, Xiaorui Sun, Alle-Jan van~der Veen, and Daniel White.
\newblock Some fast algorithms for sequentially semiseparable representations.
\newblock \emph{SIAM Journal on Matrix Analysis and Applications}, 27\penalty0 (2):\penalty0 341--364, 2005.

\bibitem[Dao(2023)]{dao2023flashattention}
Tri Dao.
\newblock Flashattention-2: Faster attention with better parallelism and work partitioning.
\newblock \emph{arXiv preprint arXiv:2307.08691}, 2023.

\bibitem[Dao(2024)]{dao2024mamba2}
Tri Dao.
\newblock State space duality (mamba-2) part iii -- the algorithm, 2024.
\newblock URL \url{https://tridao.me/blog/2024/mamba2-part3-algorithm/}.

\bibitem[Dao \& Gu(2024)Dao and Gu]{dao2024transformers}
Tri Dao and Albert Gu.
\newblock Transformers are ssms: Generalized models and efficient algorithms through structured state space duality.
\newblock \emph{arXiv preprint arXiv:2405.21060}, 2024.

\bibitem[Dewilde \& van~der Veen(2014)Dewilde and van~der Veen]{dewilde2014semi}
P~Dewilde and AJ~van~der Veen.
\newblock Semi-and quasi-separable systems.
\newblock In \emph{Operator Theory}, pp.\  901--930. Springer, 2014.

\bibitem[Dewilde \& Van~der Veen(1998)Dewilde and Van~der Veen]{dewilde1998time}
Patrick Dewilde and Alle-Jan Van~der Veen.
\newblock \emph{Time-varying systems and computations}.
\newblock Springer Science \& Business Media, 1998.

\bibitem[Dewilde et~al.(2025)Dewilde, Diepold, and Veen]{dewilde2025time}
Patrick Dewilde, Klaus Diepold, and Alle-Jan van~der Veen.
\newblock Time-variant and quasi-separable systems: matrix theory, recursions and computations.
\newblock 2025.

\bibitem[Dubey et~al.(2024)Dubey, Jauhri, Pandey, Kadian, Al-Dahle, Letman, Mathur, Schelten, Yang, Fan, et~al.]{dubey2024llama}
Abhimanyu Dubey, Abhinav Jauhri, Abhinav Pandey, Abhishek Kadian, Ahmad Al-Dahle, Aiesha Letman, Akhil Mathur, Alan Schelten, Amy Yang, Angela Fan, et~al.
\newblock The llama 3 herd of models.
\newblock \emph{arXiv e-prints}, pp.\  arXiv--2407, 2024.

\bibitem[Gohberg et~al.(1992)Gohberg, Kaashoek, and Lerer]{gohberg1992minimality}
I~Gohberg, MA~Kaashoek, and L~Lerer.
\newblock Minimality and realization of discrete time-varying systems.
\newblock In \emph{Time-variant systems and interpolation}, pp.\  261--296. Springer, 1992.

\bibitem[Gu \& Dao(2023)Gu and Dao]{gu2023mamba}
Albert Gu and Tri Dao.
\newblock Mamba: Linear-time sequence modeling with selective state spaces.
\newblock \emph{arXiv preprint arXiv:2312.00752}, 2023.

\bibitem[Harris et~al.(2007)Harris, Sengupta, and Owens]{harris2007parallel}
Mark Harris, Shubhabrata Sengupta, and John~D. Owens.
\newblock Parallel prefix sum (scan) with {CUDA}.
\newblock In Hubert Nguyen (ed.), \emph{GPU Gems 3}, chapter~39, pp.\  851--876. Addison-Wesley Professional, 2007.

\bibitem[Keyes et~al.(2020)Keyes, Ltaief, and Turkiyyah]{keyes2020hierarchical}
David~E Keyes, Hatem Ltaief, and George Turkiyyah.
\newblock Hierarchical algorithms on hierarchical architectures.
\newblock \emph{Philosophical Transactions of the Royal Society A}, 378\penalty0 (2166):\penalty0 20190055, 2020.

\bibitem[Kogge \& Stone(2009)Kogge and Stone]{kogge2009parallel}
Peter~M. Kogge and Harold~S. Stone.
\newblock A parallel algorithm for the efficient solution of a general class of recurrence equations.
\newblock \emph{IEEE Transactions on Computers}, 100\penalty0 (8):\penalty0 786--793, 2009.

\bibitem[Ku et~al.(2025)Ku, Nguyen, Romero, Brixi, Yang, Vorontsov, Taghibakhshi, Lu, Burke, Brockman, et~al.]{ku2025systems}
Jerome Ku, Eric Nguyen, David~W Romero, Garyk Brixi, Brandon Yang, Anton Vorontsov, Ali Taghibakhshi, Amy~X Lu, Dave~P Burke, Greg Brockman, et~al.
\newblock Systems and algorithms for convolutional multi-hybrid language models at scale.
\newblock \emph{arXiv preprint arXiv:2503.01868}, 2025.

\bibitem[Martin \& Cundy(2017)Martin and Cundy]{martin2017parallelizing}
Eric Martin and Chris Cundy.
\newblock Parallelizing linear recurrent neural nets over sequence length.
\newblock \emph{arXiv preprint arXiv:1709.04057}, 2017.

\bibitem[Massaroli et~al.(2023)Massaroli, Poli, Fu, Kumbong, Parnichkun, Romero, Timalsina, McIntyre, Chen, Rudra, et~al.]{massaroli2023laughing}
Stefano Massaroli, Michael Poli, Dan Fu, Hermann Kumbong, Rom Parnichkun, David Romero, Aman Timalsina, Quinn McIntyre, Beidi Chen, Atri Rudra, et~al.
\newblock Laughing hyena distillery: Extracting compact recurrences from convolutions.
\newblock \emph{Advances in Neural Information Processing Systems}, 36:\penalty0 17072--17116, 2023.

\bibitem[Merrill \& Garland(2016)Merrill and Garland]{merrill2016single}
Duane Merrill and Michael Garland.
\newblock Single-pass parallel prefix scan with decoupled look-back.
\newblock \emph{NVIDIA, Tech. Rep. NVR-2016-002}, 2016.

\bibitem[Poli et~al.(2023)Poli, Massaroli, Nguyen, Fu, Dao, Baccus, Bengio, Ermon, and R{\'e}]{poli2023hyena}
Michael Poli, Stefano Massaroli, Eric Nguyen, Daniel~Y Fu, Tri Dao, Stephen Baccus, Yoshua Bengio, Stefano Ermon, and Christopher R{\'e}.
\newblock Hyena hierarchy: Towards larger convolutional language models.
\newblock In \emph{International Conference on Machine Learning}, pp.\  28043--28078. PMLR, 2023.

\bibitem[Poli et~al.(2024)Poli, Thomas, Nguyen, Ponnusamy, Deiseroth, Kersting, Suzuki, Hie, Ermon, R{\'e}, et~al.]{poli2024mechanistic}
Michael Poli, Armin~W Thomas, Eric Nguyen, Pragaash Ponnusamy, Bj{\"o}rn Deiseroth, Kristian Kersting, Taiji Suzuki, Brian Hie, Stefano Ermon, Christopher R{\'e}, et~al.
\newblock Mechanistic design and scaling of hybrid architectures.
\newblock \emph{arXiv preprint arXiv:2403.17844}, 2024.

\bibitem[{PyTorch Team}(2024)]{pytorch2024flexattention}
{PyTorch Team}.
\newblock Flexattention: The flexibility of pytorch with the performance of flashattention.
\newblock \url{https://pytorch.org/blog/flexattention/}, November 2024.
\newblock PyTorch Blog.

\bibitem[Romero et~al.(2021)Romero, Kuzina, Bekkers, Tomczak, and Hoogendoorn]{romero2021ckconv}
David~W Romero, Anna Kuzina, Erik~J Bekkers, Jakub~M Tomczak, and Mark Hoogendoorn.
\newblock Ckconv: Continuous kernel convolution for sequential data.
\newblock \emph{arXiv preprint arXiv:2102.02611}, 2021.

\bibitem[Smith et~al.(2022)Smith, Warrington, and Linderman]{smith2022simplified}
Jimmy~TH Smith, Andrew Warrington, and Scott~W Linderman.
\newblock Simplified state space layers for sequence modeling.
\newblock \emph{arXiv preprint arXiv:2208.04933}, 2022.

\bibitem[Thomas et~al.(2024)Thomas, Parnichkun, Amini, Massaroli, and Poli]{thomas2024star}
Armin~W Thomas, Rom Parnichkun, Alexander Amini, Stefano Massaroli, and Michael Poli.
\newblock Star: Synthesis of tailored architectures.
\newblock \emph{arXiv preprint arXiv:2411.17800}, 2024.

\bibitem[Vandebril et~al.(2008)Vandebril, Van~Barel, and Mastronardi]{vandebril2008matrix}
Raf Vandebril, Marc Van~Barel, and Nicola Mastronardi.
\newblock \emph{Matrix computations and semiseparable matrices: linear systems}, volume~1.
\newblock JHU Press, 2008.

\bibitem[Wang et~al.(2025{\natexlab{a}})Wang, Lan, Wang, and Pang]{wang2025rattention}
Bailin Wang, Chang Lan, Chong Wang, and Ruoming Pang.
\newblock Rattention: Towards the minimal sliding window size in local-global attention models.
\newblock \emph{arXiv preprint arXiv:2506.15545}, 2025{\natexlab{a}}.

\bibitem[Wang et~al.(2025{\natexlab{b}})Wang, Zhu, Abreu, Shan, Kergan, Pan, Chou, Li, Zhang, Huang, et~al.]{wang2025systematic}
Dustin Wang, Rui-Jie Zhu, Steven Abreu, Yong Shan, Taylor Kergan, Yuqi Pan, Yuhong Chou, Zheng Li, Ge~Zhang, Wenhao Huang, et~al.
\newblock A systematic analysis of hybrid linear attention.
\newblock \emph{arXiv preprint arXiv:2507.06457}, 2025{\natexlab{b}}.

\bibitem[Yang et~al.(2025)Yang, Li, Yang, Zhang, Hui, Zheng, Yu, Gao, Huang, Lv, et~al.]{yang2025qwen3}
An~Yang, Anfeng Li, Baosong Yang, Beichen Zhang, Binyuan Hui, Bo~Zheng, Bowen Yu, Chang Gao, Chengen Huang, Chenxu Lv, et~al.
\newblock Qwen3 technical report.
\newblock \emph{arXiv preprint arXiv:2505.09388}, 2025.

\bibitem[Yang \& Zhang(2024)Yang and Zhang]{yang2024fla}
Songlin Yang and Yu~Zhang.
\newblock Fla: A triton-based library for hardware-efficient implementations of linear attention mechanism.
\newblock January 2024.
\newblock URL \url{https://github.com/fla-org/flash-linear-attention}.

\bibitem[Yang et~al.(2023)Yang, Wang, Shen, Panda, and Kim]{yang2023gated}
Songlin Yang, Bailin Wang, Yikang Shen, Rameswar Panda, and Yoon Kim.
\newblock Gated linear attention transformers with hardware-efficient training.
\newblock \emph{arXiv preprint arXiv:2312.06635}, 2023.

\bibitem[Yang et~al.(2024)Yang, Kautz, and Hatamizadeh]{yang2024gated}
Songlin Yang, Jan Kautz, and Ali Hatamizadeh.
\newblock Gated delta networks: Improving mamba2 with delta rule.
\newblock \emph{arXiv preprint arXiv:2412.06464}, 2024.

\bibitem[Zaheer et~al.(2020)Zaheer, Guruganesh, Dubey, Ainslie, Alberti, Ontanon, Pham, Ravula, Wang, Yang, et~al.]{zaheer2020big}
Manzil Zaheer, Guru Guruganesh, Kumar~Avinava Dubey, Joshua Ainslie, Chris Alberti, Santiago Ontanon, Philip Pham, Anirudh Ravula, Qifan Wang, Li~Yang, et~al.
\newblock Big bird: Transformers for longer sequences.
\newblock \emph{Advances in neural information processing systems}, 33:\penalty0 17283--17297, 2020.

\bibitem[Zhang et~al.(2024)Zhang, Arora, Chalamala, Wu, Spector, Singhal, Ramesh, and R{\'e}]{zhang2024lolcats}
Michael Zhang, Simran Arora, Rahul Chalamala, Alan Wu, Benjamin Spector, Aaryan Singhal, Krithik Ramesh, and Christopher R{\'e}.
\newblock Lolcats: On low-rank linearizing of large language models.
\newblock \emph{arXiv preprint arXiv:2410.10254}, 2024.

\end{thebibliography}

\appendix

\section{Algebraic Properties of Weighted Shifts}\label{app:weighted-shifts}

Let $\mathbb{F}$ be a field, and let $\bm{M} \in \mathbb{F}^{n\times n}$ be a strictly lower triangular matrix, i.e., $\bm{M}_{ij}=0$ whenever $i \le j$. Let $\{e_i\}_{i=1}^n$ denote the standard basis of $\mathbb{F}^n$. We define the standard descending flag of subspaces:
\[
\mathbb{F}^n = U_1 \supset U_2 \supset \cdots \supset U_n \supset U_{n+1}=\{0\},
\]
where $U_k := \mathrm{span}\{e_i\}_{i=k}^n$. Note that $\dim(U_k) = n-k+1$.

\begin{lemma}[Nilpotency of strictly lower triangular matrices]\label{lem:strictly-lower-nilpotent}
If $\bm{M} \in \mathbb{F}^{n\times n}$ is strictly lower triangular, then $\bm{M}^n=\bm{0}$.
\end{lemma}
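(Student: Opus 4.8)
The plan is to exploit the descending flag $U_1 \supset U_2 \supset \cdots \supset U_{n+1} = \{0\}$ just introduced, and to show that $\bm{M}$ shifts this flag one step downward, i.e.\ that $\bm{M} U_k \subseteq U_{k+1}$ for every $k$. Granting this single inclusion, an immediate induction on $k$ gives $\bm{M}^k U_1 \subseteq U_{k+1}$, so that at $k = n$ we obtain $\bm{M}^n \mathbb{F}^n = \bm{M}^n U_1 \subseteq U_{n+1} = \{0\}$. Since $\bm{M}^n$ then annihilates all of $\mathbb{F}^n$, it must be the zero matrix.

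To establish the key inclusion, I would first reduce to basis vectors by linearity: it suffices to check $\bm{M} e_j \in U_{k+1}$ for each generator $e_j$ of $U_k$, that is, for each $j \ge k$. Now $\bm{M} e_j$ is the $j$-th column $\sum_i \bm{M}_{ij} e_i$, and strict lower-triangularity ($\bm{M}_{ij} = 0$ whenever $i \le j$) forces every surviving term to have $i > j$, whence $\bm{M} e_j \in \mathrm{span}\{e_{j+1}, \dots, e_n\} = U_{j+1}$. Since $j \ge k$ implies $U_{j+1} \subseteq U_{k+1}$ by the nestedness of the flag, the inclusion $\bm{M} e_j \in U_{k+1}$ follows, and extending over the spanning set yields $\bm{M} U_k \subseteq U_{k+1}$.

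There is essentially no obstacle here; the only point requiring a moment's care is the bookkeeping of the index shift---confirming that the column-support condition $i > j$ together with $j \ge k$ lands the image inside $U_{k+1}$ rather than merely $U_k$, which is exactly what drives the \emph{strict} descent of the flag. As an alternative route one could instead compute the entries of $\bm{M}^k$ directly, mirroring the sub-diagonal support argument used in the main text for $(\bm{A}\bm{Z})^k$, and observe that $(\bm{M}^k)_{ij}$ can be nonzero only when $i \ge j + k$; taking $k = n$ makes the condition $i \ge j + n \ge n + 1$ unsatisfiable for $i \in \{1,\dots,n\}$, again giving $\bm{M}^n = \bm{0}$. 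I prefer the flag argument, as it is coordinate-free and makes the underlying mechanism---one nilpotency step consumed per dimension---transparent.
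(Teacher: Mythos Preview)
Your proof is correct and follows essentially the same approach as the paper's own proof: both establish the key inclusion $\bm{M}(U_k) \subseteq U_{k+1}$ by examining the action of $\bm{M}$ on the basis vectors $e_j$ with $j \ge k$, using strict lower-triangularity to conclude $\bm{M}e_j \in U_{j+1} \subseteq U_{k+1}$, and then iterate to obtain $\bm{M}^n(U_1) \subseteq U_{n+1} = \{0\}$. The alternative entrywise route you sketch is also sound and is not used in the paper's proof of this lemma.
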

\begin{proof}
We prove that $\bm{M}(U_k) \subseteq U_{k+1}$ for all $k=1,\dots,n$ by examining the action of $\bm{M}$ on the basis vectors of $U_k$. If $e_j\in U_k$, then $j\ge k$. The vector $\bm{M}e_j$ corresponds to the $j$-th column of $\bm{M}$: 
\[
\bm{M}e_j = \sum_{i=1}^n \bm{M}_{ij} e_i.
\]
Since $\bm{M}$ is strictly lower triangular, $\bm{M}_{ij}=0$ if $i \le j$. Thus, the summation simplifies to:
\[
\bm{M}e_j = \sum_{i=j+1}^n \bm{M}_{ij} e_i.
\]
This resulting vector is in the span of $\{e_{j+1}, \dots, e_n\}$, so $\bm{M}e_j \in U_{j+1}$. Since $j \ge k$, we have $j+1 \ge k+1$, which implies $U_{j+1} \subseteq U_{k+1}$. By linearity, $\bm{M}(U_k)\subseteq U_{k+1}$. Applying this iteratively to the entire space $\mathbb{F}^n = U_1$ gives:
\[
\bm{M}^n(U_1) \subseteq \bm{M}^{n-1}(U_2) \subseteq \bm{M}^{n-2}(U_3) \subseteq \cdots \subseteq \bm{M}(U_n) \subseteq U_{n+1}.
\]
Since $U_{n+1}=\{0\}$, we have $\bm{M}^n(\mathbb{F}^n)=\{0\}$. Therefore, $\bm{M}^n$ must be the zero matrix.
\end{proof}

A crucial consequence of a matrix being nilpotent is that its subtraction from the identity matrix yields an invertible matrix.

\begin{corollary}[Invertibility of $\bm{I}-\bm{M}$]\label{cor:I-M-invertible}
If $\bm{M} \in \mathbb{F}^{n\times n}$ is strictly lower triangular, then the matrix $\bm{I}-\bm{M}$ is invertible.
\end{corollary}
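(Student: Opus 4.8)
The plan is to produce an explicit two-sided inverse of $\bm{I}-\bm{M}$ via a truncated Neumann series, relying entirely on the nilpotency just established in Lemma~\ref{lem:strictly-lower-nilpotent}. Since $\bm{M}$ is strictly lower triangular, the lemma gives $\bm{M}^n=\bm{0}$, so the formal geometric series $\sum_{k\ge 0}\bm{M}^k$ terminates after finitely many terms. This finiteness is precisely what lets us avoid any analytic convergence argument and keep the reasoning purely algebraic over an arbitrary field $\mathbb{F}$, mirroring the Neumann expansion in equation~\eqref{eq:2.3}.

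First I would define the candidate inverse $\bm{N} := \sum_{k=0}^{n-1}\bm{M}^k = \bm{I}+\bm{M}+\bm{M}^2+\cdots+\bm{M}^{n-1}$, which is a well-defined matrix because the sum is finite. The core computation is then the telescoping identity
\[
(\bm{I}-\bm{M})\bm{N} = \sum_{k=0}^{n-1}\bm{M}^k - \sum_{k=1}^{n}\bm{M}^k = \bm{I} - \bm{M}^n.
\]
Applying $\bm{M}^n=\bm{0}$ collapses the right-hand side to $\bm{I}$. The same manipulation with the factors in the opposite order, $\bm{N}(\bm{I}-\bm{M})=\bm{I}-\bm{M}^n=\bm{I}$, uses only that $\bm{M}$ commutes with its own powers, which is automatic. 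Having both products equal the identity, I conclude that $\bm{N}$ is a genuine two-sided inverse and hence $\bm{I}-\bm{M}$ is invertible with $(\bm{I}-\bm{M})^{-1}=\bm{N}$.

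There is no substantive obstacle here; the argument is elementary once Lemma~\ref{lem:strictly-lower-nilpotent} is in hand. The only point demanding a moment of care is the index bookkeeping in the telescoping sum, namely confirming that the shifted copy $\sum_{k=1}^{n}\bm{M}^k$ is obtained by multiplying $\bm{N}$ by $\bm{M}$ and that its top term $\bm{M}^n$ is exactly the one annihilated by nilpotency. An alternative and equally short route, should one prefer to avoid series bookkeeping entirely, is to observe that $\det(\bm{I}-\bm{M})=1$ since $\bm{I}-\bm{M}$ is unit lower triangular, immediately yielding invertibility; but the Neumann-series proof is preferable as it also furnishes the explicit inverse, which is the object of interest for the transfer operator in the main text.
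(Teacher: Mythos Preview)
Your proof is correct and follows essentially the same approach as the paper: both define the finite Neumann sum $\sum_{k=0}^{n-1}\bm{M}^k$, verify via telescoping that it is a two-sided inverse using $\bm{M}^n=\bm{0}$ from Lemma~\ref{lem:strictly-lower-nilpotent}, and conclude. Your additional remark about the determinant route is a nice aside but not in the paper's version.
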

\begin{proof}
By Lemma~\ref{lem:strictly-lower-nilpotent}, $\bm{M}$ is nilpotent with $\bm{M}^n=\bm{0}$. We can explicitly construct the inverse using a finite geometric series. Let:
\[
\bm{S} = \bm{I} + \bm{M} + \bm{M}^2 + \cdots + \bm{M}^{n-1}.
\]
We verify that $\bm{S}$ is the inverse of $\bm{I}-\bm{M}$:
\begin{align*}
(\bm{I}-\bm{M})\bm{S} &= (\bm{I}-\bm{M})(\bm{I} + \bm{M} + \cdots + \bm{M}^{n-1}) \\
&= (\bm{I} + \bm{M} + \cdots + \bm{M}^{n-1}) - (\bm{M} + \bm{M}^2 + \cdots + \bm{M}^{n-1} + \bm{M}^n) \\
&= \bm{I} - \bm{M}^n.
\end{align*}
Since $\bm{M}^n=\bm{0}$, we have $(\bm{I}-\bm{M})\bm{S} = \bm{I}$. Similarly, $\bm{S}(\bm{I}-\bm{M})=\bm{I}$. Thus, $\bm{I}-\bm{M}$ is invertible with $({\bm{I}-\bm{M}})^{-1} = \bm{S}$.
\end{proof}

\paragraph{The case of weighted shift matrices $\bm{A}\bm{Z}$.}
A concrete illustration of this theory is provided by the weighted shift matrices with which we construct the transfer operator of scalar linear recurrences. Let $\bm{Z}$ be the down-shift matrix on $\RR^n$ with entries $\bm{Z}_{ij}=\delta_{i,j+1}$ (ones on the first subdiagonal), and let $\bm{A}=\mathrm{diag}(a_1,\dots,a_n)$ be the diagonal matrix of coefficients. The weighted shift $\bm{A}\bm{Z}$ has entries $(\bm{A}\bm{Z})_{ij}=a_i \bm{Z}_{ij}$, resulting in the following structure:
\[
\bm{A}\bm{Z} = \begin{pmatrix}
0 & 0 & \cdots & 0 & 0\\
a_2 & 0 & \cdots & 0 & 0\\
0 & a_3 & \cdots & 0 & 0\\
\vdots & \vdots & \ddots & \vdots & \vdots \\
0 & 0 & \cdots & a_n & 0
\end{pmatrix}.
\]
The matrix $\bm{A}\bm{Z}$ is strictly lower triangular, and by Lemma~\ref{lem:strictly-lower-nilpotent}, $(\bm{A}\bm{Z})^n=\bm{0}$.

The action of $\bm{A}\bm{Z}$ on the standard basis visualizes the movement down the flag. For $j=1,\dots,n-1$, $(\bm{A}\bm{Z}) e_j = a_{j+1} e_{j+1}$ and $(\bm{A}\bm{Z}) e_n=\bbzero_n$.
\[
e_1 \xrightarrow{a_2} e_2 \xrightarrow{a_3} e_3 \to \cdots \to e_{n-1} \xrightarrow{a_n} e_n \to \bbzero_n.
\]

For $k\ge 0$ and $j\in[n]$,
\[
(\bm{A}\bm{Z})^k e_j = a_{j+k:j+1} \cdot e_{j+k},\qquad (\text{where } e_{m}:=\bbzero_n\text{ if }m>n),
\]
using the product notation $a_{i:j} = a_i a_{i-1} \cdots a_j$ from Section~\ref{sec:02}. In particular, $(\bm{A}\bm{Z})^{n-1} e_1 = a_{n:2} \cdot e_n$. The nilpotency index $m$ (the smallest integer with $(\bm{A}\bm{Z})^m=\bm{0}$) satisfies $m\le n$. Equality $m=n$ occurs if and only if $a_2,\dots,a_n$ are all nonzero. If the weights allow for a shorter path to zero (i.e., some $a_{k+1}=0$), the index is smaller. More generally, if the longest contiguous block of nonzero weights among $(a_2,\dots,a_n)$ has length $L$, then $m=L+1$.

\paragraph{Connection to the transfer operator.}
The results above provide the algebraic foundation for the transfer operator $\bm{L} = (\bm{I} - \bm{A}\bm{Z})^{-1}$ in Section~\ref{sec:02}. By Corollary~\ref{cor:I-M-invertible} (with $\bm{M} = \bm{A}\bm{Z}$), the matrix $\bm{I} - \bm{A}\bm{Z}$ is always invertible, ensuring that the linear system~\eqref{eq:2.2} has a unique solution. Moreover, the nilpotency of $\bm{A}\bm{Z}$ guarantees that the Neumann series expansion~\eqref{eq:2.3} terminates after finitely many terms, yielding an exact representation rather than an infinite series. This finite expansion is crucial for both the theoretical analysis and the practical computation of the transfer operator, enabling the explicit characterization of its entries as $\bm{L}_{ij} = a_{i:j+1}$ and facilitating the derivation of efficient parallel algorithms.

\section{Three Proofs of the Binary Factorization of Geometric Series}\label{app:kogge_stone}

The binary factorization of the geometric series is fundamental to parallel algorithms for linear recurrences, particularly the Kogge-Stone algorithm \citep{kogge2009parallel} discussed in the main text. This identity shows how a sum of matrix powers can be factorized into a product of sparse matrices, enabling logarithmic-depth parallel computation. While the identity is algebraic and holds in any ring, its application to the transfer operator $(\bm{I}-\bm{A}\bm{Z})^{-1}$ of linear recurrences reveals deep connections between parallel algorithms and matrix factorizations. We present three proofs that illuminate different aspects of this remarkable identity: the first uses telescoping cancellation, the second employs mathematical induction, and the third exploits the binary expansion of integers.
\begin{proposition}[Binary Factorization of Geometric Series]\label{prop:binary_factorization}
    Let $\bm{M} \in \RR^{n\times n}$ be any matrix and let $n=2^m$ for some integer $m \ge 1$. The following algebraic identity holds:
\begin{equation}\label{eq:binary_factorization}
    \sum_{k=0}^{n-1} \bm{M}^k = \prod_{j=0}^{m-1} (\bm{I} + \bm{M}^{2^j}).
\end{equation}
    If $(\bm{I}-\bm{M})$ is invertible, the sum is the partial sum of the Neumann series, and both sides are equal to $(\bm{I}-\bm{M}^n){(\bm{I}-\bm{M})^{-1}}$. In the special case where $\bm{M}$ is nilpotent with $\bm{M}^n=0$, this simplifies to ${(\bm{I}-\bm{M})}^{-1}$.
\end{proposition}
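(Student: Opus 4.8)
The plan is to establish the core algebraic identity~\eqref{eq:binary_factorization} first, treating it purely as a statement about powers of a single matrix, and only afterward to invoke invertibility or nilpotency for the two closed-form corollaries. The cleanest route is the binary-expansion argument, which is cancellation-free and therefore needs no invertibility hypothesis. I would expand the product $\prod_{j=0}^{m-1}(\bm{I}+\bm{M}^{2^j})$ by the distributive law: each factor contributes either $\bm{I}$ or $\bm{M}^{2^j}$, so a generic term is indexed by a subset $S\subseteq\{0,\dots,m-1\}$ and equals $\prod_{j\in S}\bm{M}^{2^j}=\bm{M}^{\sum_{j\in S}2^j}$. Because all powers of a single matrix $\bm{M}$ commute, the order in which the factors are multiplied is irrelevant, so this collapse to a single power is legitimate despite the left-product convention. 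The map $S\mapsto\sum_{j\in S}2^j$ is a bijection from subsets of $\{0,\dots,m-1\}$ onto the integers $\{0,1,\dots,2^m-1\}$ (existence and uniqueness of binary representations), so the expansion produces each power $\bm{M}^k$ for $0\le k\le n-1$ exactly once, yielding $\sum_{k=0}^{n-1}\bm{M}^k$.

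As an alternative I would record the short induction on $m$: the base case $m=1$ reads $\bm{I}+\bm{M}=\bm{I}+\bm{M}^{2^0}$, and the inductive step multiplies the length-$2^m$ sum by the new factor $(\bm{I}+\bm{M}^{2^m})$, splitting the result into the original block $\sum_{k=0}^{2^m-1}\bm{M}^k$ and the shifted block $\sum_{k=2^m}^{2^{m+1}-1}\bm{M}^k$ to recover the length-$2^{m+1}$ sum. Both arguments are formal and hold in any ring.

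For the invertible case I would left-multiply the partial sum by $(\bm{I}-\bm{M})$ and telescope, obtaining $(\bm{I}-\bm{M})\sum_{k=0}^{n-1}\bm{M}^k=\bm{I}-\bm{M}^n$. Since $\bm{M}$ commutes with $(\bm{I}-\bm{M})^{-1}$, multiplying through by the inverse gives $\sum_{k=0}^{n-1}\bm{M}^k=(\bm{I}-\bm{M}^n)(\bm{I}-\bm{M})^{-1}$, and by~\eqref{eq:binary_factorization} the product side equals the same quantity. The nilpotent specialization is then immediate: $\bm{M}^n=\bm{0}$ forces $\bm{I}-\bm{M}^n=\bm{I}$ (and guarantees invertibility of $\bm{I}-\bm{M}$, e.g.\ by Corollary~\ref{cor:I-M-invertible} in the strictly-triangular case, or directly since the finite sum serves as a two-sided inverse), so both sides collapse to $(\bm{I}-\bm{M})^{-1}$.

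The main point to watch is not computation but logical ordering: the telescoping identity proves the corollary formulas only \emph{after} one already knows $(\bm{I}-\bm{M})$ is invertible, and multiplying the claimed identity through by $(\bm{I}-\bm{M})$ would merely show $(\bm{I}-\bm{M})(\text{LHS}-\text{RHS})=\bm{0}$, which does not yield $\text{LHS}=\text{RHS}$ without a cancellation step. This is precisely why I would prove the core identity by the cancellation-free binary-expansion (or induction) route, reserving telescoping solely for extracting the closed forms in the invertible and nilpotent cases.
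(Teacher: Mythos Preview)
Your proposal is correct and matches the paper closely: your primary binary-expansion argument is exactly the paper's Proof~3, and your inductive alternative is exactly the paper's Proof~2. The paper additionally offers a third, difference-of-squares proof (repeatedly factor $\bm{I}-\bm{M}^{2^m}=(\bm{I}+\bm{M}^{2^{m-1}})(\bm{I}-\bm{M}^{2^{m-1}})$ and compare with the telescoping geometric-series identity), but your two routes already suffice, and your explicit remark about why the cancellation-free argument must come before the telescoping corollaries is a nice clarification the paper leaves implicit.
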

\begin{proof}[Proof 1 (Difference of Squares)]
    This proof relies on expressing the term $\bm{I}-\bm{M}^n$ in two different ways. First, by repeatedly applying the difference of squares formula we can expand $\bm{I}-\bm{M}^n = \bm{I}-\bm{M}^{2^m}$ as a product:
    \begin{align}
        \bm{I} - \bm{M}^{2^m} &= (\bm{I} + \bm{M}^{2^{m-1}})(\bm{I} - \bm{M}^{2^{m-1}}) \\
        &= (\bm{I} + \bm{M}^{2^{m-2}})(\bm{I} + \bm{M}^{2^{m-2}})(\bm{I} - \bm{M}^{2^{m-1}}) \\
        &\;\; \vdots \nonumber \\
        &= (\bm{I}+\bm{M}^{2^{m-1}})\dotsm(\bm{I}+\bm{M}^2)(\bm{I}+\bm{M})(\bm{I}-\bm{M})\\
        &= \prod_{j=0}^{m-1} (\bm{I} + \bm{M}^{2^j})(\bm{I}-\bm{M}).
    \end{align}
    Second, the standard summation formula for a finite geometric series gives the same term:
    \begin{equation}
        \bm{I} - \bm{M}^n =  \sum_{k=0}^{n-1} \bm{M}^k(\bm{I}-\bm{M}).
    \end{equation}
    By equating the two expressions for $\bm{I}-\bm{M}^n$, we have
    \begin{equation}
        \prod_{j=0}^{m-1} (\bm{I} + \bm{M}^{2^j})(\bm{I}-\bm{M}) = \sum_{k=0}^{n-1} \bm{M}^k(\bm{I}-\bm{M}).
    \end{equation}
    The identity~\eqref{eq:binary_factorization} holds in the ring of polynomials in $\bm{M}$. If $(\bm{I}-\bm{M})$ is invertible, we can multiply both sides by ${(\bm{I}-\bm{M})}^{-1}$ to establish the equality, but the underlying identity is purely algebraic and holds regardless.
\end{proof}
\begin{proof}[Proof 2 (Induction)]
    We prove the algebraic identity \eqref{eq:binary_factorization} by induction. For any $t \ge 1$, let $S_t(\bm{M}) = \sum_{k=0}^{2^t-1} \bm{M}^k$ and $P_t(\bm{M}) = \prod_{j=0}^{t-1} (\bm{I}+\bm{M}^{2^j})$. We show that $S_t(\bm{M}) = P_t(\bm{M})$ for all $t \ge 1$.
    \textbf{Base case:} For $t=1$, we have $2^t=2$.
    \begin{align}
        S_1(\bm{M}) &= \bm{I}+\bm{M} \\
        P_1(\bm{M}) &= \bm{I}+\bm{M}^{2^0} = \bm{I}+\bm{M}.
    \end{align}
    The identity holds.
    \textbf{Inductive step:} Assume the identity $S_t(\bm{M}) = P_t(\bm{M})$ holds for some $t \ge 1$. We seek to prove it for $t+1$.
    \begin{align}
        S_{t+1}(\bm{M}) &= \sum_{k=0}^{2^{t+1}-1} \bm{M}^k = \sum_{k=0}^{2^t-1} \bm{M}^k + \sum_{k=2^t}^{2^{t+1}-1} \bm{M}^k \\
        &= S_t(\bm{M}) + \bm{M}^{2^t} \sum_{l=0}^{2^t-1} \bm{M}^l \\
        &= S_t(\bm{M}) + \bm{M}^{2^t} S_t(\bm{M}) = (\bm{I}+\bm{M}^{2^t})S_t(\bm{M}).
    \end{align}
    Using the inductive hypothesis $S_t(\bm{M}) = P_t(\bm{M})$, we get
    \begin{equation}
        S_{t+1}(\bm{M}) = (\bm{I}+\bm{M}^{2^t})P_t(\bm{M}) = (\bm{I}+\bm{M}^{2^t})\prod_{j=0}^{t-1} (\bm{I}+\bm{M}^{2^j}) = \prod_{j=0}^{t} (\bm{I}+\bm{M}^{2^j}) = P_{t+1}(\bm{M}).
    \end{equation}
    This completes the induction. Setting $t=m$ gives the desired result.
\end{proof}
\begin{proof}[Proof 3 (Binary Expansion)]
    We expand the product on the right-hand side of~\eqref{eq:binary_factorization}. Each term in the expansion corresponds to a unique choice of either $\bm{I}$ or $\bm{M}^{2^j}$ for each factor $j \in \{0, \dots, m-1\}$. We can represent such a choice by a binary vector $\mathbf{b} = (b_0, b_1, \ldots, b_{m-1}) \in \{ 0, 1 \}^m$.
    The expansion is a sum over all $2^m$ possible choices of $\mathbf{b}$:
    \begin{equation}
        \prod_{j=0}^{m-1}(\bm{I}+\bm{M}^{2^{j}}) = \sum_{\mathbf{b} \in \{0,1\}^m} \prod_{j=0}^{m-1} (\bm{M}^{2^j})^{b_j} = \sum_{\mathbf{b} \in \{0,1\}^m} \bm{M}^{\sum_{j=0}^{m-1} b_j 2^j}.
    \end{equation}
    The map from a binary vector $\mathbf{b}$ to the integer $k = \sum_{j=0}^{m-1} b_j 2^j$ is a bijection from $\{0,1\}^m$ to the set of integers $\{0, 1, \ldots, 2^m-1\}$. Therefore, summing over all possible binary vectors $\mathbf{b}$ is equivalent to summing over all integer powers of $\bm{M}$ from $k=0$ to $k=2^m-1$:
\begin{equation}
    \prod_{j=0}^{m-1}(\bm{I}+\bm{M}^{2^{j}}) = \sum_{\mathbf{b} \in \{0,1\}^m} \prod_{j=0}^{m-1} (\bm{M}^{2^j})^{b_j} = \sum_{\mathbf{b} \in \{0,1\}^m} \bm{M}^{\sum_{j=0}^{m-1} b_j 2^j}.
    \end{equation}
    The map from a binary vector $\mathbf{b}$ to the integer $k = \sum_{j=0}^{m-1} b_j 2^j$ is a bijection from $\{0,1\}^m$ to the set of integers $\{0, 1, \ldots, 2^m-1\}$. Therefore, summing over all possible binary vectors $\mathbf{b}$ is equivalent to summing over all integer powers of $\bm{M}$ from $k=0$ to $k=2^m-1$:
    \begin{equation}
        \prod_{j=0}^{m-1}(\bm{I}+\bm{M}^{2^{j}}) = \sum_{k=0}^{2^m-1} \bm{M}^k.
    \end{equation}
\end{proof}

\section{Additional \our{} Ablation}\label{app:phalanx_explore}

\begin{table}[h]
\centering
\begin{tabular}{lc}
\hline
\textbf{Ablations} & \textbf{Loss at 40B tokens} \\
\hline
Transformer++ & 2.52 \\
\midrule 
\our{} & 2.51 \\
\our{}-decay & 2.56 \\
\hline
\end{tabular}
\caption{Comparing loss of \our{} with set minimal decay (transfer matrix maximum of 0.8)}
\label{tab:model_comparison}
\end{table}

\end{document}